\documentclass{article}

\usepackage{microtype}
\usepackage{graphicx}
\usepackage{subfigure}
\usepackage{booktabs} 

\usepackage{multirow}

\usepackage{caption}      
\usepackage{subcaption}   

\usepackage[pagebackref=true]{hyperref}

\renewcommand*{\backref}[1]{}
\renewcommand*{\backrefalt}[4]{%
    \ifcase #1 %
        no cited. %
    \or 
        cited on page #2. %
    \else
        cited on pages #2. %
    \fi
}

\usepackage[preprint]{icml2026}

\usepackage{amsmath}
\usepackage{amssymb}
\usepackage{mathtools}
\usepackage{amsthm}

\usepackage{siunitx}

\usepackage[capitalize,noabbrev]{cleveref}

\theoremstyle{plain}
\newtheorem{theorem}{Theorem}[section]
\newtheorem{proposition}[theorem]{Proposition}
\newtheorem{lemma}[theorem]{Lemma}

\theoremstyle{definition}
\newtheorem{definition}[theorem]{Definition}

\theoremstyle{remark}
\newtheorem{remark}[theorem]{Remark}

\usepackage[textsize=tiny]{todonotes}

\usepackage{listings}
\usepackage{xcolor}
\usepackage{wrapfig}

\icmltitlerunning{On the (\emph{Generative}) Linear Sketching Problem}

\begin{document}

\twocolumn[
\icmltitle{On the (\emph{Generative}) Linear Sketching Problem}



\icmlsetsymbol{equal}{*}

\begin{icmlauthorlist}
\icmlauthor{Xinyu Yuan}{yyy}
\icmlauthor{Yan Qiao}{sch}
\icmlauthor{Zonghui Wang}{yyy}
\icmlauthor{Wenzhi Chen}{yyy}
\end{icmlauthorlist}

\icmlaffiliation{yyy}{College of Computer Science and Technology, Zhejiang University, Hangzhou, China}
\icmlaffiliation{sch}{School of Computer Science and Information Engineering, Hefei University of Technology, Hefei, China}

\icmlcorrespondingauthor{Zonghui Wang}{zhwang@zju.edu.cn}
\icmlcorrespondingauthor{Wenzhi Chen}{chenwz@zju.edu.cn}

\icmlkeywords{Machine Learning, ICML}

\vskip 0.3in
]



\printAffiliationsAndNotice{}  

\begin{abstract}
Sketch techniques have been extensively studied in recent years and are especially well-suited to data streaming scenarios, where the sketch summary is updated quickly and compactly. However, it is challenging to recover the current state from these summaries in a way that is accurate, fast, and real. In this paper, we seek a solution that reconciles this tension, aiming for near-perfect recovery with lightweight computational procedures. Focusing on \textit{linear sketching} problems of the form $\boldsymbol{\Phi}f \rightarrow f$, our study proceeds in three stages. First, we dissect existing techniques and show the root cause of the sketching dilemma: an \textit{orthogonal} information loss. Second, we examine how \textit{generative} priors can be leveraged to bridge the information gap. Third, we propose \texttt{FLORE}, a novel generative sketching framework that embraces these analyses to achieve the best of all worlds. More importantly, \texttt{FLORE} can be trained without access to ground-truth data. Comprehensive evaluations demonstrate \texttt{FLORE}’s ability to provide high-quality recovery, and support summary with low computing overhead, outperforming previous methods by up to $10^3\times$ in error reduction and $10^2\times$ in processing speed compared to learning-based solutions. Our codebase will be released publicly in the near future.
\end{abstract}

\vspace{-7mm}
\section{Introduction}
\label{main_sec:intro}

\vspace{-1mm}
Accurately measuring the volume of water from a fire hose while being directly hit by it is nearly an impossible feat. In essence, this captures the central difficulty of streaming data analysis: the data item arrives in a continuous, unrelenting torrent, so instead we need to process each observation quickly to create a summary of the current state. 
Such a summary is referred to as a \textit{sketch} of the data, which has been extensively studied~\cite{krishnamurthy2003sketch,cormode2017data,larsen2019heavy}, since it is helpful in any situation where people are monitoring a database or network that is being updated continuously.  
However, 
coping with the vast scale of information means making compromises—the description of the world via data sketching is approximate rather than exact. A particular challenge has been to design algorithms that navigate tradeoffs effectively, like sacrificing speed for accuracy or memory consumption for reliability~\cite{bloom1970space}. The challenge gets harder, though, when the questions about stream get more complicated, e.g., heavy hitters~\cite{misra1982finding}, entropy~\cite{chakrabarti2006estimating}, and graph summarization~\cite{tang2016graph}. 

\begin{figure}
    \centering
    \setlength{\belowcaptionskip}{-0.6cm}
    \includegraphics[width=1.\linewidth]{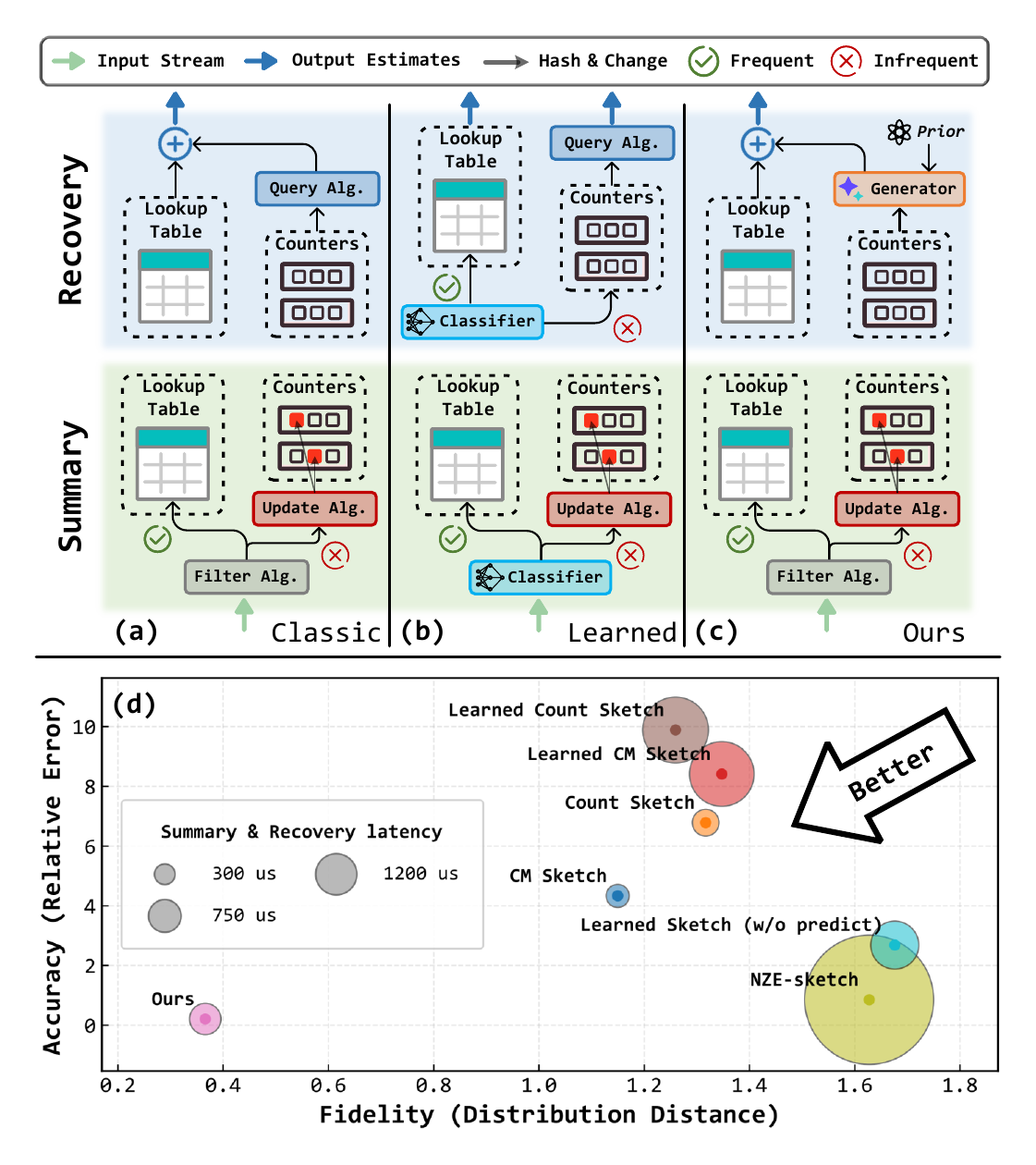}
    \caption{\textbf{Comparison of our generative sketch solution and existing solutions.} From (a) to (c), we present an overview of the full pipelines of three sketching frameworks. In (d), our proposal achieves the best trade-off among accuracy, fidelity and efficiency.}
    \label{fig:difference}
\end{figure}

\vspace{-0.6mm}
Drawing inspiration from compressive sensing (\texttt{CS},~\citealp{cormode2005towards}), this paper interprets sketch as a coding problem, i.e., it is encoding sparse information, such as heavy-tailed traffic, down to compact summary and trying to extract information that lets them recover what was put in initially (\S\ref{subsec:formu}). Further, if the coding procedure is able to be represented as a linear transform of the input, i.e., $b=\boldsymbol{\Phi}f$, it results in what we call a linear sketching problem~\cite{cormode2011sketch,li2014turnstile}. 
From this perspective, we show a fundamental tension between sketching cost and accuracy for existing techniques (\S\ref{subsec:lim}). 
In particular, our analysis uncovers that the root cause lies in an inherent deficiency of orthogonal information that cannot be recovered without prior knowledge (\S\ref{subsec:fail}).
The central challenge thus shifts toward compensating for the missing component that meets both fidelity and efficiency requirements. 

\vspace{-0.6mm}
In order to overcome the above challenge, this work turns to a brand new sketching paradigm empowered by generative models (GMs). Our key insight is two-fold: (i) Systematically, since data-plane summary is on the critical path, as illustrated in Figure~\ref{fig:difference}(c), deploying GMs for recovery in the control plane may avoid burdening the overall system. (ii) Theoretically, by enforcing consistency with the learned generative prior, GMs may help mitigate the aforementioned information loss from aggressive summarization. 
We study the generative linear sketching problem along two lines. In the \textit{first line} (\S\ref{subsec:igp}), we examine prevalent GMs on synthetic but representative streaming scenarios. Our preliminary experiments show that all GMs fit low-cardinality stream distributions ($<$10K distinct keys) well; however, it is non-trivial to scale these models other than flow-based generative model (FGM) to real-world applications, due to limitations of their own (e.g., unstable training, slow inference, and awful expressiveness).
This motivates the \textit{second line} (\S\ref{subsec:flore}) of our work that leverages FGM to develop a novel and practical methodology for solving problems of interest. Specifically, we propose \texttt{FLORE}, the \textit{first} deep generative sketching framework for fast summarization and accurate recovery while preserving fidelity (see Figure~\ref{fig:difference}(d) for a comparative view of the trade-off under a 1MB sketch memory budget on CAIDA-2018 dataset). 
To realize \texttt{FLORE}, there are three core designs as follows. \textcircled{\scriptsize 1} Instead of designing from scratch, we use the analysis results as a guideline: \texttt{FLORE} is purposefully modeled as an invertible solver to separate and generate the lost information from orthogonal subspaces of its underlying sketch. \textcircled{\scriptsize 2} Since it is impractical to collect ground truth (GT) labels, \texttt{FLORE} is able to be trained with only accessing aggregated counters, based on a well-designed Expectation Maximization (EM) algorithm. \textcircled{\scriptsize 3} \texttt{FLORE} can easily support long-term, lightweight operation for high-speed, high-volume data streams without frequent model changes (see Appendix~\ref{subsec:robust}), leveraging a stream filtering technique and scalable INN architecture.


\vspace{-0.6mm}
\textbf{Contributions.} We summarize our contributions as follows:

\vspace{-1.2mm}
\textbullet\ We dissect existing sketch techniques from a perspective of matrix analysis based on \texttt{CS} theory. We reveal the orthogonal irrecoverability that precludes perfect recovery from compact summary of the data stream.

\vspace{-1.2mm}
\textbullet\ We investigate the feasibility of applying GMs to enhance data sketching. By testing representative GM families, we find that, except for FGMs, they either scale poorly or fall short in the desired efficiency or accuracy.

\vspace{-1.2mm}
\textbullet\ We design a new sketching framework, \texttt{FLORE}, that efficiently utilizes the distribution modeling capability of FGMs to embrace our analyses seamlessly. First, \texttt{FLORE} theoretically supports unbiased recovery, even without knowledge about the GT data. Second, \texttt{FLORE} is scalable and generic to various applications. 
To our best knowledge, the generative stream processing paradigm is never explored before.

\vspace{-1.2mm}
\textbullet\ 
We conduct a comprehensive evaluation of \texttt{FLORE} on data streams simulated using ten datasets with diverse sizes and distributions. The results demonstrate \texttt{FLORE} outperforms nine state-of-the-art (SOTA) solutions, achieving up to a \(10^{3}\times\) reduction in per-element frequency estimation error, a \(10^{2}\times\) improvement in accurately estimating entropy and overall distributions, and up to a $80+$ point gain in heavy-hitter detection. It also attains higher sketching throughput than prior art and can be further accelerated on GPUs.

\vspace{-2mm}
\section{Motivating Analyses}
\label{main_sec:ays}

\vspace{-0.6mm}
\subsection{Problem Formulation}
\label{subsec:formu}

\vspace{-0.6mm}
A data stream is a sequence of $n$ tuples arriving in order. The $t$-th tuple is denoted as $(\text{key}_t, v_t)$, where $\text{key}_t \in \mathcal{I}$ is a data-item key used for hashing and $v_t$ is a value associated with the item. These $n$ tuples can have duplicate keys, but algorithms running on the data stream must be \textit{one-pass}. Sketch techniques utilize a compact data structure to summarize each incoming item with constant-time complexity, e.g., $\mathcal{O}(k)$ when applying $k$ independent hash functions to (flattened) counters $b \in \mathbb{R}^m$. Upon receiving a query, the goal is to recover the required statistics from the sketch summary, most notably frequencies $f \in \mathbb{C}^N$ in this paper. 

\begin{figure}
    \centering
    \setlength{\belowcaptionskip}{-0.55cm}
    \includegraphics[width=1.\linewidth]{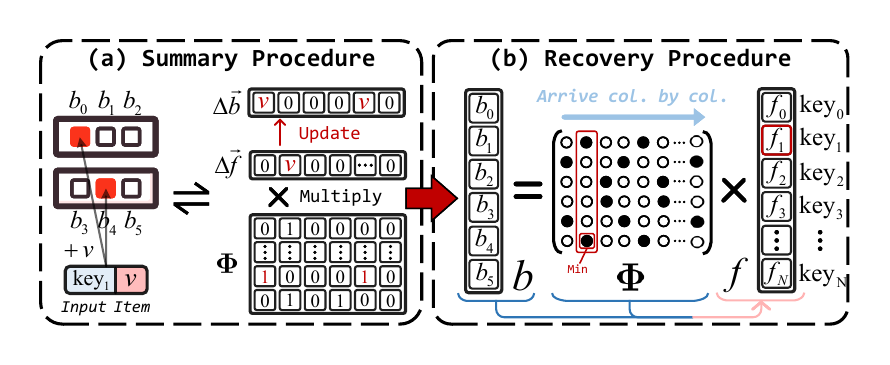}
    \caption{\textbf{Matrix formulation of Count-Min Sketch.} The linear sketching problem can be analyzed via simple matrix operations.}
    \label{fig:sketch_sensing}
\end{figure}

\vspace{-0.6mm}
\textbf{Tailored problem.} Our study focuses on the \textit{linear sketching} problem, where the summary is defined to have a linear part such that each counter is updated linearly by incoming items. We further consider $f$ to be sparse or heavy-tailed, a property justified and utilized in many prior works~\cite{roy2016augmented,huang2017sketchvisor}. Then, the problem falls into a classical \texttt{CS} framework:
\( \min_{\boldsymbol{\Phi}f=b} \left\| f \right\|_1\), 
where $\boldsymbol{\Phi}$ is an indicator matrix produced by multiple hashing operations, and its columns are presented one by one. In Figure~\ref{fig:sketch_sensing}, we use Count-Min Sketch as an example to illustrate the matrix formulation with two procedures: summary and recovery. In this setting, \(\boldsymbol{\Phi}_{i,j}\) = $1$ if and only if \(\text{key}_j\) is hashed to counter \(i\); otherwise, \(\boldsymbol{\Phi}_{i,j}\) = $0$. This problem is often highly ill-posed, i.e., $m \ll N$, and has been shown to be NP-hard~\cite{foucart2022mathematical}.

\vspace{-1.2mm}
\textbf{\textit{Note}}: We assume that the possible key set $\mathcal{I}$ is known at the start of data streams, following the recent study~\cite{da2022bayesian}. In fact, the online collection of $\mathcal{I}$ can be easily achieved by employing a simple Bloom Filter~\cite{sheng2021pr}, as detailed in Appendix~\ref{subsec:track}. For untracked keys, we report zeros as their final estimates\footnote{\citealp{aamand2023improved} proved that truncating low frequency estimates to zeros will not impact the system’s overall accuracy.}, and extend our formulation to \( \min_{\boldsymbol{\Phi}f=b + \boldsymbol{\epsilon}} \left\| f \right\|_1\) where $\boldsymbol{\epsilon}$ captures the error induced by them. We will empirically show in \S\ref{main_sec:exps} that the framework maintains overall accuracy under incomplete $\mathcal{I}$.



\vspace{-2mm}
\subsection{Limitations of Existing Techniques}
\label{subsec:lim}

\vspace{-0.6mm}
\textbf{(1) Randomized approximation.} Traditionally, $\hat f_i$ is carried out by extracting information associated with $\text{key}_i$ from the sketch summary, which means each element’s estimate is computed individually. Representative sketch techniques include Count-Min Sketch (CM,~\citealp{cormode2005improved}) and Count Sketch (CS\footnote{It should be distinguished from the compressive sensing (\texttt{CS}).},~\citealp{charikar2002finding}). Essentially, their recovery of \(\hat f_i\) is based on observing only \(k\) entries of \(\boldsymbol{\Phi}\), ignoring the influence of all other keys. While their processes are very fast (requiring only $\mathcal{O}(k)$ time) and aggregation by minimum or mean suppresses partial noise, the per-element accuracy, e.g., $\varepsilon \|f\|_1$ for CM and $\varepsilon \|f\|_2$ for CS, where $\varepsilon$ denotes the sketch coefficient, is heavily constrained by the space budget, i.e., number of rows in \(\boldsymbol{\Phi}\).


\vspace{-0.6mm}
\textbf{(2) Disentangled augmentation.} The second class splits the sketch into two smaller components: a heavy part for frequent elements and a light part for infrequent elements. The Augmented Sketch (AG,~\citealp{roy2016augmented}) and learning-augmented sketches~\cite{hsu2019learning} both fall into this category. This design more effectively handles skewed distributions by mitigating hash collisions. From our perspective, such methods can be interpreted as a decomposition of the sketching matrix $\boldsymbol{\Phi}$, which allocates nonzero entries into separate components. By clearing columns of $\boldsymbol{\Phi}$, the degree of underdetermination is thus decreased. However, when resources are bounded, the capacity of the heavy part is limited. Additionally, the light part continues to suffer from large error bounds across all algorithms, which can become even worse under a fixed total space budget.

\vspace{-0.6mm}
\textbf{(3) Compressive sensing.} Recently, the sketch techniques have begun to compute estimates for all elements simultaneously rather than independently. They typically decode all information by solving a system of linear equations using \texttt{CS} algorithms. Among these sketches, the PR-sketch (PR,~\citealp{sheng2021pr}) and NZE-sketch (NZE,~\citealp{huang2021toward}) represent the most mature examples. Before proceeding with our analysis, we need to introduce a notion called restricted isometry property (RIP,~\citealp{candes2008restricted}), which characterizes how well $\boldsymbol{\Phi}$ preserves the norm of sparse signals. Since $\boldsymbol{\Phi}$ must operate uniformly over all sparse vectors, this property is quantified by a sequence of restricted isometry constants (RICs), denoted by $\left\{\delta_s\right\}$, where
\[ \delta_s = \sup \left\{\frac{\left| \|\boldsymbol{\Phi} x\|_2 - \|x\|_2 \right|}{\|x\|_2}: \ x \text{ is an } s\text{-sparse vector} \right\}. \]
We say that $\boldsymbol{\Phi}$ satisfies the RIP if $\delta_s$ is small for reasonably large $s$. Under this condition, the following error bound of \texttt{CS}-based sketch techniques holds:

\begin{figure}
    \setlength{\abovecaptionskip}{-0.05cm} 
    \setlength{\belowcaptionskip}{-0.45cm}
    \subfigure[RIP (acc.) \& complexity (speed) \label{subfig:tc}]{
        \centering
        \includegraphics[width=0.55\linewidth]{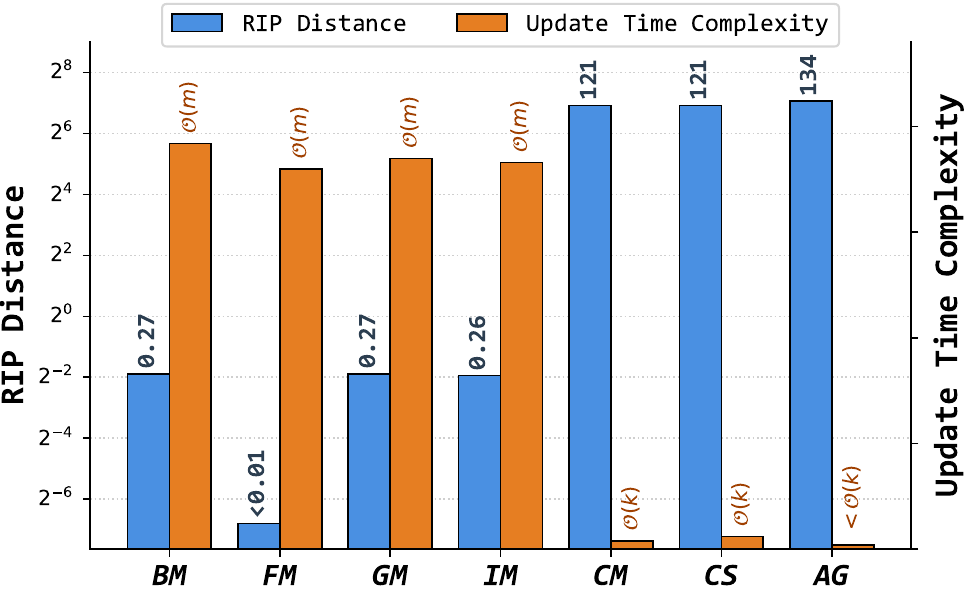}
    }
    \hfill
    \subfigure[Measurement matrix\label{subfig:gs}]{
        \centering
        \includegraphics[width=0.395\linewidth]{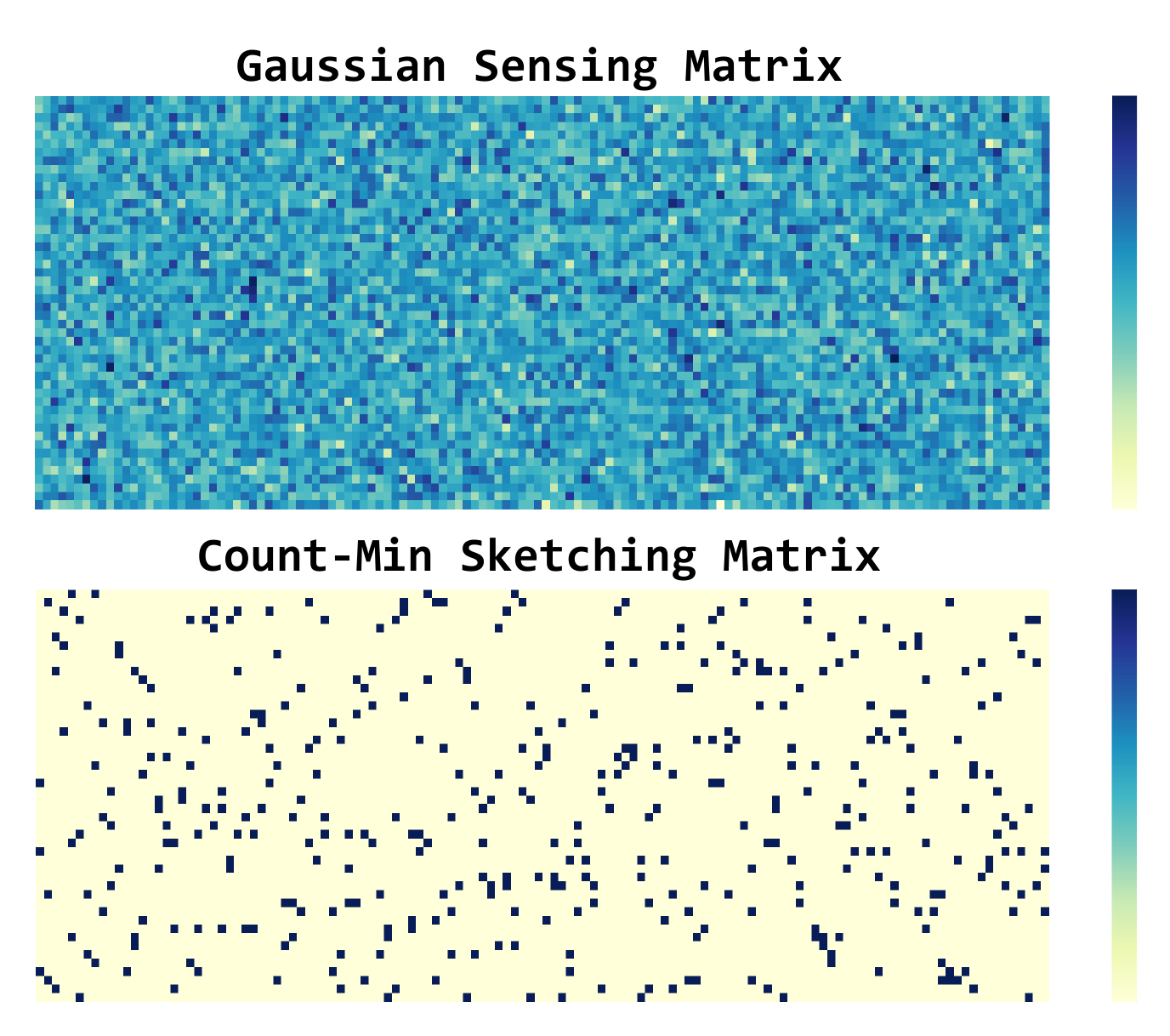}
    }
    \caption{Compare compressive sensing with sketch techniques.}
\end{figure}

\vspace{-0.6mm}
\begin{proposition}
    Suppose that for $p>0$, the error of best $s$-term approximation is denoted by $\sigma_s(f)_p \coloneqq \inf_{\|z\|_0 \leq s} \| f-z \|_p$. Then, for any vector $f \in \mathbb{C}^N$, a \texttt{CS} solution $f^{\star}$ with $b = \boldsymbol{\Phi} f + \boldsymbol{\epsilon}$ and $\|\boldsymbol{\epsilon}\|_2 \leq \eta$ approximates $f$ with $\ell_2$-error 
    \begin{equation}
        \|f^{\star}-f\|_2 \leq C s^{-1/2} \sigma_s(f)_1 + 2C \eta 
    \label{ineq:cs_bound}
    \end{equation}
    where the constant $C = \frac{2(1+\rho)}{1-\rho}$ and $\rho \propto \delta_{2s}$.
\label{prop:error_bound}
\end{proposition}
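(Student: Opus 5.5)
This is the standard robust null space property (NSP) argument for quadratically constrained basis pursuit, as in Foucart--Rauhut. We take the \texttt{CS} solution to be $f^\star \in \arg\min\{\|z\|_1 : \|\boldsymbol{\Phi}z - b\|_2 \le \eta\}$. The plan has two stages. First, we derive from the RIP that $\boldsymbol{\Phi}$ satisfies an $\ell_2$-robust null space property of order $s$. Second, we deduce the error bound (\ref{ineq:cs_bound}) from that property, applied to $v = f^\star - f$. Feasibility of $f$ (since $\|\boldsymbol{\epsilon}\|_2\le\eta$) and of $f^\star$ gives $\|\boldsymbol{\Phi}v\|_2 \le 2\eta$, and minimality gives $\|f^\star\|_1 \le \|f\|_1$.

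\textbf{Step 1 (RIP $\Rightarrow$ robust NSP).} We aim to show that if $\delta_{2s}$ is below a threshold, then for every $v$ and every index set $S$ with $|S|\le s$,
\[
\|v_S\|_2 \le \frac{\rho}{\sqrt{s}}\|v_{\bar S}\|_1 + \tau \|\boldsymbol{\Phi}v\|_2 .
\]
Here $\rho = \rho(\delta_{2s}) < 1$ grows with $\delta_{2s}$, for instance $\rho = \delta_{2s}/\sqrt{1-\delta_{2s}^2}$ under the sharp analysis or $\sqrt{2}\delta_{2s}/(1-\delta_{2s})$ under the classical Cand\`es one; either is consistent with $\rho\propto\delta_{2s}$. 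The construction is as follows. Partition the complement of $S_0 = S$ into blocks $S_1, S_2,\dots$ of size $s$, ordered by decreasing magnitude of $v$. Use the shelling bound $\|v_{S_k}\|_2 \le s^{-1/2}\|v_{S_{k-1}}\|_1$, and write
\[
\|v_{S_0\cup S_1}\|_2^2 \le \frac{1}{1-\delta_{2s}}\Bigl\langle \boldsymbol{\Phi}v_{S_0\cup S_1},\, \boldsymbol{\Phi}v - \sum_{k\ge2}\boldsymbol{\Phi}v_{S_k}\Bigr\rangle .
\]
Bound the cross terms with the standard inequality $|\langle \boldsymbol{\Phi}u,\boldsymbol{\Phi}w\rangle|\le \delta_{2s}\|u\|_2\|w\|_2$ for disjointly supported $s$-sparse $u,w$, which is derived by polarization from the RIP. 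One point needs care: the paper defines $\delta_s$ through $|\|\boldsymbol{\Phi}x\|_2 - \|x\|_2|$ rather than through squared norms. I would first convert this to the usual two-sided bound $(1-\delta)^2\|x\|_2^2 \le \|\boldsymbol{\Phi}x\|_2^2 \le (1+\delta)^2\|x\|_2^2$, and absorb the resulting change of constants into ``$\rho \propto \delta_{2s}$'' and into $\tau$. This step is the main obstacle, because the exact constant dependence must be tracked and the threshold on $\delta_{2s}$ ensuring $\rho<1$ must be stated.

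\textbf{Step 2 (robust NSP $\Rightarrow$ error bound).} Let $S$ index the $s$ largest entries of $f$. From $\|f+v\|_1\le\|f\|_1$ and the triangle inequality we get the cone-type bound
\[
\|v_{\bar S}\|_1 \le \|v_S\|_1 + 2\sigma_s(f)_1 .
\]
Combining it with the $\ell_1$ form of the NSP, $\|v_S\|_1\le\rho\|v_{\bar S}\|_1 + \tau\sqrt{s}\|\boldsymbol{\Phi}v\|_2$, yields
\[
\|v_{\bar S}\|_1 \le \frac{2\sigma_s(f)_1 + \tau\sqrt{s}\|\boldsymbol{\Phi}v\|_2}{1-\rho},
\]
and hence
\[
\|v\|_1 \le \frac{2(1+\rho)}{1-\rho}\sigma_s(f)_1 + \frac{2\tau\sqrt{s}}{1-\rho}\|\boldsymbol{\Phi}v\|_2 .
\]
To pass to $\ell_2$, apply the NSP to the index set $T$ of the $s$ largest entries of $v$. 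Together with $\|v_{\bar T}\|_2 \le s^{-1/2}\|v\|_1$ (Stechkin), this gives
\[
\|v\|_2 \le \|v_T\|_2 + \|v_{\bar T}\|_2 \le \frac{1+\rho}{\sqrt{s}}\|v\|_1 + \tau\|\boldsymbol{\Phi}v\|_2 .
\]

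\textbf{Step 3 (collecting constants).} Substitute the $\ell_1$ bound into the last inequality and use $\|\boldsymbol{\Phi}v\|_2\le 2\eta$. This gives $\|f^\star - f\|_2 \le C' s^{-1/2}\sigma_s(f)_1 + D'\eta$ with $C',D'$ depending only on $\rho$ and $\tau$. I would then match the statement's form: take $C = \frac{2(1+\rho)}{1-\rho}$ as the $\ell_1$-stage constant from Step 2, normalize $\tau$ (a function of $\delta_{2s}$), and note that the residual constants fit into $C$ and $2C\eta$ up to the convention $\rho\propto\delta_{2s}$. If the exact factors do not line up, I would state the bound with constants depending only on $\delta_{2s}$ and cite Foucart--Rauhut, Theorem 6.12, for the precise values.
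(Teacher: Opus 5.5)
Steps~1 and~2 are sound, and routing the argument through an $\ell_2$-robust null space property is a legitimate alternative to the paper's direct Cand\`es-style computation. The gap is in Step~3. You claim the residual constants ``fit into $C$ and $2C\eta$ up to the convention $\rho\propto\delta_{2s}$''; for the chain you wrote, this is false. Carrying your inequalities through, with $\tau=\sqrt{1+\delta_{2s}}/(1-\delta_{2s})$ from Step~1, gives
\[
\|f^\star-f\|_2 \le \frac{2(1+\rho)^2}{1-\rho}\, s^{-1/2}\sigma_s(f)_1 + \frac{2(3+\rho)\tau}{1-\rho}\,\eta .
\]
Since $\tau\ge 1$ and $\rho\ge 0$, the $\eta$-coefficient is never below $6$. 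By contrast, $2C=4(1+\rho')/(1-\rho')\to 4$ as $\delta_{2s}\to 0$ for \emph{every} choice $\rho'=c\,\delta_{2s}$. So for exactly $s$-sparse $f$ and small $\delta_{2s}$, your bound does not imply the stated one, whatever proportionality constant you pick. The extra factor $1+\rho$ on the $\sigma_s$-term can likewise only be absorbed by inflating $c$ and shrinking the admissible range of $\delta_{2s}$. The loss comes from the detour through $\|v\|_1$, followed by a second application of the null space property on the top-$s$ entries of $v$ with the crude bound $\|v_{\bar T}\|_2\le s^{-1/2}\|v\|_1$. Falling back on Foucart--Rauhut, Theorem~6.12, with unspecified constants proves a weaker statement than the one asked.

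The repair is already inside your Step~1. It actually establishes the stronger estimate $\|v_{S_0\cup S_1}\|_2\le \frac{\rho}{\sqrt s}\|v_{\bar S}\|_1+\tau\|\boldsymbol{\Phi}v\|_2$ with $\rho=\sqrt2\,\delta_{2s}/(1-\delta_{2s})$, so do not weaken it to $\|v_S\|_2$. Take $S$ to be the top-$s$ support of $f$ (the same set as in your cone bound), and control the tail by shelling: $\|v_{(S_0\cup S_1)^c}\|_2\le s^{-1/2}\|v_{\bar S}\|_1$. Then $\|v\|_2\le \frac{1+\rho}{\sqrt s}\|v_{\bar S}\|_1+\tau\|\boldsymbol{\Phi}v\|_2$. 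Inserting your bound on $\|v_{\bar S}\|_1$ together with $\|\boldsymbol{\Phi}v\|_2\le 2\eta$ yields
\[
\|f^\star-f\|_2\le \frac{2(1+\rho)}{1-\rho}\,s^{-1/2}\sigma_s(f)_1+\frac{4\sqrt{1+\delta_{2s}}}{1-(\sqrt2+1)\delta_{2s}}\,\eta .
\]
Now set $\rho'=(\sqrt2+1)\delta_{2s}$. Then $\rho\le\rho'$ whenever $\delta_{2s}<\sqrt2-1$, and the inequality $\sqrt{1+x}<1+(\sqrt2+1)x$ turns the $\eta$-coefficient into $4(1+\rho')/(1-\rho')=2C$. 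Up to bookkeeping, this is the paper's argument. The paper runs the Cand\`es block decomposition directly on $h=f^\star-f$, with $\mathcal{T}_0$ the top-$s$ support of $f$, and proves $\|h\|_2\le 2\|h_{\mathcal{T}_{0\cup1}}\|_2+2s^{-1/2}\|f_{\mathcal{T}_0^c}\|_1$. It then closes with the same $\rho=(\sqrt2+1)\delta_{2s}$ and the same elementary lemma. Your modular NSP framing is cleaner and would let you plug in sharper RIP thresholds, but only the ``$S_0\cup S_1$'' form keeps the stated constants. Finally, the conversion from the non-squared RIP definition is unnecessary: the appendix's definition of $\delta_s$ uses squared norms, and that is the one the paper's proof uses.
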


\vspace{-1mm}
Due to space constraints, we defer all detailed proofs henceforth to Appendix~\ref{app:proofs}. Proposition~\ref{prop:error_bound} indicates that their per-element accuracy is bounded on the order of around $\mathcal{O}\left( {C {s}^{-1/2} \sigma_s(f)_1} / {N}  \right)$, which is tighter than the bounds of previous techniques. However, this comes at a greatly increased query cost~\cite{chakrabarti2025streaming} and does not imply better estimates in practice. Since the coefficient $C$ is positively correlated with RICs, we measure the RIP distance (i.e., the difference between $\|\boldsymbol{\Phi} f\|_2$ and $\| f\|_2$) of the classical \texttt{CS} matrices and the matrices induced by sketch techniques. We test four types of \texttt{CS} matrices~\cite{szarek1991condition,candes2006near}: \textcircled{\scriptsize 1} Bernoulli Matrix (BM), \textcircled{\scriptsize 2} Fourier Matrix (FM), \textcircled{\scriptsize 3} Gaussian Matrix (GM), and \textcircled{\scriptsize 4} Incoherence Matrix (IM). In Figure~\ref{subfig:tc}, we can see that all three sketch techniques suffer from extremely high RIP distance ($>$120), while \texttt{CS} matrices achieve much smaller ones ($<$0.3). This discrepancy stems from the sparse characteristic of real-world sketching matrices, as visualized in Figure~\ref{subfig:gs}, because only updating a small number, e.g., $\mathcal{O}(k)$, of counters each time can keep pace with data streams. Thus, \texttt{CS} is still not enough for efficiently solving the target problem.
Theorem~\ref{thm:size} formally shows that sketching matrices, e.g., $(0,1)$-matrix of CM, require substantially more rows (or counters here) that optimal RIP matrices. 

\vspace{-0.6mm}
\begin{theorem}
    Let $f \in \mathbb{C}^N$ be $s$-sparse and $\boldsymbol{\Phi}$ be an $m \times N$ matrix that satisfies RIP for perfect recovery of the vector $f$. Then, the necessary number of counters
    \begin{equation}
        m \geq \begin{cases}
                \mathcal{O}\!\left( s \log (N/s) \right), 
                & \text{if } \boldsymbol{\Phi} \text{ is random dense}, \\[4pt]
                \mathcal{O}\left( s^{2} \right), 
                & \text{if } \boldsymbol{\Phi} \text{ is 0-1 sparse}.
            \end{cases}
    \end{equation}
\label{thm:size}
\end{theorem}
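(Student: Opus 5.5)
The plan is to prove the two cases separately, since they rest on different mechanisms. Both are lower bounds on $m$ for a matrix whose restricted isometry constants are small enough for perfect $s$-sparse recovery, say $\delta_{2s} < 1$ (or the constant needed to make $\rho<1$ in Proposition~\ref{prop:error_bound}). The statement writes $\mathcal{O}$, but I read it as $\Omega$, and I will prove bounds of that form.

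\textbf{Dense random case.} I will use the standard packing (Gelfand-width) argument.

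First, I construct a large family of well-separated sparse vectors. Take vectors $x \in \{0,1\}^N$ with exactly $s/2$ ones and pairwise Hamming distance greater than $s/2$. A Gilbert--Varshamov style greedy count gives such a family $\mathcal{X}$ with $\log|\mathcal{X}| \geq c\, s\log(N/s)$.

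Next, for distinct $x,y \in \mathcal{X}$, the difference $x-y$ is $s$-sparse. The lower RIP inequality then gives
\[
\|\boldsymbol{\Phi}x-\boldsymbol{\Phi}y\|_2 \geq (1-\delta_s)\|x-y\|_2 \gtrsim \sqrt{s}.
\]
The upper RIP inequality gives $\|\boldsymbol{\Phi}x\|_2 \leq (1+\delta_s)\sqrt{s/2}$.

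Finally, I place disjoint balls of radius about $\sqrt{s}/4$ around the points $\boldsymbol{\Phi}x$ in $\mathbb{R}^m$ (or $\mathbb{C}^m \cong \mathbb{R}^{2m}$). All of them sit inside a ball of radius about $\sqrt{s}$. Comparing volumes gives $|\mathcal{X}| \leq C^{m}$, hence $m \gtrsim s\log(N/s)$. This argument holds for any RIP matrix. The term ``random dense'' only signals that this bound is achievable there, by the standard concentration argument for Gaussian or Bernoulli matrices, which I would cite rather than reprove.

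\textbf{0-1 sparse case.} I will follow Chandar's argument that binary RIP matrices need $m = \Omega(s^2)$.

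First, note that column $j$ of a $(0,1)$ matrix has $\ell_2^2$-norm equal to its weight $w_j$. Near-isometry on $1$-sparse vectors then forces $w_j \in [(1-\delta)^2, (1+\delta)^2]$, up to a global scaling $\boldsymbol{\Phi}/\sqrt{d}$. So after normalization every column has weight about $d$.

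Second, control overlaps. For two columns $i,j$ with inner product $\langle \phi_i,\phi_j\rangle = t$, the $2$-sparse vector $e_i + e_j$ shows $t/d \leq \delta$.

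Third, take an $s$-subset $S$ and the all-ones vector $x=\mathbf{1}_S$. Then
\[
\|\boldsymbol{\Phi}x\|_2^2 = \sum_r c_r^2,
\]
where $c_r$ counts the columns of $S$ that hit row $r$. RIP requires this to be at most $(1+\delta)sd$, so collisions within any $s$-set are limited. Meanwhile $\sum_r c_r = sd$. Cauchy--Schwarz gives $\sum_r c_r^2 \geq (sd)^2/m$. Combining the two yields $m \geq sd/(1+\delta)$.

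Fourth, show that $d$ must be about $s$. I will pick a row $r$ and choose $S$ among the columns sharing that row. Averaging over rows, some row has at least $Nd/m$ columns. If $Nd/m \geq s$, then choosing $S$ inside that row gives $c_r = s$, so
\[
\|\boldsymbol{\Phi}\mathbf{1}_S\|_2^2 \geq s^2 > (1+\delta)sd
\]
unless $d \gtrsim s$. In that case $m \geq sd/(1+\delta) = \Omega(s^2)$. In the other case, $Nd/m < s$, we get $m > Nd/s$, and this is at least of order $s^2$ whenever $N \gtrsim s^3$, which is the meaningful regime since $m \ll N$.

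\textbf{Main obstacle.} The delicate step is the 0-1 case. The column norms are tied to the normalization, so one must track the scaling $\boldsymbol{\Phi}/\sqrt{d}$ carefully. The bookkeeping in the final case split, relating $d$, $s$ and the row loads, is also error-prone. If the direct case analysis gets messy, I would fall back on citing Chandar's theorem for binary matrices and present only the sketch above.
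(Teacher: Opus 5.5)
\textbf{0-1 sparse case: a gap in the final case split.} Here you run the same Chandar-type argument as the paper. Your bound $m \ge sd/(1+\delta)$ is its $k/m \le D^2/s$, and your normalization $\boldsymbol{\Phi}/\sqrt{d}$ plays the role of its constants $c, D$. You also derive near-constant column weights from RIP on $1$-sparse vectors, where the paper simply assumes exactly $k$ ones per column, as in Count-Min.

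The problem is the last step. You work with the \emph{average} row load $Nd/m$. In the branch $Nd/m < s$ this only gives $m > Nd/s \ge N/s$, which is $\Omega(s^2)$ only when $N \gtrsim s^3$. The appeal to $m \ll N$ does not close this, because $Nd/s < m \ll N$ only says $d \ll s$. So for $s^2 \lesssim N \lesssim s^3$ nothing is proved.

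The missing idea is to use the \emph{maximum} row load $L$ and test RIP on a vector of sparsity $\min\{L,s\}$; this is the paper's $\tau=\min\{r,s\}$ device.
\begin{itemize}
\item If $L \ge s$, your first branch applies verbatim.
\item If $L < s$, let $S$ be the $L$ columns meeting that row. Then $\mathbf{1}_S$ is $L$-sparse, and $L^2 \le \|\boldsymbol{\Phi}\mathbf{1}_S\|_2^2 \le (1+\delta)\,dL$, so $L \le (1+\delta)d$. Counting ones gives $Nd \lesssim \sum_j w_j \le Lm$, hence $m \gtrsim Nd/L \gtrsim N$.
\end{itemize}
Altogether $m \gtrsim \min\{s^2, N\}$, matching the paper's $\min\{s^2/D^4, N/D^2\}$. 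This is the correct form of the claim, since the identity matrix shows that $m = N$ always suffices.

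\textbf{Dense case: a different route, and the one that proves necessity.} The paper applies Lemma~\ref{lem:gaum} (Foucart--Rauhut, Theorem~9.16). That lemma says a Gaussian matrix with $m^2/(m+1)$ above roughly $2s\ln(N/s)$ recovers a fixed $s$-sparse vector with high probability. The paper then reads $m \ge 2s\ln(N/s)$ off this condition. But a sufficient condition on $m$ does not yield a lower bound, so what that route actually provides is achievability with an explicit constant.

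Your Gilbert--Varshamov packing plus volume comparison is the standard necessity argument. It holds for every matrix whose restricted isometry constant of order $s$ (or $2s$) is bounded away from $1$, not only random dense ones. In particular it also gives $\Omega(s\log(N/s))$ for 0-1 matrices, which is dominated by $s^2$ once $s \gtrsim \log(N/s)$. Citing the Gaussian concentration result only for tightness, as you propose, is the right division of labor.
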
 
\vspace{-3mm}






\begin{figure}
    \centering
    \setlength{\belowcaptionskip}{-0.5cm}
    \includegraphics[width=1.\linewidth]{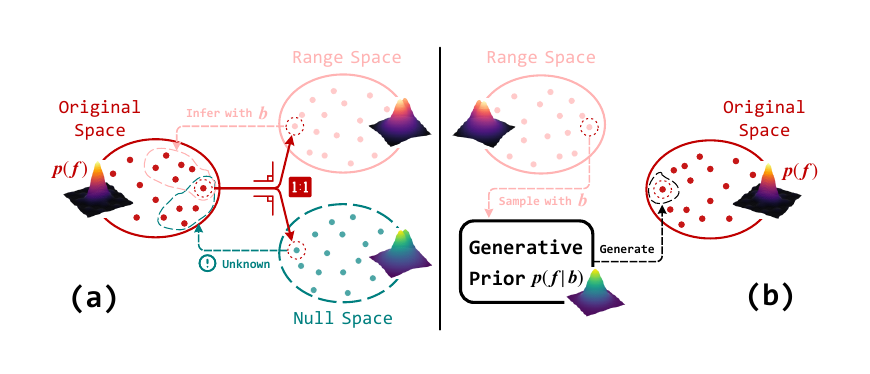}
    \caption{\textbf{Illustration of our key insights.} (a) The original streaming data cannot be recovered due to orthogonal information loss in the null space. (b) We leverage GMs to generate the null-space component which is in harmony with the range-space component.}
    \label{fig:orth}
\end{figure}

\vspace{-3.6mm}
\subsection{The Anatomy of Failure}
\label{subsec:fail}

\vspace{-1mm}
Acknowledging the fundamental limitations of existing techniques, we next analyze the root causes of their poor performance. Our methodology is to examine whether compressed $b$ admits perfect recovery of $f$, and range-null space decomposition is at the core of our work. Here we first introduce the definitions of null space and range space. A vector \( f_N \) is said to lie in the null space of \( \boldsymbol{\Phi} \) if and only if \( \boldsymbol{\Phi} f_N = 0 \), and $f_{\Phi}$ lies in the range space of \( \boldsymbol{\Phi} \) if and only if \( \boldsymbol{\Phi} f_{\Phi} = b \). See related works~\cite{wang2023zero,fang2025alphaedit} for more details. For any linear system, an arbitrary signal $f$ can be decomposed into two parts: $f = f_{\Phi} + f_N$. This decomposition can be obtained in various ways, e.g., the singular value decomposition (SVD) used in Appendix~\ref{app:failure_guarantee}. In the context of linear sketching problems, however, this unfortunately leads to the following negative result.

\begin{proposition}
     The sketch summary $b$ captures only the range-space component $f_{\Phi}$, while the null-space component $f_N$ is orthogonal to $f_{\Phi}$ and irrecoverable from $b$.
    \label{prop:failure_guarantee}
\end{proposition}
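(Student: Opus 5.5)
The plan is to make the range--null decomposition concrete through the singular value decomposition, as the text preceding the statement suggests. Write $\boldsymbol{\Phi} = U \Sigma V^{*}$ with $U \in \mathbb{C}^{m\times m}$ and $V \in \mathbb{C}^{N\times N}$ unitary, and let $r = \operatorname{rank}(\boldsymbol{\Phi}) \le m \ll N$. Partition $V = [V_r, V_0]$, where $V_r$ holds the first $r$ right singular vectors and $V_0$ the remaining $N-r$.

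The key identity is that the columns of $V_r$ span the row space $\mathcal{R}(\boldsymbol{\Phi}^{*})$, and the columns of $V_0$ span $\ker \boldsymbol{\Phi}$. We then define two orthogonal projectors in terms of the Moore--Penrose pseudoinverse $\boldsymbol{\Phi}^{\dagger} = V \Sigma^{\dagger} U^{*}$:
\[ P_{\Phi} = V_r V_r^{*} = \boldsymbol{\Phi}^{\dagger}\boldsymbol{\Phi}, \qquad P_N = V_0 V_0^{*} = I - \boldsymbol{\Phi}^{\dagger}\boldsymbol{\Phi}. \]
With these we set $f_{\Phi} = P_{\Phi} f$ and $f_N = P_N f$, so that $f = f_{\Phi} + f_N$.

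We then verify three facts in order.
\begin{enumerate}
\item[(i)] $\boldsymbol{\Phi} f_N = \boldsymbol{\Phi}(I-\boldsymbol{\Phi}^{\dagger}\boldsymbol{\Phi}) f = 0$, using the identity $\boldsymbol{\Phi}\boldsymbol{\Phi}^{\dagger}\boldsymbol{\Phi} = \boldsymbol{\Phi}$. Consequently $\boldsymbol{\Phi} f_{\Phi} = \boldsymbol{\Phi} f = b$, which matches the paper's definitions of the null- and range-space parts.
\item[(ii)] Orthogonality holds: $\langle f_{\Phi}, f_N\rangle = f^{*} V_r V_r^{*} V_0 V_0^{*} f = 0$, because $V_r^{*}V_0 = 0$ by unitarity of $V$.
\item[(iii)] $f_{\Phi}$ is a function of $b$ alone, namely $f_{\Phi} = \boldsymbol{\Phi}^{\dagger} b$. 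So the summary captures the range-space component exactly.
\end{enumerate}

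For irrecoverability, the argument is a non-identifiability one. For any $z \in \ker\boldsymbol{\Phi}$, the signals $f$ and $f' = f_{\Phi} + z$ produce the same summary $b$. Since $N - r \ge N - m > 0$, the kernel is nontrivial, so any recovery map $g$ with $g(b) = f$ must fail on $f'$ whenever $z \neq f_N$. Equivalently, $b$ carries no information about the coordinates $V_0^{*} f$. In the noisy formulation $b = \boldsymbol{\Phi} f + \boldsymbol{\epsilon}$ the same conclusion holds, because $\boldsymbol{\Phi}$ annihilates $f_N$ irrespective of the noise.

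The main obstacle is wording rather than algebra. The paper's phrase ``$\boldsymbol{\Phi} f_{\Phi} = b$'' does not by itself pin down $f_{\Phi}$ uniquely. The fix is to make the canonical choice $f_{\Phi} \in \mathcal{R}(\boldsymbol{\Phi}^{*})$, the minimum-norm solution, and to note that this is exactly the choice that makes the decomposition orthogonal. We also need to state ``irrecoverable'' precisely: without a prior restricting $f$ (for example sparsity, as in Proposition~\ref{prop:error_bound}), no decoder can distinguish $f$ from $f_{\Phi} + z$. This is also what motivates the generative prior used later in the paper.
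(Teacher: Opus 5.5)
Your proposal is correct and follows essentially the same route as the paper: an SVD of $\boldsymbol{\Phi}$, the orthogonal projectors $V_rV_r^{*}$ and $V_0V_0^{*}$ onto the row space and null space, $\boldsymbol{\Phi} f_N = 0$ (hence $\boldsymbol{\Phi} f_{\Phi} = b$), and orthogonality from $V_r^{*}V_0 = 0$. Your identity $f_{\Phi} = \boldsymbol{\Phi}^{\dagger} b$ and your non-identifiability argument ($f$ and $f_{\Phi} + z$ with $z \in \ker\boldsymbol{\Phi}$ produce the same $b$) make the ``captures'' and ``irrecoverable'' parts precise, whereas the paper's proof stops once orthogonality is established.
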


\vspace{-1.2mm}
Figure~\ref{fig:orth}(a) illustrates the implication of Proposition~\ref{prop:failure_guarantee}, which we term \textit{orthogonal irrecoverability}. To guarantee exact recovery, it basically requires that all $f_N$ are nearly-zero vectors, as they are entirely discarded during compression. This requirement is especially hard to meet for sparse matrices induced by sketch techniques, motivating the use of external prior knowledge to compensate for the missing information, which we will get to later.


\vspace{-3mm}
\section{Deep Generative Linear Sketching}
\label{main_sec:gensys}

\vspace{-0.6mm}
\textbf{On \emph{generative} optimization and why it might do better.} As outlined in \S\ref{main_sec:ays}, relying solely on the summary $b$ is insufficient to achieve nearly-zero-error data sketching. To break the ``curse of compression'', we seek solutions that conform to two critical constraints: \textit{consistency} and \textit{fidelity}. For the consistency constraint, we can simply resort to the range-space $\boldsymbol{\Phi}f \equiv b$. For the fidelity constraint, as discussed in \S\ref{subsec:lim}, while the sparsity assumption of $f$ is often natural and makes sense in \texttt{CS} theory, it is neither necessary nor sufficient for our goal. 
An augmented approach, which avoids presuppositions regarding $f$ and also \texttt{CS} optimizations, is training a deep generator on recent realizations of the stream to directly produce samples that closely follow $f$'s true distribution, thereby effectively ``infilling'' the null space. In Figure~\ref{fig:orth}, we illustrate the key idea of which we refer to \textit{generative prior} through the maximization of the likelihood $\log p(f \mid b)$. It can outperform existing techniques in scenarios where streaming data is volatile and hard to predict under limited memory budgets. Directly inferring $f$ also obviates the need for solving a linear program, which significantly accelerates runtimes for large-scale data streams (see results in Appendix~\ref{subsec:decoding}). Once training is completed, the generator is expected to learn the unique optimum in Figure~\ref{fig:orth}(b). Indeed, our proposal, \texttt{FLORE}, which is a manifestation of this approach, quickly converges to this outcome.

\vspace{-2mm}
\subsection{Integrating Generative Priors}
\label{subsec:igp}

\vspace{-1mm}
There are two ways to incorporate generative priors into our algorithm design. The first is a straightforward strategy that adds distributional regularization during training, such as \textit{Zipf’s law}~\cite{powers1998applications}. For example, the mapping $b \rightarrow f$ may be pre-trained on Zipfian datasets exhibiting similar skewness~\cite{cao2024learning,feng2025lego}, or its outputs may be constrained to satisfy prescribed proportional structure~\cite{yuan2025learning}. However, this way has a clear shortcoming: the manually imposed, deterministic constraints are often overly restrictive, resulting in a narrow range of applicability. Therefore, here it is not a good fit.

\begin{figure}
    \centering
    \setlength{\belowcaptionskip}{-0.5cm}
    \includegraphics[width=0.95\linewidth]{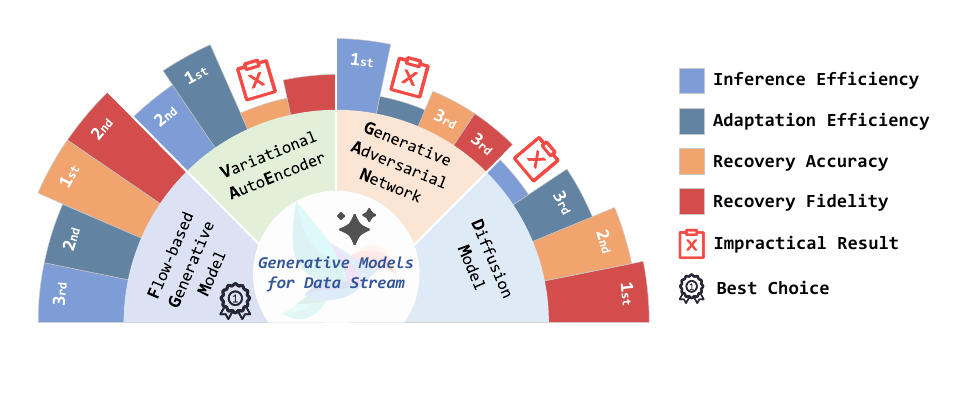}
    \caption{\textbf{Performance comparison across different GMs.} FGM achieves the most favorable trade-off in sketching tasks (\textit{higher the better}). Detailed results are provided in Appendix~\ref{app:gm_select}.}
    \label{fig:radar}
\end{figure}

\vspace{-0.6mm}
In this work, we adopt the second way to establish a mapping between $\left[b, z\right]$ and $f$, where $z$ is a latent vector drawn from a prior distribution, e.g., an isotropic Gaussian $\mathcal{N}(0, I)$. Unlike the previous strategy, the latent variable $z$ explicitly introduces external information that can be actively leveraged for diverse generation. Moreover, for the guarantee of unbiased estimation, we prove the following theorem:

\vspace{-0.6mm}
\begin{theorem}
    For a sketching matrix \( \boldsymbol{\Phi} \in \mathbb{R}^{m \times N} \) of rank \( r \), ground-truth vector \( f \in \mathbb{C}^N \) and \( b = \boldsymbol{\Phi} f\), there exists a gaussian vector \( z \in \mathbb{R}^{N - r} \) and a learnable mapping \( G : \mathbb{R}^{m + N - r} \to \mathbb{R}^N \) such that \( G(\left[b, z\right]) \) uniquely determines \( f \).
    \label{thm:mapping}
\end{theorem}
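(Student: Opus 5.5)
The plan is to build $G$ explicitly from the range--null space decomposition behind Proposition~\ref{prop:failure_guarantee}, using the SVD of $\boldsymbol{\Phi}$. The theorem then reduces to a linear-algebra identity together with a transport-of-measure step. Write $\boldsymbol{\Phi} = U \Sigma V^{\top}$ with $V = [V_r, V_{N-r}] \in \mathbb{R}^{N\times N}$ orthogonal. The columns of $V_r$ span the row space and those of $V_{N-r}$ span $\ker \boldsymbol{\Phi}$. Let $\boldsymbol{\Phi}^{\dagger}$ be the Moore--Penrose pseudoinverse. Every $f$ then splits uniquely as
\[ f = \boldsymbol{\Phi}^{\dagger}\boldsymbol{\Phi} f + V_{N-r} c = \boldsymbol{\Phi}^{\dagger} b + V_{N-r} c, \qquad c = V_{N-r}^{\top} f \in \mathbb{R}^{N-r}. \]
(For complex $f$ we use the conjugate transpose and treat real and imaginary parts separately. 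Since the setup has $G$ mapping into $\mathbb{R}^N$, I will in effect restrict to real $f$, as in the streaming setting.) This gives the first step: $b$ determines the range-space part $f_{\Phi} = \boldsymbol{\Phi}^{\dagger} b$ exactly, and the only missing information is the $(N-r)$-dimensional coordinate vector $c$. This matches the dimension of $z$.

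The second step is to realize $c$ as a function of a Gaussian $z \in \mathbb{R}^{N-r}$. For a fixed deterministic $f$, I would simply note that the existence claim allows choosing $z$. For instance, take any measurable bijection $T$ of $\mathbb{R}^{N-r}$ with $T(z) = c$ for the realized $z$; an affine shift $T(z) = z - z_0 + c$ suffices. The more meaningful version, which I would state as the real content, treats $f$ as random with conditional law $p(f\mid b)$. The null-space coordinates then have a conditional law $p(c\mid b)$ on $\mathbb{R}^{N-r}$. If that law is absolutely continuous, the Knothe--Rosenblatt rearrangement (or Brenier map) gives a map $T_b$, invertible almost everywhere, that pushes $\mathcal{N}(0,I_{N-r})$ forward to $p(c\mid b)$. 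We then set $z = T_b^{-1}(c)$. This $z$ is standard Gaussian and independent of $b$, and $c = T_b(z)$.

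The third step defines
\[ G([b,z]) = \boldsymbol{\Phi}^{\dagger} b + V_{N-r} T_b(z). \]
I will verify three properties. First, $G$ is consistent: $\boldsymbol{\Phi} G = \boldsymbol{\Phi}\boldsymbol{\Phi}^{\dagger} b = b$, because $b$ lies in the range of $\boldsymbol{\Phi}$. Second, $G([b,z]) = f$. Third, $G$ is injective in $z$ for fixed $b$, since $V_{N-r}$ has orthonormal columns and $T_b$ is invertible. Together these make $f \mapsto (b,z)$ a bijection between $\mathbb{R}^N$ and the pairs with $b \in \operatorname{range}\boldsymbol{\Phi}$. That is the sense in which $G([b,z])$ uniquely determines $f$. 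Learnability is argued by noting that $G$ is a composition of a fixed linear map with a conditional invertible transport. Such maps lie in the closure of conditional normalizing flows or INNs, which are universal approximators of diffeomorphic transports.

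The main obstacle is the second step in the random setting. The conditional law of $c$ may be degenerate: sparse or heavy-tailed $f$ can concentrate mass on lower-dimensional sets, and then no invertible transport from a Gaussian exists. I would handle this by weakening the claim to a measurable (non-invertible) pushforward, which always exists by the isomorphism theorem for standard Borel spaces. Alternatively, I would add an arbitrarily small Gaussian dequantization to $f$ to restore absolute continuity. In the write-up I would present the deterministic existence statement as the proof and the distributional version as its interpretation.
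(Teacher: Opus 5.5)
Your proposal is correct and takes essentially the paper's route: an SVD range/null-space split in which $b$ fixes $f_{\Phi}$ and the $N-r$ null-space coordinates $\boldsymbol{\phi}^{\top} f$ are carried by $z$. You fix $f_{\Phi}$ via $\boldsymbol{\Phi}^{\dagger} b$, where the paper uses the correspondences $b \leftrightarrow \bar{b} = \boldsymbol{U}_{\leq r}^{\top} b \leftrightarrow f_{\Phi}$. Your explicit shift/transport map is a more precise version of the paper's ``any fixed vector is the mean of some Gaussian'' plus universal-approximation step. The Knothe--Rosenblatt version genuinely strengthens the result, giving one $G$ for the whole posterior with $z$ independent of $b$. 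However, for integer-valued frequencies $p(c \mid b)$ is discrete, so there you would need a randomized coupling rather than $z = T_b^{-1}(c)$.
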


\vspace{-6mm}
Given the high-level view of the mapping, the next challenge is to maximize the log-likelihood of its outputs. To this end, we consider four classes of deep generative paradigms: variational autoencoder (VAE,~\citealp{kingma2013auto}), generative adversarial network (GAN,~\citealp{goodfellow2014generative}), diffusion model (DM,~\citealp{ho2020denoising}), and flow-based generative model (FGM,~\citealp{rezende2015variational}). See our analyses on synthetic data streams with $1 \sim 100$K distinct keys in Appendix~\ref{app:gm_select} for differences in sketching performance between the above-listed GMs. Figure~\ref{fig:radar} presents a synopsis of the examination results. Importantly, we have three main observations: (i) GANs suffer from notorious sensitivity to adversarial training and a brittle optimization process even if we introduce the Wasserstein optimization~\cite{arjovsky2017wasserstein}. (ii) VAEs rely on a loose surrogate objective, leading to poor performance as the size of the data stream increases (e.g., beyond $10$K distinct keys). (iii) Although DMs are robust and enjoy stable training, their sampling complexity is prohibitively high, typically requiring several seconds, which makes them unsuitable for real-world applications. 
We finally adopt FGMs as the underlying model and build our designs upon them, as discussed later.




\vspace{-2mm}
\subsection{\texttt{FLORE}: \texttt{FL}ow-based \texttt{O}rthogonal \texttt{RE}covery}
\label{subsec:flore}

\vspace{-0.6mm}
Below, we present the designs of our proposal, \texttt{FL}ow-based \texttt{O}rthogonal \texttt{RE}covery (or \texttt{FLORE} for short), which extends the classical FGM for sketching problems. Due to length limitations, more implementation details (e.g., data structures and summary algorithms) are included in Appendix~\ref{app:flore}.

\begin{figure}
    \centering
    \setlength{\belowcaptionskip}{-0.5cm}
    \includegraphics[width=1.\linewidth]{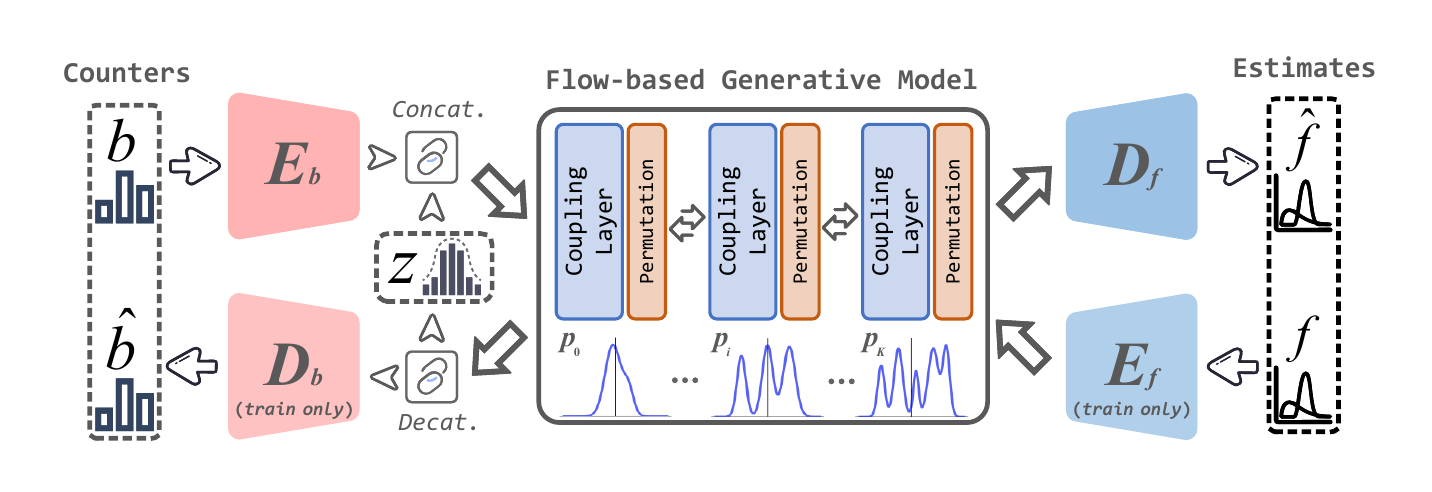}
    \caption{\textbf{Arthitecture of \texttt{FLORE}.} To enable lightweight training, \texttt{FLORE} is modeled as an invertible mapping in the latent space.}
    \label{fig:flow_arch}
\end{figure}

\vspace{-0.6mm}
\textbf{Design I: invertible architecture.} 
As with classical FGMs, \texttt{FLORE} leverages invertible neural networks (INNs) for distribution learning. The INN typically consists of a sequence of transformations $\left\{T_i\right\}$ that are bijective with tractable Jacobian determinants, yielding the composite mapping
\vspace{-2mm}
\[
    f = z_K = T_K  \circ T_{K-1}  \circ \cdots  \circ T_1({z_0}), \quad z_0 = [b, z].
\]

\vspace{-3.6mm}
Similar to DMs~\cite{rombach2022high}, such a chain preserves dimensionality and thus precludes intrinsic dimension reduction within INNs. Consequently, modeling large-scale data streams (with tens of thousands of unique items) makes them prone to spending excessive amounts of compute resources. So in contrast to purely INN-based approaches, our framework introduces two autoencoders to project $b$ and $f$ into lower-dimensional representations (latent space) respectively, which scales more gracefully to higher-dimensional data. 
The overall architecture is shown in Figure~\ref{fig:flow_arch}, where the GM learns the semantic and conceptual structure of the data. The coupling operations in \texttt{FLORE} follow NICE~\cite{dinh2014nice}, with dimension shuffling applied after each layer to enhance expressive capacity.

\vspace{-1.2mm}
\textbf{\textit{Note}}: When the key set \(\mathcal{I}\) needs online collection, the data dimensionality grows dynamically. To handle this, we partition the model into conditioned sub-invertible-functions that share parameters but remain decoupled across segments, such that newly added dimensions minimally affect earlier intervals. This design further reduces model complexity and enables scalable real-time adaptation; its details can be found in Appendix~\ref{subsec:structure}.

\vspace{-0.6mm}
\textbf{Design II: training objective.} We next design a training strategy that enforces both the measurement consistency and the fidelity condition in \texttt{FLORE}. Specifically, the overall objective consists of three types of loss terms as follows.

\vspace{-1.2mm}
\textbf{\textcircled{\scriptsize 1}} For aligning with the observation, there are two loss terms: \textit{consistency} loss $\mathcal{L}_{\mathbf{con}}$ and \textit{reconstruction} loss $\mathcal{L}_{\mathbf{rec}}$. They penalize the deviations between $b$ and $f$ respectively. Let $f_G := G([b,z])$, then we can write
\vspace{-2mm}
\[
    \mathcal{L}_{\mathbf{con}} = \mathbb{E} \left[ \left( \boldsymbol{\Phi}f_G - b \right)^2 \right], \quad \mathcal{L}_{\mathbf{rec}} = \mathbb{E} \left[ \left( f_G - f \right)^2 \right].
\]

\vspace{-3.6mm}
\textbf{\textcircled{\scriptsize 2}} For learning the distribution, we also have two loss terms: \textit{invertibility} loss $\mathcal{L}_{\mathbf{inv}}$ and \textit{orthogonal} loss $\mathcal{L}_{\mathbf{ort}}$. The former forces the INN to remain its invertibility by
\vspace{-2mm}
\[
    \mathcal{L}_{\mathbf{inv}} = \mathbb{E} \left[ \left( f -G\left(G^{-1}(f) \right) \right)^2 \right].
\]

\vspace{-3.6mm}
While the latter embraces the theoretical analysis presented in \S\ref{subsec:fail}, which aims to make the distributions of $z$ and $b$ as orthogonal as possible, i.e., $q(b, z) = p_B(b)p_Z(z)$. 
Using a discrepancy metric $\mathcal{D}$ (e.g., KLD~\cite{kullback1951kullback} or MMD~\cite{gretton2012kernel}), we define it as 
\vspace{-2mm}
\[
    \mathcal{L}_{\mathbf{ort}} = \mathcal{D} \left(  q(b,z) \parallel p_B(b)p_Z(z) \right).
\]

\vspace{-3.6mm}
\textbf{\textit{Note}}: Here we do not incorporate the traditional negative log-likelihood loss, i.e., $\log p(z_0) - \log\sum_{i=1}^K \left|\det \frac{dT_i}{dz_{i-1}}\right|$, into our objective. Empirically, it tends to slow down model convergence; theoretically, Theorem~\ref{thm:convergence} below shows that the aforementioned losses alone suffice to yield an unbiased estimation of the target (conditional) distribution. 

\vspace{-0.6mm}
\begin{theorem}
    If the designed loss terms, i.e., $\mathcal{L}_{\mathbf{con}}$, $\mathcal{L}_{\mathbf{inv}}$ and $\mathcal{L}_{\mathbf{ort}}$ vanish, the invertible mapping $G: \left[b, z\right] \leftrightarrow f$ applied to the $p(b, z)$ recovers the true posterior $p(f \mid b)$. 
    \label{thm:convergence}
\end{theorem}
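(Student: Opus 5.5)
The plan follows the standard argument used for invertible networks trained with forward and latent losses (in the style of analyses of INNs for inverse problems), specialized to the range/null decomposition of Proposition~\ref{prop:failure_guarantee}. Write $x = [b,z]$ for the input of $G$, with $p_X(x) = p_B(b)\,p_Z(z)$ the target product density. The real data density $p(f)$ on $f$ is pushed forward by $G^{-1}$ to an induced density $q(b,z)$.

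\emph{Step 1: exact bijection and data consistency.} Since $\mathcal{L}_{\mathbf{inv}}=0$ almost surely on the data distribution, $G\circ G^{-1}=\mathrm{id}$ on the support of $p(f)$. Hence $G$ restricted to $G^{-1}(\mathrm{supp}\,p)$ is a bijection onto that support. The change-of-variables formula then gives $q(b,z) = p(G(b,z))\,|\det J_G(b,z)|$, where the Jacobian is finite by the coupling construction of Design I. Vanishing $\mathcal{L}_{\mathbf{con}}$ means $\boldsymbol{\Phi}G([b,z]) = b$ almost surely. So the $b$-coordinate of $G^{-1}(f)$ equals $\boldsymbol{\Phi}f$, which is exactly the observed sketch. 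Consequently the $b$-marginal of $q$ equals the true law $p_B$ of $b=\boldsymbol{\Phi}f$.

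\emph{Step 2: the latent part carries the null-space information with the right law.} $\mathcal{L}_{\mathbf{ort}}=0$ with a proper divergence $\mathcal{D}$ (KLD, or MMD with a characteristic kernel) forces $q(b,z)=p_B(b)p_Z(z)$. In particular, under $q$ the latent $z$ is independent of $b$ and distributed as $p_Z$. By Theorem~\ref{thm:mapping}, $z$ together with $b$ determines $f$ uniquely. Here $z$ plays the role of the coordinates of the irrecoverable component $f_N$, which is orthogonal to $f_\Phi$.

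\emph{Step 3: conditioning.} Fix $b$ and sample $z\sim p_Z$, which by Step~2 is the same as $z\sim q(z\mid b)$. Let $A$ be any measurable set of $f$'s. Then
\[
\Pr_{z\sim p_Z}\bigl(G([b,z])\in A\bigr) = \Pr_{q}\bigl(G(x)\in A \mid b\bigr) = \Pr_{p}\bigl(f\in A \mid \boldsymbol{\Phi}f=b\bigr).
\]
The second equality holds because the pair $(b,z)$, distributed under $q$, is exactly $G^{-1}(f)$ with $f\sim p$ (Step~1), and because the event $\{x : b\text{-coordinate}=b\}$ corresponds under $G$ to $\{f:\boldsymbol{\Phi}f=b\}$. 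Therefore the pushforward of $p_Z$ under $G([b,\cdot])$ is $p(f\mid b)$, which is the claim.

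\emph{Main obstacle.} The delicate point is the conditioning in Step~3. Conditioning on the measure-zero event $\{\boldsymbol{\Phi}f=b\}$ must be made rigorous, and the disintegration of $p$ must match that of $q$ under $G$. I would handle this through densities rather than events. Write $p(f\mid b) = p(f)\,\mathbf{1}[\boldsymbol{\Phi}f=b]/p_B(b)$ with respect to the surface measure on the fiber $\{\boldsymbol{\Phi}f=b\}$. Then use that $G$ maps each slice $\{b\}\times\mathbb{R}^{N-r}$ diffeomorphically onto that fiber (Steps 1–2). The factorization $q=p_Bp_Z$ then yields the conditional density $p_Z(z)$ on the slice, which transports to $p(\cdot\mid b)$ on the fiber. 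A secondary subtlety is that the statement is only at the population level. Zero loss is assumed almost everywhere on the supports, so all equalities hold $p_B$-almost surely in $b$, and I would state the conclusion in that sense.
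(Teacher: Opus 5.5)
Your proposal is correct and follows essentially the paper's own argument: vanishing $\mathcal{L}_{\mathbf{ort}}$ gives $q=p_Bp_Z$, vanishing $\mathcal{L}_{\mathbf{con}}$ and $\mathcal{L}_{\mathbf{inv}}$ force the $b$-coordinate of $G^{-1}(f)$ to equal $\boldsymbol{\Phi}f$, and independence then makes conditioning on $b$ the same as sampling $z\sim p_Z$; your disintegration step is a more careful version of the paper's $\delta(b'-b)p_Z(z)$ plus change-of-variables computation. One ordering fix is needed: $\mathcal{L}_{\mathbf{con}}=0$ only gives $\boldsymbol{\Phi}G([b,z])=b$ almost surely under $p_Bp_Z$, and carrying this over to the points $G^{-1}(f)$ with $f\sim p$ requires $q\ll p_Bp_Z$, so your Step~2 must come before that part of Step~1, as it does in the paper.
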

\vspace{-0.6mm}

\vspace{-1.2mm}
\textbf{\textcircled{\scriptsize 3}} To instill \textit{sparsity} into \texttt{FLORE}'s recovery (i.e., the idea of \texttt{CS}), we append a regularization term $\mathcal{L}_{\mathbf{sp}} = \left\|f_G\right\|_1$.

\vspace{-0.6mm}
Putting it together, we train \texttt{FLORE} by minimizing the loss
\vspace{-2mm}
\[
   \mathcal{L}_{\mathbf{total}} = \mathcal{L}_{\mathbf{con}} + \alpha_1 \mathcal{L}_{\mathbf{rec}} + \alpha_2 \mathcal{L}_{\mathbf{inv}} + \alpha_3 \mathcal{L}_{\mathbf{ort}} + \alpha_4 \mathcal{L}_{\mathbf{sp}},
\]

\vspace{-3mm}
where $\alpha_{(\cdot)}$ are trade-off parameters controlling the relative contributions of individual loss terms.
After training, the learned generator $G$ can be directly deployed to recover the streaming ground truth $f$ from sampled inputs $[b, z]$, for both training data and previously unseen observations.


\vspace{-1.5mm}
\begin{algorithm}[H]
\caption{(Pytorch) Training pseudocode for \texttt{FLORE}}
\label{alg:training}
\vspace{-1.5mm}
\begin{lstlisting}[language=Python]
# Phi: linear sketching matrix (Count-Min) 
# G.forw, G.back: forward and reverse process of FGM G
# a: list of trade-off parameters (alphas)

for b in loader: # load a minibatch from the stream
    # approximate GT via EM (can be pre-computed)
    f = EM_OPT_CM(b, Phi)  # CM as the start point
    z = Gaussian(0,I)      # sample the prior z
    f0 = G.forw([b,z])     # reconstruct f from b
    f1 = G.forw([Phi@f,z]) # reconstruct f from Phi@f
    f2 = G.forw(G.back(f)) # reconstruct f from itself
    bz = G.back(f0)        # get joint distribution
    
    # training losses
    loss = L2Loss(Phi@f0, b)          # consistency
         + a[0]*L2Loss(f1, f)         # reconstruction
         + a[1]*L2Loss(f2, f)         # invertibility
         + a[2]*MMD_or_KLD([b,z], bz) # orthogonality
         + a[3]*L1Loss(f0)            # sparsity

    loss.backward()  # compute the gradient
    update(G.params) # update G network
\end{lstlisting}
\vspace{-1.5mm}
\end{algorithm}
\vspace{-3mm}

\vspace{-0.6mm}
\textbf{Design III: GT-free learning.}
It's extremely hard to construct datasets that capture all aspects of very large databases in high-speed environments or testbeds. Hence, a particular challenge in our context is to train \texttt{FLORE} without ground truth (only counters $b$ is available). Fortunately, taking into account the recent history of these counters, we're able to automatically infer hidden GT patterns. The underlying sketch rules, e.g., Count-Min and Count, give a natural inductive bias for model learning~\cite{cheng2023alsketch}. Nonetheless, naively substituting the true $f$ in the original objective by querying the sketch can destabilize the training procedure, because these estimations are often quite coarse (see \S\ref{subsec:lim}). 

\vspace{-1mm}
To address this issue, we connect the approximation problem with maximum likelihood estimation~\cite{vardi1993image}. Specifically, we can apply an EM algorithm to refine estimates obtained from the underlying sketch, and this results in the iterations described in Theorem~\ref{thm:em}.

\vspace{-0.6mm}
\begin{theorem}
    Minimizing the Kullback–Leibler divergence $\mathcal{D}_{\text{KL}}\left(p\left(f \mid b) \parallel p(f^{(\cdot)}\right)\right)$ yields the following EM update: $f_i^{(n)} = f_i^{(n-1)} \sum_j \frac{\boldsymbol{\Phi}^\top_{i,j}}{(\boldsymbol{\Phi}f^{(n-1)})_j} b_j$, initialized with $f_i^{(0)}>0$. 
    \label{thm:em}
\end{theorem}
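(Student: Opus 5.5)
We read Theorem~\ref{thm:em} as the classical Shepp--Vardi/Richardson--Lucy EM derivation for a Poisson-type linear inverse problem. We model the counters as $b_j \sim \mathrm{Poisson}\bigl((\boldsymbol{\Phi} f)_j\bigr)$, or equivalently view normalized $b$ and $\boldsymbol{\Phi}f$ as distributions over counters. Minimizing $\mathcal{D}_{\text{KL}}$ between the observed summary and the model prediction is then, up to constants independent of $f$, the same as maximizing the log-likelihood
\[
\ell(f)=\sum_j \Bigl( b_j \log (\boldsymbol{\Phi} f)_j - (\boldsymbol{\Phi} f)_j \Bigr).
\]
The first step is to state this equivalence explicitly, so that the KL objective in the statement is identified with this likelihood. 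We also note that the columns of a Count-Min matrix all have the same sum $k$, since each key is hashed into $k$ counters. This column-sum constant only rescales the update, and we absorb it into the normalization, or equivalently assume normalized columns $\sum_j \boldsymbol{\Phi}_{j,i}=1$.

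Next we introduce complete data. Let $y_{ij}$ be the latent contribution of key $i$ to counter $j$, with $y_{ij}\sim\mathrm{Poisson}(\boldsymbol{\Phi}_{j,i} f_i)$ independent and $b_j=\sum_i y_{ij}$. The complete-data log-likelihood is
\[
\sum_{i,j}\bigl(y_{ij}\log(\boldsymbol{\Phi}_{j,i}f_i)-\boldsymbol{\Phi}_{j,i}f_i\bigr).
\]

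\textbf{E-step.} Conditional on $b_j$ and the current iterate $f^{(n-1)}$, the vector $(y_{ij})_i$ is multinomial. Hence
\[
\mathbb{E}[y_{ij}\mid b,f^{(n-1)}]=\frac{\boldsymbol{\Phi}_{j,i}f_i^{(n-1)}}{(\boldsymbol{\Phi}f^{(n-1)})_j}\,b_j .
\]

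\textbf{M-step.} The expected complete log-likelihood separates across $i$. Setting the derivative with respect to $f_i$ to zero gives
\[
f_i=\frac{\sum_j \mathbb{E}[y_{ij}]}{\sum_j\boldsymbol{\Phi}_{j,i}}.
\]
Under the column normalization this is exactly $f_i^{(n)}=f_i^{(n-1)}\sum_j \boldsymbol{\Phi}^\top_{i,j}b_j/(\boldsymbol{\Phi}f^{(n-1)})_j$.

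Finally, we check well-posedness. Positivity of $f^{(0)}$, together with $b\ge 0$, is preserved by the update, so the denominators never vanish on counters with $b_j>0$. If a monotonicity statement is also wanted, the standard Jensen/auxiliary-function argument gives $\ell(f^{(n)})\ge\ell(f^{(n-1)})$, i.e., the KL objective is non-increasing; it can be derived from concavity of $\log$ applied with weights $\boldsymbol{\Phi}_{j,i}f_i^{(n-1)}/(\boldsymbol{\Phi}f^{(n-1)})_j$.

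The main obstacle is the first step: making precise which KL divergence the statement refers to, since ``$p(f\mid b)\parallel p(f^{(\cdot)})$'' is informal. We will interpret it as $\mathcal{D}_{\text{KL}}(b\parallel\boldsymbol{\Phi}f)$ in the generalized (unnormalized) sense. We will justify this interpretation by noting that the generalized KL differs from $-\ell(f)$ only by terms constant in $f$, and we will state the normalization assumption on the columns of $\boldsymbol{\Phi}$ explicitly.
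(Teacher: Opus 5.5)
Your proposal is correct and is essentially the paper's argument: both identify the KL objective with a log-likelihood, introduce the latent allocation of counter mass to keys, and obtain the E-step responsibilities $\boldsymbol{\Phi}_{j,i}f_i^{(n-1)}/(\boldsymbol{\Phi}f^{(n-1)})_j$. The difference is that you work with the unnormalized Poisson model and generalized KL, whereas the paper rescales to probability vectors $g=b/\sum_j b_j$, $h_{i,j}\propto\boldsymbol{\Phi}^\top_{i,j}$ and runs EM for a finite mixture with known components under $\mathcal{D}_{\text{KL}}(g\parallel H^\top f)$, which is the same model after conditioning on the total count. Your explicit factor $1/\sum_j\boldsymbol{\Phi}_{j,i}$ ($=1/k$ for Count-Min) is what the paper handles through its final rescaling and what reappears in its Remark and Algorithm~\ref{alg:em}. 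Stating the column normalization up front, or equivalently reading the displayed update as EM for $kf$, as you do, is what makes the statement literally correct.
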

\vspace{-1mm}



\vspace{-0.6mm}
For a more visible connection and the corresponding proof, please refer to Appendix~\ref{subsec:em}. If the given linear sketching problem does not have a non-negative solution, then EM algorithm yields the closest approximation in terms of Kullback–Leibler divergence. Algorithm~\ref{alg:training} depicts the final training procedure of \texttt{FLORE} in a Pytorch-like style.

\vspace{-1.2mm}
\textbf{\textit{Note}}: One may ask whether the refinement can be directly applied to linear sketching problems. Although our ablation studies in Appendix~\ref{subsubsec:ab_opt} show that it can greatly reduce the error of conventional sketches like CM, this iterative procedure is often prohibitively expensive. In our implementation, the refinement is performed only 3–5 steps as a one-time preprocessing operation.

\vspace{-1.2mm}
\textbf{Design IV: stream filtering.} In the light of previous studies~\cite{roy2016augmented,yang2018elastic,kraska2018case}, we also separate frequent elements as key-value pairs and infrequent elements in the sketch. It has been proven that such separation can be realized with limited overheads, and potentially improve summary efficiency. In the data plane of \texttt{FLORE}, we employ a \textit{model-free} stream filtering technique to achieve it, based on a heuristic mechanism named \textit{Ostracism}. As a result, generative recovery is exclusively applied to the light sketch part (i.e., a Count-Min) before merging it back into the dedicated part. Algorithm details are deferred to Appendix~\ref{subsec:filter} owing to space constraints.




\vspace{-2mm}
\section{Evaluation}
\label{main_sec:exps}

\vspace{-0.6mm}
In this section, we evaluate the proposed approach to demonstrate that it achieves the SOTA sketching performance on  (i) recovering per-element frequency (\S\ref{subsec:pfe}); (ii) finding heavy hitters (\S\ref{subsec:hhd}); (iii) estimating distribution and entropy (\S\ref{subsec:ede}); and (iv) incurs limited stream processing overhead (\S\ref{subsec:ps}). Note that our proposal has two variants: \texttt{FLORE} and \texttt{FLORE} (perfect). The former applies when $\mathcal{I}$ is unknown and therefore requires allocating an additional portion of data-plane memory to deploy Bloom Filter (see note in \S\ref{subsec:formu}), whereas the latter does not incur this requirement.

\vspace{-1.2mm}
\textbf{Datasets.} We utilize {\textit{five}} different real-world datasets for simulations, in which the number of distinct keys ranges from 10K to 160K. These include two Internet packet traces (\textit{CAIDA} and \textit{MAWI}) and three real-life data streams (\textit{Webdocs}, \textit{Kosarak}, and \textit{Retail}). Further details are provided in Appendix~\ref{subsec:dataset}. In addition, we synthesize {\textit{five}} data streams following different distributions (e.g., Zipf, Pareto, exponential, and log-normal) and report the results in Appendix~\ref{subsec:syn_exp}.

\vspace{-1.2mm}
\textbf{Baselines.} We compare our proposal to {\textit{nine}} SOTA streaming solutions, covering overall four mainstream sketch techniques: two classical sketches (\textit{CM} and \textit{CS}); two classical optimizations (\textit{AG} and \textit{CU}); two \texttt{CS}-based sketches (\textit{PR} and \textit{NZE}); and three learning-augmented sketches (\textit{LCM}, \textit{LCS}, and \textit{LS}). Their specific descriptions and configurations are detailed in Appendix~\ref{subsec:bl}.

\vspace{-1.2mm}
\textbf{Metrics.} To assess sketching performance, we employ \textit{AAE} and \textit{ARE} to quantify estimation error, \textit{WMRE} and \textit{Entropy AE} to measure distribution and entropy fidelity, and \textit{F1 Score} to evaluate heavy-hitter accuracy. Except for the F1 Score, lower values indicate better performance across all metrics. For formal definitions, please refer to Appendix~\ref{subsec:metric}.

\vspace{-2mm}
\subsection{Per-Element Frequency Estimation}
\label{subsec:pfe}

\begin{figure*}
    \centering
    \setlength{\belowcaptionskip}{-0.05cm}
    \includegraphics[width=1.\linewidth]{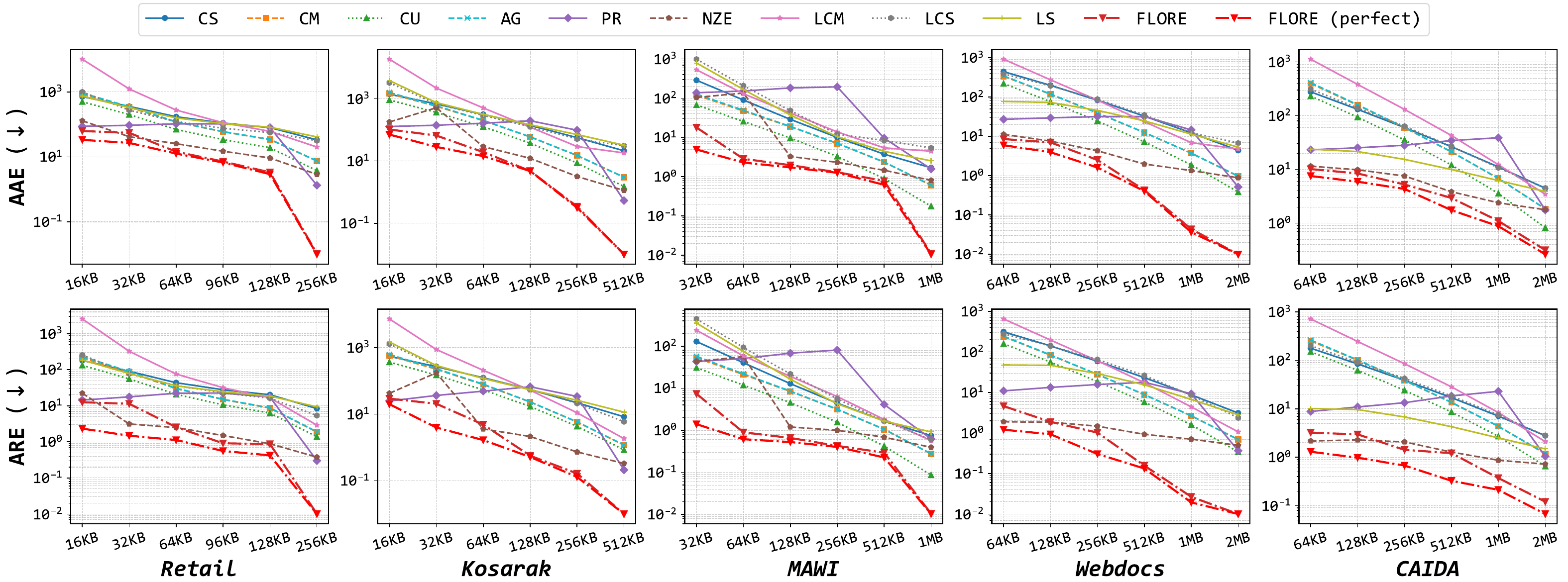}
    \caption{Comparison of per-element frequency estimation accuracy across five real-world datasets.}
    \label{fig:acc_compare}
\end{figure*}

\begin{figure*}
    \centering
    \setlength{\belowcaptionskip}{-0.45cm}
    \includegraphics[width=1.\linewidth]{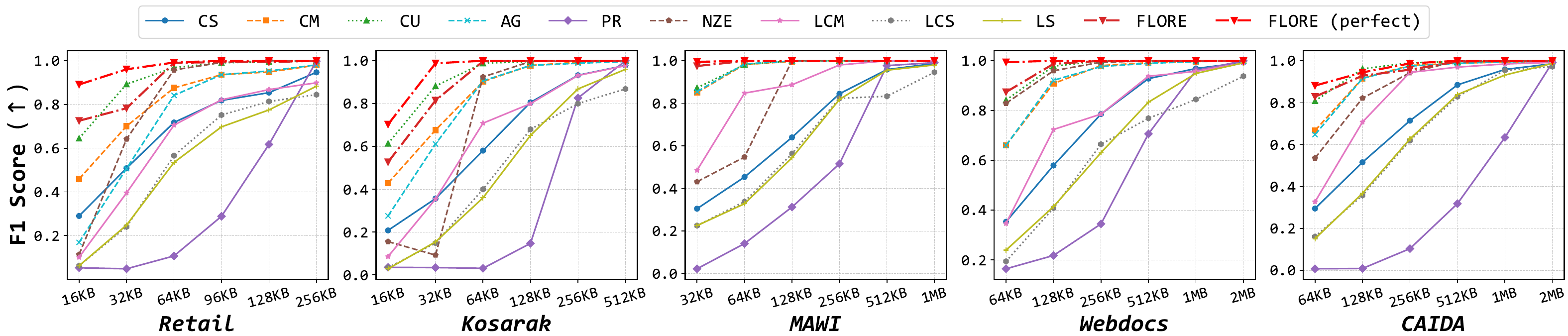}
    \caption{Comparison of heavy hitter detection accuracy across five real-world datasets.}
    \label{fig:hhd_compare}
\end{figure*}

\vspace{-0.6mm}
We first evaluate per-element frequency estimation for both frequent and infrequent items, with results shown in Figure~\ref{fig:acc_compare} (logarithmic scale). Generally, accuracy improves as memory increases. The baseline CM and its optimizations (AG, CU) provide stable improvements, whereas CS typically lags behind CM except under small memory usage. Learning-augmented sketches (LCM, LCS) reveal a clear trade-off: they struggle in resource-constrained settings due to the high memory overhead of auxiliary structures (buckets/classifiers). While LS mitigates this via truncation, its benefits diminish as memory expands. In contrast, \texttt{CS}-based methods (PR, NZE) achieve remarkable precision under high compression, which beat CM by two orders of magnitude, though their performance is non-monotonic due to the under-determined recovery process. We also observe that \texttt{FLORE} outperforms almost all methods by several orders of magnitude. At a 256KB budget, it reduces AAE and ARE significantly, consistently surpassing CM by over $400\times$, and usually dominates PR and NZE even without GT data for its learning. Meanwhile, \texttt{FLORE} (perfect) marginally outperforms the standard \texttt{FLORE}, as expected.

\vspace{-2mm}
\subsection{Heavy-Hitter Detection}
\label{subsec:hhd}

\vspace{-0.6mm}
Next, we report the results for heavy-hitter detection, where the top $\log N$ items are defined as heavy hitters. Figure~\ref{fig:hhd_compare} compares the F1 Score of all solutions in heavy hitter detection under different memory sizes. Since only frequent elements are counted here, we observe the following phenomenon. First, CM and CU perform exceptionally well due to their inherent non-negativity. While AG employs an augmented filter to accurately track a small subset of hot items (e.g., $\leq100$ positions), its overall improvement remains limited. Second, CS and its derivatives (LCS, LS) lack this non-negative property, resulting in inferior accuracy. Third, the performance of PR and NZE is strikingly reversed, with PR consistently yielding the lowest F1 Score. This is because these \texttt{CS}-based methods prioritize global estimation fidelity over the precise recovery of individual heavy hitters. In contrast, \texttt{FLORE} and \texttt{FLORE} (perfect) consistently outperform all these sketch techniques, maintaining a much higher F1 Score across the board. Specifically, on nearly all datasets, \texttt{FLORE} achieves near-perfect detection accuracy (90\%$\sim$100\%) with 64 KB of memory. This advantage becomes even more pronounced under fewer counters, where \texttt{FLORE} delivers up to a $4.5\times$ improvement in average.

\begin{figure*}
    \centering
    \setlength{\belowcaptionskip}{-0.2cm}
    \includegraphics[width=1.\linewidth]{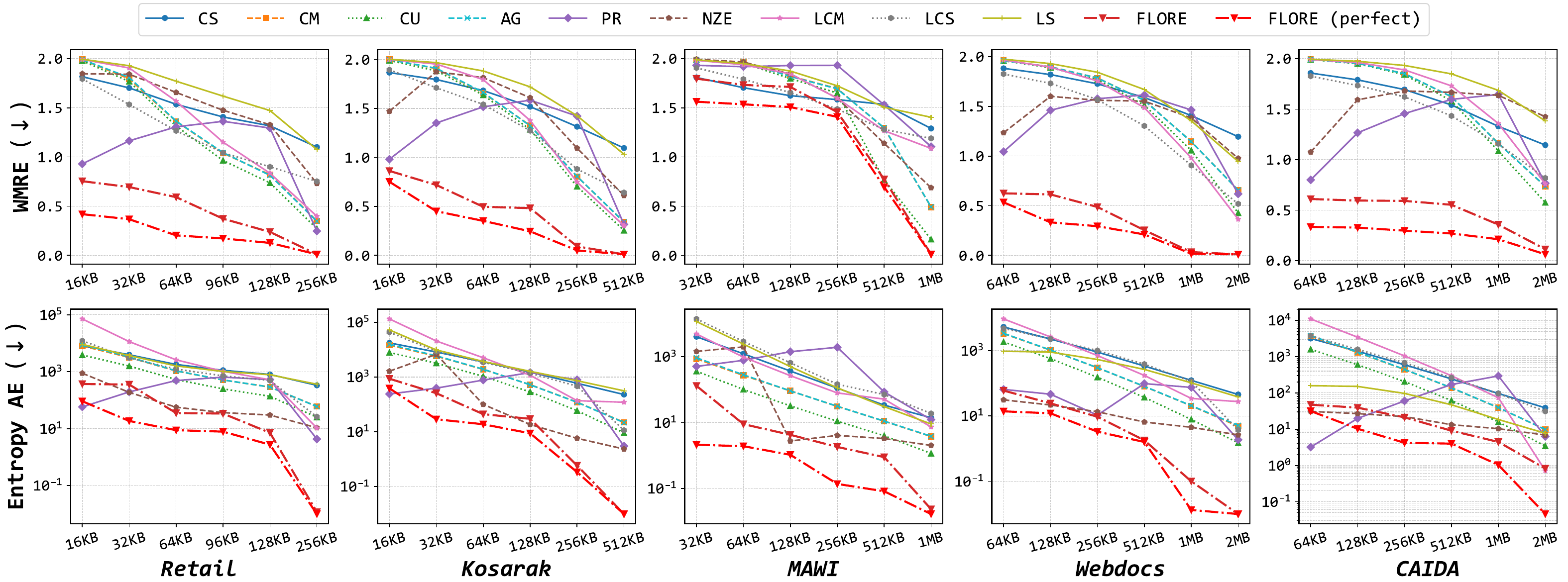}
    \caption{Comparison of distributional and entropy distances to the GT across five real-world datasets.}
    \label{fig:dist_compare}
\end{figure*}

\begin{figure*}
    \setlength{\abovecaptionskip}{-0.05cm} 
    \setlength{\belowcaptionskip}{-0.45cm}
    \subfigure[CPU summary processing speed in natural streams]{
        \centering
        \includegraphics[width=0.3925\linewidth]{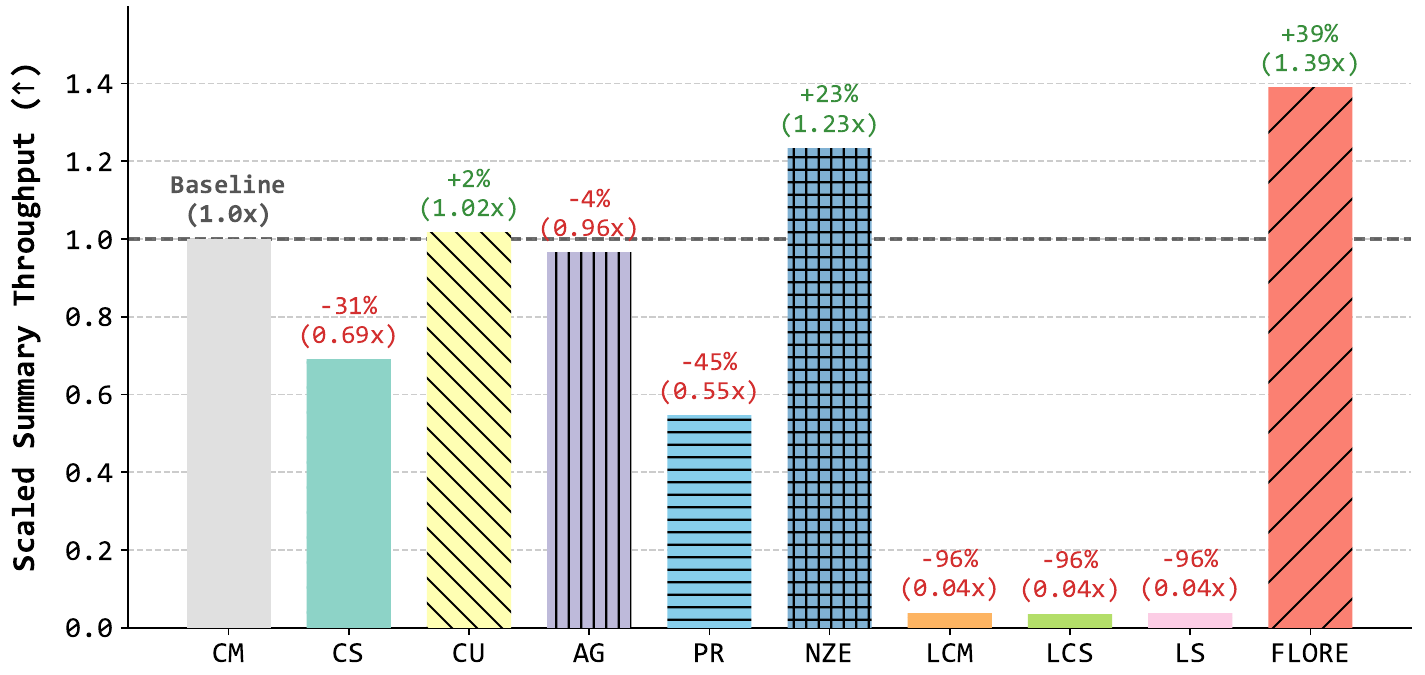}
    }
    \hfill
    \subfigure[GPU recovery (for all keys) processing speed]{
        \centering
        \includegraphics[width=0.3925\linewidth]{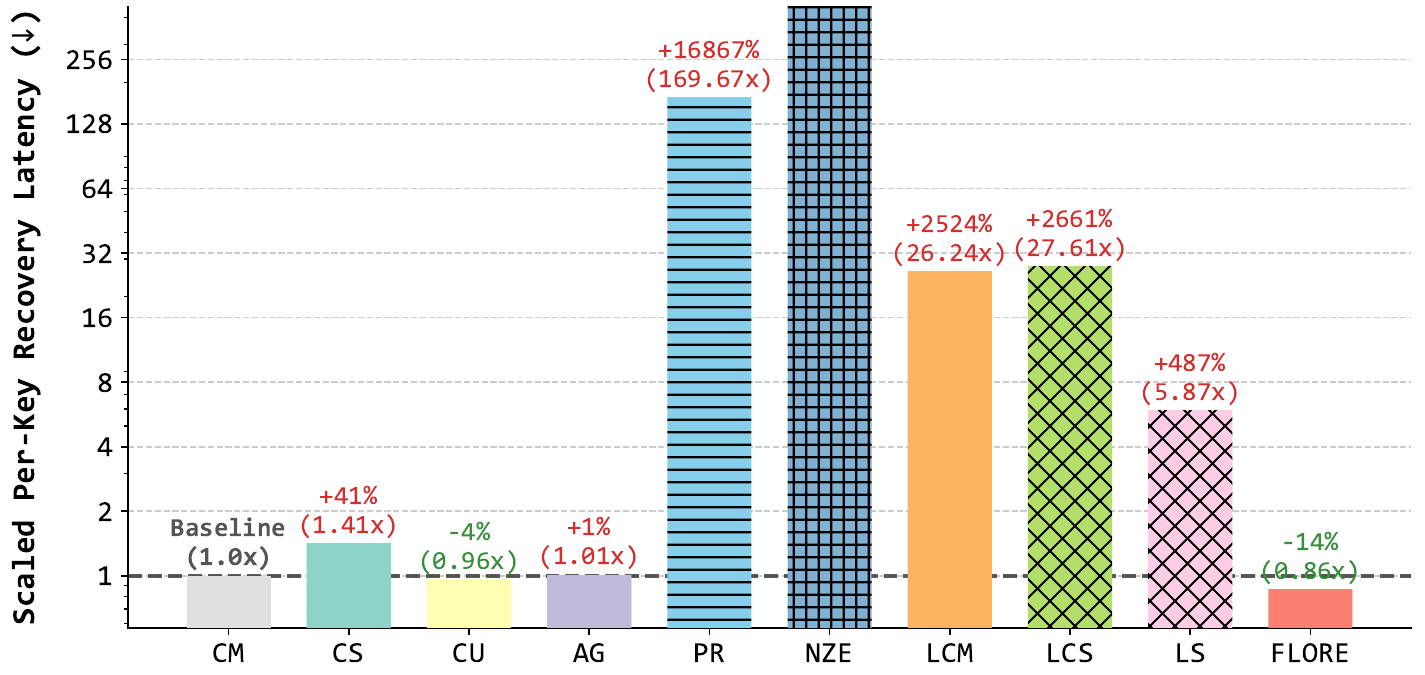}
    }
    \hfill
    \subfigure[Speed v.s. Platform]{
        \centering
        \includegraphics[width=0.173\linewidth]{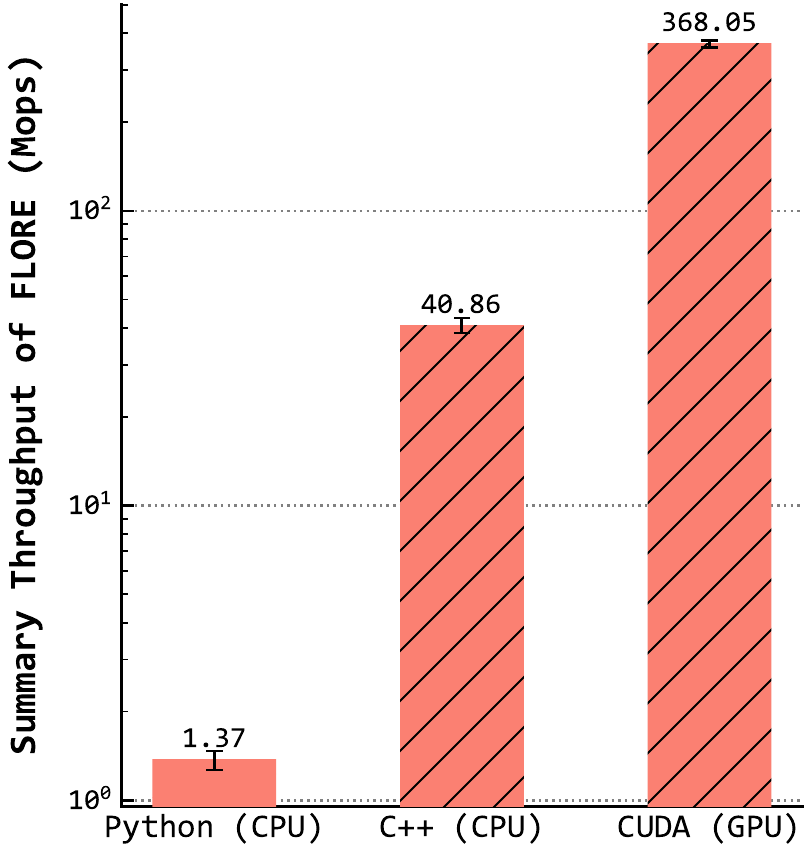}
    }
    \caption{Comparison of stream processing efficiency on CAIDA with 1MB memory (\textcolor{teal}{Green} means better than the baseline, CM).}
    \label{fig:speed_compare}
\end{figure*}

\vspace{-2mm}
\subsection{Estimation of Distribution \& Entropy}
\label{subsec:ede}

\vspace{-0.6mm}
We further evaluate the capacity of each sketch technique to model the underlying stream distribution, representing the fidelity of the recovery. As the 1st row of Figure~\ref{fig:dist_compare} shows, an impressive performance disparity can be observed: \texttt{FLORE} achieves a dramatically lower distance to the real distribution compared to all baselines, despite the absence of GT data during training. Taking the CAIDA dataset as an example, \texttt{FLORE} with a mere 64KB memory budget achieves a level of fidelity that other sketches fail to match even with 2MB, which represents a $32\times$ reduction in memory footprint for equivalent accuracy. This superiority stems from \texttt{FLORE}’s generative sketching paradigm, which explicitly learns and leverages the statistical structure of the data stream. Furthermore, our results reveal that per-element estimation accuracy is not necessarily indicative of global distribution fidelity. For instance, while \texttt{CS}-based methods may excel in point estimation, they struggle to faithfully recover the overall distribution.
A similar trend is evident in the second row of Figure~\ref{fig:dist_compare}, which illustrates the advantages of our approach in entropy estimation. Interestingly, PR’s estimates are highly accurate under extreme memory scarcity but degrade sharply before stabilizing once the system transitions from an under-determined to a well-determined regime. In contrast, \texttt{FLORE} maintains remarkably stable performance, with the AE consistently remaining near or below 10 on average. 
These findings demonstrate that \texttt{FLORE}’s recovery is both precise and statistically consistent with GT, marking a clear departure from prior works.


\vspace{-2mm}
\subsection{Processing Efficiency}
\label{subsec:ps}

\vspace{-0.6mm}
Figure~\ref{fig:speed_compare} presents the scaled stream processing speed relative to the CM baseline. We also implement and evaluate \texttt{FLORE} on various platforms, such as GPU using CUDA Toolkit~\cite{cuda}. The results indicate that learning-augmented sketches suffer from low efficiency due to the required classifier forward pass. Although LS limits the usage of Oracle (to 15\% of queries here), its performance is still suboptimal. \texttt{CS} decoding algorithms need over $10^2$ times longer recovery time to get a reliable result, which is unacceptably high. Conversely, \texttt{FLORE} incurs no overhead on the data-plane stream processing as it is used solely for recovery, and filtered stream with $\mathcal{O}(1)$ complexity, which outperforms the baseline in both summary and recovery phases, with gains becoming even more pronounced across different platforms. Specifically, the speed of \texttt{FLORE} on Cuda  and C++ is 29.8, 268.6 times higher than in Python.

\vspace{-2mm}
\subsection{Other Experiments}

\vspace{-0.6mm}
Additional experiments, such as further comparisons, ablation studies and visualizations, are deferred to Appendix~\ref{app:ser} due to space constraints.

\vspace{-2mm}
\section{Conclusion}

\vspace{-0.6mm}
This paper revisits sketching techniques and highlights their limitations. We find that under resource constraints, existing algorithms are fundamentally insufficient for near-perfect recovery. To bridge this gap, we introduce a new generative sketching paradigm that reframes the problem as posterior generation. These results guide us to design \texttt{FLORE} accordingly. The superior performance of \texttt{FLORE} validates the effectiveness of our methodology. 
We expect that more generative streaming algorithms can be designed in the future.


\newpage

\section*{Impact Statement}

This paper presents work whose goal is to advance the field of Machine Learning and Database Systems. All real-world data streams used in our evaluation are anonymized and do not contain user IP addresses or content-level information; therefore, this work does not raise ethical or privacy concerns. However, there are some other potential societal consequences of our work, none of which we feel must be specifically highlighted here.

\bibliography{main}
\bibliographystyle{icml2026}

\newpage
\appendix
\onecolumn



\part*{Supplementary Material} 

\section{Outline of Appendices}

For ease of organization, we divide the supplementary material into appendices as follows. 
\textbf{Appendix~\ref{app:works}} reviews the related works and positions our study within the existing literature. \textbf{Appendix~\ref{app:setup}} describes the experimental setup in detail, including datasets, baselines, and evaluation metrics. \textbf{Appendix~\ref{app:flore}} presents the implementation details of \texttt{FLORE}, covering INN architecture, data structures, algorithms, and so on. \textbf{Appendix~\ref{app:proofs}} provides the complete theoretical proofs omitted from the main text. \textbf{Appendix~\ref{app:gm_select}} provides a systematic analysis of mainstream generative models in the context of data streaming tasks. \textbf{Appendix~\ref{app:ser}} reports additional experimental results that further validate the effectiveness and robustness of our approach. Finally, \textbf{Appendix~\ref{app:future_works}} lists some potential applications of \texttt{FLORE} and discusses current limitations of our approach that we hope future research can address.

\section{Related Works}
\label{app:works}

\paragraph{Classical sketching algorithms} Sketches are classical streaming algorithms and compact data structures that support approximate queries over large-scale data. Because streaming data comes at them continuously, and in such volume, they try to record the essence of what they’ve seen while strategically forgetting the rest. For more than 30 years, computer scientists have worked to build a better sketching algorithm. Representative examples include Count-Min Sketch~\cite{cormode2005improved} and Count Sketch~\cite{charikar2002finding}, which enable fast updates and queries with provable accuracy bounds. 
These sketches typically maintain an array of \(m\) counters organized into \(k\) rows, together with \(k\) pairwise-independent hash functions that map each item to one counter per row. For instance, in Count-Min Sketch, each update increments the \(k\) counters indexed by the hashed positions of the item. To estimate an item’s frequency, the sketch hashes the queried item and returns the minimum value among the corresponding counters across all rows.
When increments are positive, a simple modification known as Conservative Update~\cite{estan2003new} leads to better accuracy in frequency estimation. It avoids unnecessary increments by first comparing stored counts for an item and then only incrementing the smallest ones~\cite{mazziane2024count}. Moreover, Augmented sketch~\cite{roy2016augmented} uses an extra elephant-flow recording structure to further improve accuracy. Due to the skewed nature of data streams, it can potentially provide even higher throughput. Based on this idea, Pyramid sketch~\cite{yang2017pyramid}, Elastic sketch~\cite{yang2018elastic}, HeavyGuardian~\cite{yang2018heavyguardian}, HeavyKeeper~\cite{yang2019heavykeeper}, and Dynamic Hierarchical Sketch~\cite{zhao2021dhs} all store the elephant and mice flow information separately in the same space. As these sketches use the hash value as the index, they can support \(\mathcal{O}(1)\) insertion and query, but they have suboptimal estimation accuracy of per-element frequency, which in turn greatly reduces the accuracy of other statistics. Additional sketches have been proposed for specialized tasks, such as burst/DDoS victim detection~\cite{zhong2021burstsketch,yu2013software}, graph summarization~\cite{tang2016graph}, reporting top-$k$ items~\cite{metwally2005efficient,li2020wavingsketch}, and super-spreader query~\cite{tang2020spreadsketch}. However, they are typically optimized for a narrow set of queries and rely on carefully engineered heuristics. By comparison, our proposal enjoys a tighter theoretical error bound while remaining applicable to a broad class of sketching tasks.

\vspace{-2mm}
\paragraph{Learning-based sketching algorithms} In recent years, researchers have attempted to use machine learning (ML) methods to improve sketch performance. This type of sketching algorithm was first explored by~\citealt{yang2018empowering} and~\citealt{kraska2018case}.
The former tries to learn task-specific representations, while the latter trains a classifier to distinguish frequent data items; the overall idea and architecture are consistent with Augmented Sketch. Specifically, the learning-augmented sketch uses neural networks (NN), e.g., an RNN~\cite{hsu2019learning}, to learn and infer whether an item is large and uses an additional lookup table to record large items. This general paradigm of leveraging ML to improve hashing-based sketches has inspired a series of follow-up works~\cite{zhang2020learned,jiang2020learning,du2021putting,yan2022talentsketch,cheng2023alsketch}. However, because a forward pass through an ML model is required prior to each hash operation, these methods often incur substantial computational overhead, resulting in significantly reduced stream processing throughput. This issue becomes more pronounced as model complexity increases; for example, a recent study~\cite{li2025llm} employs a large language model (LLM) as a learned ``oracle'', which is typically impractical for high-speed data streams. 
Although \citealt{da2022bayesian} and \citealt{aamand2023improved} partially mitigate this efficiency bottleneck by reducing or even eliminating the need for per-item model inference, a second fundamental challenge remains: the reliance on continuously collected ground-truth data to train the neural models. To address this limitation, Meta-sketch~\cite{cao2024learning} is the first learning-based sketch that avoids ground-truth supervision by performing meta-learning over synthetically generated Zipfian datasets. This GT-free learning paradigm has subsequently been adopted and refined in Mayfly~\cite{feng2023mayfly}, Lego-sketch~\cite{feng2025lego}, and RatioSketch~\cite{wang2026ratiosketch}. Nevertheless, beyond the open question of whether Zipfian assumptions consistently hold in real-world workloads, the neural architectures employed by these methods still exhibit much lower processing throughput compared to conventional, non-learning-based sketches. 
In contrast to the aforementioned sketches, our method is grounded in the theory of compressed sensing. It is both more practical and more efficient, as it requires no modification to the classical summary algorithm and relies only on sampled counters for training.

\vspace{-2mm}
\paragraph{Decoding-based measurement} The idea of decoding or recovery from coarse-grained, low-cost measurements obtained via linear systems can be traced back to Network Tomography~\cite{vardi1996network}. By inferring the full end-to-end traffic matrix from more measurable link loads, this technique sparked a surge of research interest in network measurement~\cite{zhang2003fast,zhang2003information,zhang2009spatio,bowden2011network,chen2014robust,mardani2015estimating,10659230,yuan2025diffusion}. However, traffic matrices are defined at the wide area network (WAN) level and are far coarser than five-tuple packet measurements via data sketching, making them unsuitable for fine-grained applications and thus beyond the scope of this work.
Turning to sketching algorithms, \citealt{lee2005improving} were the first to apply the Least Linear Square method to reconstruct flow values from Count-Min Sketch. While Counter-Braids~\cite{lu2008counter} highlighted the thematic connection between sketching and compressive sensing, it did not actually employ compressive sensing. In contrast, FlowRadar~\cite{li2016flowradar} and SketchVisor~\cite{huang2017sketchvisor} explicitly demonstrated that sketches can be formulated as a special case of compressive sensing. Specifically, they typically use a Bloom filter to record unique keys, and at the end of each measurement epoch, leverage this information to recover the complete measurement results using compressed sensing algorithms. This encoding–decoding paradigm culminated in NZE-sketch~\cite{huang2021toward} and PR-sketch~\cite{sheng2021pr}, subsequently inspiring a line of research on sketching algorithms for high-precision measurement~\cite{fu2020clustering,liu2021xy,li2023cs,he2023histsketch}. However, the improved accuracy of these approaches comes at the cost of query efficiency: both NZE-sketch and PR-sketch solve regularized optimization problems involving potentially millions of variables, which introduces significant complexity even with state-of-the-art numerical solvers. Our work is inspired by UCL-sketch~\cite{yuan2025learning}, which addresses this challenge using a learned solver. Similar to Meta-sketch, UCL-sketch relies on prior assumptions about the underlying data stream distribution. Different from them, this work leverages generative modeling to automatically infer and learn the distribution, thereby circumventing these potential limitations. Although some recent works~\cite{sun2024netgsr,wang2025resolving} have introduced deep generative models, such as GANs and autoregressive models, into network monitoring tasks, they neither perform systematic theoretical or experimental analysis nor avoid reliance on large pre-collected datasets for offline training.

\section{Experimental Setup}
\label{app:setup}

We conduct our experiments on a server equipped with an 8-core 3.80GHz CPU, 32GB RAM and a 16GB NVIDIA GeForce RTX 5060 Ti GPU. The server runs on Ubuntu 20.04, and all experiments are repeated 10 times. This section describes the experimental setup used to evaluate the proposed method. We first detail the datasets used in our experiments, followed by the introduction of baseline methods. Finally, we formulate the evaluation metrics. 

\subsection{Datasets}
\label{subsec:dataset}

\begin{table}[t]
    \centering
    \caption{Statistics of Real-world and Synthetic Data Streams Used in the Evaluation}
    \label{tab:stream_statistics}
    \renewcommand{\arraystretch}{1.15}
    \setlength{\tabcolsep}{6pt}
    \begin{tabular}{c|ccc||c|ccc}
        \toprule
        \textbf{Real-World} 
        & \textbf{\# of Keys} 
        & \textbf{\# of Items} 
        & \textbf{Skewness}
        & \textbf{Synthetic} 
        & \textbf{\# of Keys} 
        & \textbf{\# of Items} 
        & \textbf{Skewness} \\
        \midrule
        \textit{{CAIDA}}  & \num{157269} & \num{2000000} & 291.56 & \textit{Zipf-icml} & $\sim$ \num{30000} & \num{1000000} & 24.27 \\
        \midrule
        \textit{Webdocs}    & \num{125623} & \num{2000000} & 25.42 & \textit{Zipf} & $\sim$ \num{28000} & \num{1000000} & 125.93 \\
        \midrule
        \textit{{MAWI}}       & \num{41471} & \num{2000000} & 92.67 & \textit{Pareto} & $\sim$ \num{28000} & \num{1000000} & 10.24 \\
        \midrule
        \textit{Kosarak}    & \num{30495} & \num{2000000} & 89.56 & \textit{Exponential} & $\sim$ \num{26000} & \num{1000000} & 3.81 \\
        \midrule
        \textit{{Retail}}       & \num{16470} & \num{908576} & 71.27 & \textit{Log-Normal} & $\sim$ \num{30000} & \num{1000000} & 5.64 \\
        \bottomrule
    \end{tabular}
\end{table}

We evaluate the sketching quality of our approach on \textit{\textbf{10}} datasets, comprising 5 real-world data streams and 5 synthetic data streams. The real-world datasets include: CAIDA-2018 backbone IP packet traces (\textit{CAIDA},~\citealp{caida}), Web document access traces (\textit{Webdocs},~\citealp{lucchese2004webdocs}), traffic traces from the WIDE project (\textit{MAWI},~\citealp{cho2000traffic}), click-stream data (\textit{Kosarak},~\citealp{kosarak}), and retail market basket data (\textit{Retail},~\citealp{brijs1999using}). For the CAIDA and MAWI datasets, we construct 13-byte flow keys consisting of five fields: source and destination IP addresses, source and destination ports, and transport protocol. For the other datasets, item IDs are converted into 4-byte-long keys. 

In addition, we generate five synthetic datasets following representative heavy-tailed distributions: \textit{Zipf-icml}, \textit{Zipf}, \textit{Pareto}, \textit{Exponential}, and \textit{Log-Normal}. Formally, let \(N\) denote the number of distinct items, and let \(f_i\) be the frequency associated with item \(i\), we can write 
\begin{itemize}
    \item \textbf{Zipfian Distribution}. Frequencies follow Zipf's law
    \(
        f_i \propto \frac{1}{i^{\alpha}}, \ i = 1, 2, \ldots, N,
    \)
    where \(\alpha > 0\) controls the degree of skew. Note that Zipf-icml is generated by following the implementation of~\citealp{du2021putting}, i.e., $f_i \propto \left\lfloor \frac{N}{i^{\alpha}} \right\rfloor$, while Zipf strictly follows the aforementioned formulation with a zipfian skewness of $1.4$. 
    \item \textbf{Pareto Distribution}. Frequencies are sampled from a Pareto distribution with shape parameter \(\alpha > 0\), whose CDF is given by
    \(
        P(F > x) = \left(\frac{x_m}{x}\right)^{\alpha}, \ x \ge x_m,
    \)
    where \(x_m\) is the minimum value. We then draw samples \(f_i \sim \mathrm{Pareto}(\alpha)\), scale them by a constant factor, and discretize the results to obtain integer frequencies.
    \item \textbf{Log-Normal Distribution}. Frequencies follow a log-normal distribution, i.e.,
    \(
        \log f_i \sim \mathcal{N}(\mu, \sigma^2),
    \)
    where \(\mu\) and \(\sigma\) denote the mean and standard deviation of the underlying normal distribution. The sampled frequencies are also scaled and rounded to integers.
    \item \textbf{Exponential Distribution}. Frequencies are generated from an exponential distribution with rate parameter \(\lambda > 0\), whose density function is
    \(
        p(x) = \lambda e^{-\lambda x}, \ x \ge 0.
    \) As with the other distributions, samples are scaled and discretized to form integer-valued frequencies.
\end{itemize}
For most synthetic workloads, the resulting frequency vectors are randomly permuted to eliminate any ordering bias, and each distinct item is encoded as a fixed-length 4-byte key that appears in the stream according to its assigned frequency. Table~\ref{tab:stream_statistics} shows the important details of the datasets (i.e., the number of keys, the number of items, and the skewness of streaming data distribution). 

\subsection{Baselines}
\label{subsec:bl}

We implement our approach and all other algorithms in Python. We list these \textit{\textbf{9}} state-of-the-art algorithms as follows:
\begin{enumerate}
    \item \textbf{Count-Min Sketch} (\textit{CM,~\citealp{cormode2005improved}}): CM maintains $k$ pairwise independent hash functions $h_\ell : U \rightarrow [E]$ and a counter array $C \in \mathbb{R}^{k \times E}$, where $E = m / k$. Upon processing the stream, each counter is updated as $C[\ell, h_\ell(j)] = \sum f_j$. For any item $i \in U$, its frequency estimate is $\tilde{f}_i = \min_{\ell \le k} \{C[\ell, h_\ell(i)]\}$. In our experiments, we fix $k=4$; all subsequent sketches follow this configuration unless otherwise specified.
    \item \textbf{Count Sketch} (\textit{CS,~\citealp{charikar2002finding}}): Similar to CM, CS employs $k$ hash functions $h_\ell : U \rightarrow [E]$ and a counter array $C \in \mathbb{R}^{k \times E}$. Additionally, it utilizes $k$ independent sign hash functions $g_\ell : U \rightarrow \{-1, 1\}$. The counters are updated as $C[\ell, h_\ell(j)] = \sum f_j \cdot g_\ell(j)$, and the estimate for item $i$ is computed as $\tilde{f}_i = \operatorname{median}_{\ell \le k} \{ g_\ell(i) \cdot C[\ell, h_\ell(i)] \}$.
    \item \textbf{Conservative Update Sketch} (\textit{CU,~\citealp{estan2003new}}): As a variant of CM, CU employs a conservative update policy to maintain an upper bound on the ground truth ($\tilde{f}_i \ge f_i$). For each incoming tuple $(i, v)$, a counter $C[\ell, h_\ell(i)]$ is updated only if it achieves the minimum value among all $k$ rows. Formally, $C[\ell, h_\ell(i)] \leftarrow C[\ell, h_\ell(i)] + v$ for all $\ell \in \arg\min_{j \le k} \{C[j, h_j(i)]\}$.
    \item \textbf{Augmented Sketch} (\textit{AG,~\citealp{roy2016augmented}}): AG utilizes the skew of the underlying stream data to improve the frequency estimation accuracy for the most frequent items by filtering them out earlier. It exchanges data items between the filter and the sketch (i.e., a CM) such that hot items are kept in the filter. We configure the filter with $\sim 100$ key-value pairs; this relatively small size improves accuracy while preventing lookup overhead from degrading throughput.
    \item \textbf{PR-sketch} (\textit{PR,~\citealp{sheng2021pr}}): PR also improves estimation accuracy by constructing linear equations that relate counter values to per-key aggregations. Solving the resulting least-squares problem requires an iterative solver. While the original implementation adopts a conjugate gradient method~\cite{EigenLeastSquaresConjugateGradient}, we instead use LSMR~\cite{fong2011lsmr} from \texttt{scipy.sparse.linalg}. We set the number of Bloom Filter hash functions to 7. Memory is partitioned equally between the sketch and the Bloom Filter, subject to an upper bound on the latter: to ensure a false positive rate below 1\%, we limit its size to $9.6N$ bits (e.g., 120 KB for $N=100K$) as suggested by~\citeauthor{cormode2011synopses}.
    \item \textbf{NZE-sketch} (\textit{NZE,~\citealp{huang2021toward}}): NZE leverages compressive sensing by constructing a fractional sensing matrix and employing a hash table for elephant element filtering. Specifically, it maintains three data structures: key-value pairs for heavy hitters, a fractional-valued sketch for small elements, and a Bloom filter. We implement the SeqSketch variant due to its superior memory efficiency, using OMP~\cite{pati1993orthogonal} for signal recovery. The configuration mirrors PR, but incorporates a hash table; we increase the hash table allocation to 75\% in high-memory scenarios to reduce computational overhead during reconstruction.
    \item \textbf{Learned CM Sketch} (\textit{LCM,~\citealp{hsu2019learning}}): The idea of LCM is similar to AG, which is to store hot and cold elements separately. However, while AG relies on a data-exchange algorithm for classification, LCM employs a learned model. We implement an RNN-based classifier\footnote{The original implementation in the prior work~\cite{hsu2019learning} was based on TensorFlow for IP address identification. In our implementation, we re-implement this component using PyTorch. For data streams where such correlations are absent, we directly consider an idealized setting by removing the Top-K hot keys from the stream.} with a 10—20KB memory footprint for label prediction. The remaining memory budget is split equally (1:1) between the heavy-hitter (unique buckets) and light-hitter (CM) components. 
    \item \textbf{Learned Count Sketch} (\textit{LCS,~\citealp{hsu2019learning}}): LCS follows the exact configuration of LCM but substitutes the underlying Count Min Sketch with a Count Sketch.
    \item \textbf{Learned Sketch without Predictions} (\textit{LS,~\citealp{aamand2023improved}}): LS extends LCS by introducing two key query-time optimizations. (i) Efficiency: To mitigate inference overhead, LS performs model predictions probabilistically rather than for every item; we set this sampling probability to 15\% in our experiments. (ii) Accuracy: To improve precision for small elements, LS truncates low-value estimates to zero. The threshold is defined as $\frac{C \cdot kN}{m}$, where $C$ is a constant (set to 10 in our experiments). Then, any query result obtained by its light part below this threshold is set to zero.
\end{enumerate}

\subsection{Metrics}
\label{subsec:metric}

We evaluate the performance of sketching algorithms using the following metrics. Again, all experiments are repeated 10 times and the average results are reported.
\begin{itemize}
    \item \textbf{Average Absolute Error (AAE)}: $\frac{1}{N} \sum^N_i\left|f_i - \hat f_i\right|$, which measures the average magnitude of estimation errors.
    \item \textbf{Average Relative Error (ARE)}: $\frac{1}{N} \sum^N_i \frac{\left|f_i - \hat f_i\right|}{f_i}$, used to evaluate the error normalized by the true flow size. The ARE metric is highly sensitive to mouse elements, because their small true sizes reside in the denominator, even marginal overestimations yield extreme relative errors that inflate the reported mean.
    \item \textbf{Weighted Mean Relative Error (WMRE)}: $\frac{\sum_{j=1}^z |F_j - \hat{F}_j|}{\sum_{j=1}^z \frac{F_j + \hat{F}_j}{2}}$, where $z$ is the maximum flow frequency, and $F_j$ and $\hat{F}_j$ are the true and estimated counts of elements with frequency $j$. This metric assesses how well the sketch preserves the overall frequency distribution.
    \item \textbf{Entropy Absolute Error (AE)}: $|H - \hat{H}|$, defined as the absolute difference between the true entropy $H$ and the estimated entropy $\hat{H}$, where $H = -\sum_{i=1}^N \frac{f_i}{S} \log \frac{f_i}{S}$ and $S = \sum f_i$.
    \item \textbf{F1 Score}: $\frac{2 \cdot PR \cdot RR}{PR + RR}$, the harmonic mean of Precision ($PR$) and Recall ($RR$). Here, Recall is the ratio of correctly identified heavy hitters to the total number of true heavy hitters, and Precision is the ratio of correctly identified heavy hitters to the total number of reported instances. 
\end{itemize}
\textbf{\textit{Note}}: when the key space is unknown, traditional approaches typically rely on optimization-based methods to approximate the cardinality or distribution of data streams from counter values, such as the basic MRAC algorithm~\cite{kumar2004data}. In this paper, we assume that the key space is known or can be efficiently tracked. Accordingly, for a fair comparison, all traditional algorithms are allowed to directly leverage the known keys when detecting heavy hitters or estimating distribution (i.e., computing WMRE) and entropy.

\section{Implementation Details of \texttt{FLORE}}
\label{app:flore}

\begin{figure}
    \centering
    \setlength{\belowcaptionskip}{-0.36cm} 
    \includegraphics[width=0.9\linewidth]{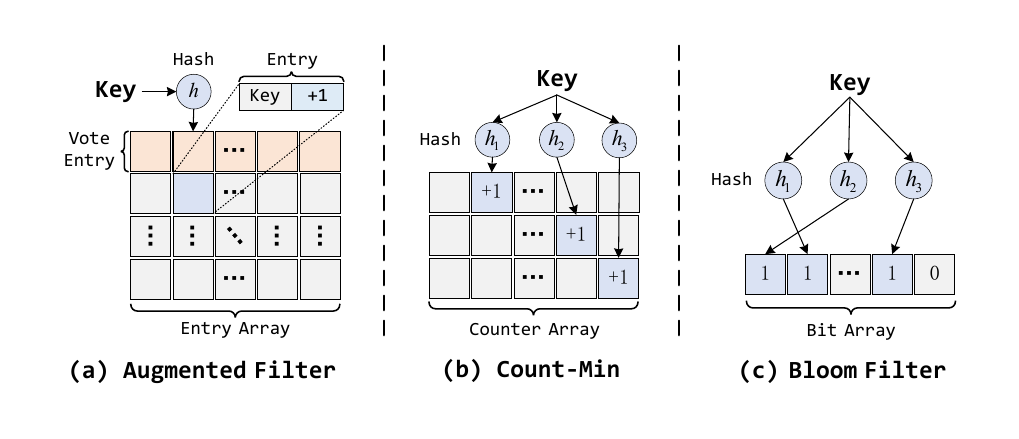}
    \caption{Data structures used in \texttt{FLORE}: a Bloom Filter for key tracking, a Count-Min Sketch for store infrequent items, and an augmented filter for saving frequent items as key-value pairs.}
    \label{fig:structure}
\end{figure}

\begin{figure}[ht]
    \centering
    \begin{minipage}[t]{0.49\textwidth} 
        \begin{algorithm}[H]             
        \caption{Stream Summary Algorithm for \texttt{FLORE}}
        \label{alg:update}
        \begin{algorithmic}           
            \ENSURE Encode tuple $(k, v)$ into data structures of \texttt{FLORE}
            \STATE $i \leftarrow \text{\texttt{Hash}}(k)$
            \STATE \textsc{Array} $\leftarrow$ \textsc{AugmentedFilter}[$i$]
            \STATE Lookup $k$ in \textsc{Array}[$1 \sim \text{end}$]
            \STATE ~
            \IF{\underline{Case 1}: item found at position $j$} 
            \STATE \textsc{Array}[$j$].value + $v$ $\rightarrow$ \textsc{Array}[$j$].value
            \ELSIF{\underline{Case 2}: position $j$ is empty}
            \STATE $k$ $\rightarrow$ \textsc{Array}[$j$].key ~and~ $v$ $\rightarrow$ \textsc{Array}[$j$].value
            \ELSE
            \STATE \textsc{Array}[$0$].value + $v$ $\rightarrow$ \textsc{Array}[$0$].value
            \STATE \textsc{vote}$^-$ $\leftarrow$ \textsc{Array}[$0$].value
            \STATE $m$ $\leftarrow$ find minimum entry in \textsc{Array}[$1 \sim \text{end}$] 
            \STATE \textsc{vote}$^+$ $\leftarrow$ \textsc{Array}[$m$].value
            \IF{\underline{Case 3}: \textsc{vote}$^-$ / \textsc{vote}$^+$ $>$ \textsc{Threshold}}
            \STATE load tuple $(k_{ex},v_{ex})$ from position $m$
            \STATE clear entry at position $m$
            \STATE add tuple $(k,v)$ to position $m$ (like Case 2)
            \STATE \textcolor{gray}{\# identify and record $k_{ex}$ with \textsc{BloomFilter}}
            \STATE insert tuple $(k_{ex},v_{ex})$ into \textsc{Count-Min}
            \ELSE
            \STATE \textcolor{gray}{\# identify and record $k$ with \textsc{BloomFilter}}
            \STATE insert tuple $(k,v)$ into \textsc{Count-Min}
            \ENDIF
            \ENDIF
        \end{algorithmic}
        \end{algorithm}
    \end{minipage}%
    \hfill
    \begin{minipage}[t]{0.49\textwidth} 
        \begin{algorithm}[H]
        \caption{Stream Recovery Algorithm for \texttt{FLORE}}
        \label{alg:recovery}
        \begin{algorithmic}
            \ENSURE Decode all information using the model of \texttt{FLORE}
            \STATE $z \leftarrow$ sample priors from \textsc{Gaussian} distribution
            \STATE $b \leftarrow$ flatten counters in \textsc{Count-Min}
            \STATE \textsc{SketchVec} $\leftarrow$ \textsc{GenerativeModel}$(b,z)$
            \STATE \textsc{FilterVec} $\leftarrow$ \texttt{Query}$(\text{\textsc{AugmentedFilter}}, \text{keys})$
            \STATE \textsc{DecodeVec} $\leftarrow$ \textsc{SketchVec} + \textsc{FilterVec}
            \STATE return \textsc{DecodeVec} and keys
        \end{algorithmic}
        \end{algorithm}

        \begin{center}
            \includegraphics[width=1.\linewidth]{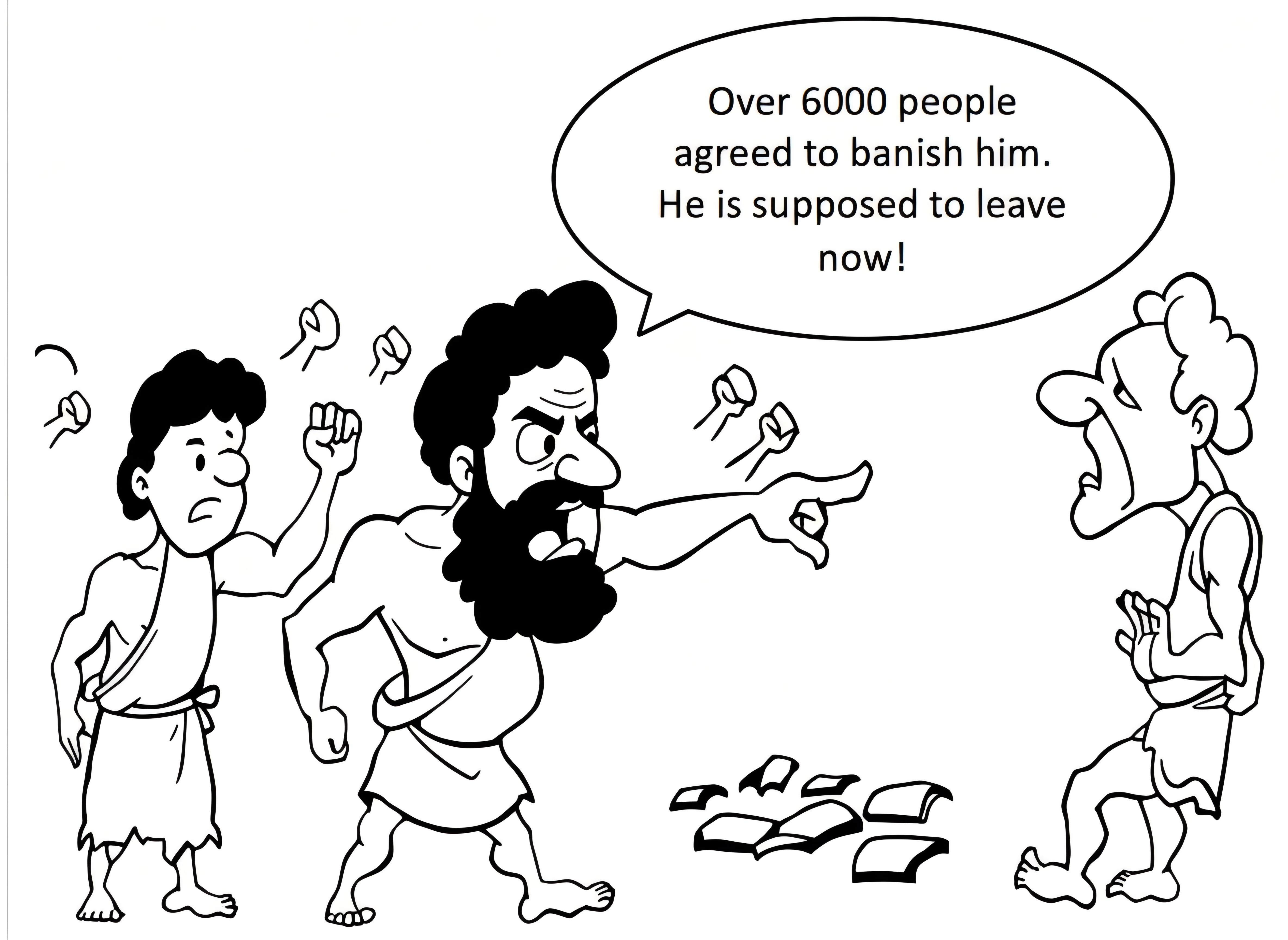}
            \captionof{figure}{Ostracism: the exclusion from the collective; figure is polished and excerpted from page~18 of \href{https://yangtonghome.github.io/uploads/\%E5\%8C\%97\%E4\%BA\%AC\%E5\%A4\%A7\%E5\%AD\%A6-\%E6\%9D\%A8\%E4\%BB\%9D-\%E9\%AB\%98\%E6\%80\%A7\%E8\%83\%BD\%E7\%BD\%91\%E7\%BB\%9C\%E6\%B5\%8B\%E9\%87\%8F.pptx}{a publicly available slide}.}
        \end{center}
    \end{minipage}
\end{figure}

In this section, we provide additional implementation details of \texttt{FLORE}. As illustrated in Figure~\ref{fig:structure}, the local data structure of \texttt{FLORE} consists of three components: (1) an augmented filter for storing frequent elements, (2) a Count–Min Sketch for recording the remaining elements, and (3) a Bloom Filter for tracking newly observed keys. We first describe the Bloom Filter–based key tracking mechanism and the stream filtering algorithm for automatically separating hot items from the data stream into the augmented filter. We then elaborate on the design of the INN employed in FGMs, and conclude with other implementation aspects, such as data preprocessing and training details.

\subsection{Key Tracking Mechanism}
\label{subsec:track}

As discussed in the main text, constructing $\boldsymbol{\Phi}$\footnote{We do not explicitly store the 0–1 sparse matrix $\boldsymbol{\Phi}$ in memory. Instead, we generate its entries on demand during adaptation.} requires tracking all distinct inserted keys, either during training or/and at query time (retroactively). While we do not concern ourselves with this problem and assume that the key set $\mathcal{I}$ is known, we briefly introduce a standard key tracking mechanism commonly adopted in prior studies~\cite{eppstein2011s,xiong2014kbf,li2016flowradar,sheng2021pr,huang2021toward} to collect $\mathcal{I}$ from the data stream. 

The mechanism consists of two parts: key identification and key recording. Specifically, key identification part maintains a lightweight data structure, i.e., a Bloom Filter, to identify keys in the summary phase (or data plane). As depicted in Figure~\ref{fig:structure}(c), the Bloom filter is implemented as a one-dimensional array with $m_{bf}$ entries and is associated with $k_{bf}$ independent hash functions. Each entry stores a single bit indicating whether the mapped input corresponds to a key that has not been observed before. For each incoming item, we examine the bit statuses of the $k_{bf}$ mapped entries prior to updating them.
Once a new key is identified in the summary phase, it is forwarded to the recovery phase (or control plane). In the recovery phase, the key recording component stores these keys for subsequent analysis. This offloading strategy reduces the resource consumption of the key-tracking mechanism, as each unseen key is identified and transmitted at most once. 

\subsection{Stream Filtering Mechanism}
\label{subsec:filter}

We now describe stream filtering mechanism that serves hot keys. The strategy called \textit{separate-and-guard-hot}~\cite{yang2018heavyguardian,yang2018elastic}, tracks hot keys and records their information with dedicated entries. We first attempt to insert keys to the dedicated part and then evict infrequent keys to the shared part, i.e., a Count-Min. As illustrated in Figure~\ref{fig:structure}(a), the augmented filter is composed of some arrays, and each array is comprised of a number of entries. In spirit to Ostracism (Greek: ostrakismos, where any citizen could be voted to be evicted from Athens for ten years), we aim to use two types of entries to store hottest items with their frequencies: the negative vote lives in the first entry, while the dedicated counters live in the other entries that consists of two fields—fingerprint (key) and counter (frequency). For each incoming tuple $(k, v)$, we compute a hash function to map it to a specific array. Within the selected array, we identify the entry with the minimum counter value and treat it as the positive vote. The update procedure then follows one of three cases:
\vspace{-2mm}
\begin{itemize}
\item \textbf{Case 1.} A dedicated entry with a matching fingerprint is found; its counter is incremented by $v$.
\item \textbf{Case 2.} An empty dedicated entry is found and initialized with $(k, v)$.
\item \textbf{Case 3.} Otherwise, an opposing vote is cast to determine whether eviction should occur. Concretely, we increment the negative vote by $v$ and compute the decision variable
\(
\lambda = \frac{\text{negative vote}}{\text{positive vote}}.
\)
If $\lambda$ exceeds a predefined threshold (e.g., $8$), the tuple $(k, v)$ replaces the item in the minimum entry, and the evicted item is forwarded to the Count-Min. Otherwise, $(k, v)$ is directly sent to the Count-Min.
\end{itemize}
The pseudo-code of the insertion procedure is provided in Algorithm~\ref{alg:update}. To query the frequency of a key, we perform a single hash operation to locate the corresponding array. If a stored fingerprint matches the queried key, the associated counter value is returned; otherwise, the query returns zero. Then, the final estimate is obtained by summing the generative model’s output and the query vector (see Algorithm~\ref{alg:recovery}).

\subsection{INN Architecture}
\label{subsec:structure}

\begin{wrapfigure}[12]{r}{0.36\textwidth}
  \vspace{-0.75cm}
  \begin{center}
  \includegraphics[width=1.\linewidth]{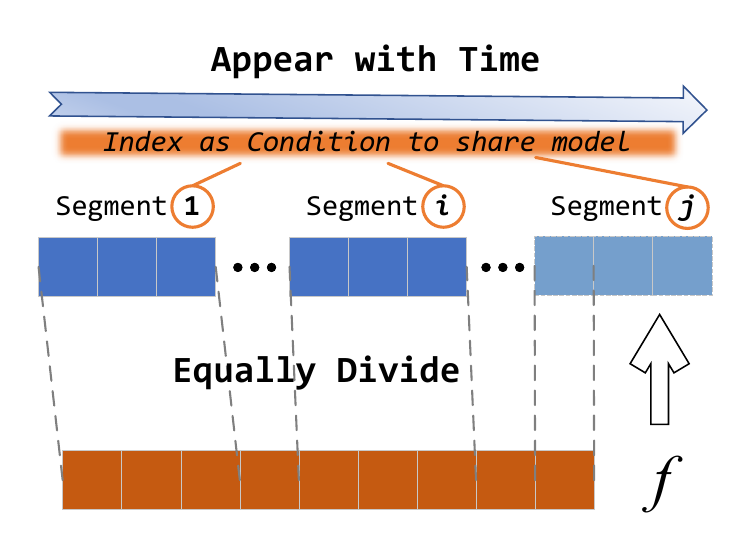}
\caption{Illustration of our scalable design.}
\label{fig:scalable}
\end{center}
\end{wrapfigure}

Building upon the foundations of prior works~\cite{dinh2014nice,dinh2016density,kingma2018glow,ardizzone2019analyzing}, we design the \texttt{FLORE} architecture as a fully invertible neural network (INN). The fundamental unit of this INN is a reversible block consisting of two complementary affine coupling layers. Specifically, each block partitions its input into two components, $[u_1, u_2]$, and applies alternating affine transformations with strictly upper or lower triangular Jacobians:
\[
    v_1 = u_1 \odot \exp\left(s_1(u_2)\right) + t_1(u_2), \quad v_2 = u_2 \odot \exp\left(s_2(v_1)\right) + t_2(v_1).
\]
Here, $\odot$ denotes element-wise multiplication. The resulting outputs $[v_1, v_2]$ are concatenated and passed to the subsequent block. Crucially, the internal functions $s_j$ and $t_j$ can be represented by arbitrary neural networks and do not require invertibility themselves. In our implementation, these are realized by a sequence of fully connected layers with \textsc{ReLU} activations. Given the outputs $[v_1, v_2]$, the forward pass is analytically invertible:
\[
    u_2 = \left( v_2 - t_2(v_1) \right) \oslash \exp\left(s_2(v_1)\right), \quad u_1 = \left( v_1 - t_1(u_2) \right) \oslash \exp\left(s_1(u_2)\right).
\]
Here, $\oslash$ denotes element-wise division. To enhance model capacity and ensure global dependency, we interleave permutation layers~\cite{kingma2018glow} between reversible blocks. These layers shuffle the elements of the input in a fixed, pseudo-random manner, forcing the splits $[u_1, u_2]$ to vary across layers and facilitating interaction among all variables.

\begin{figure}
    \centering
    \includegraphics[width=1.\linewidth]{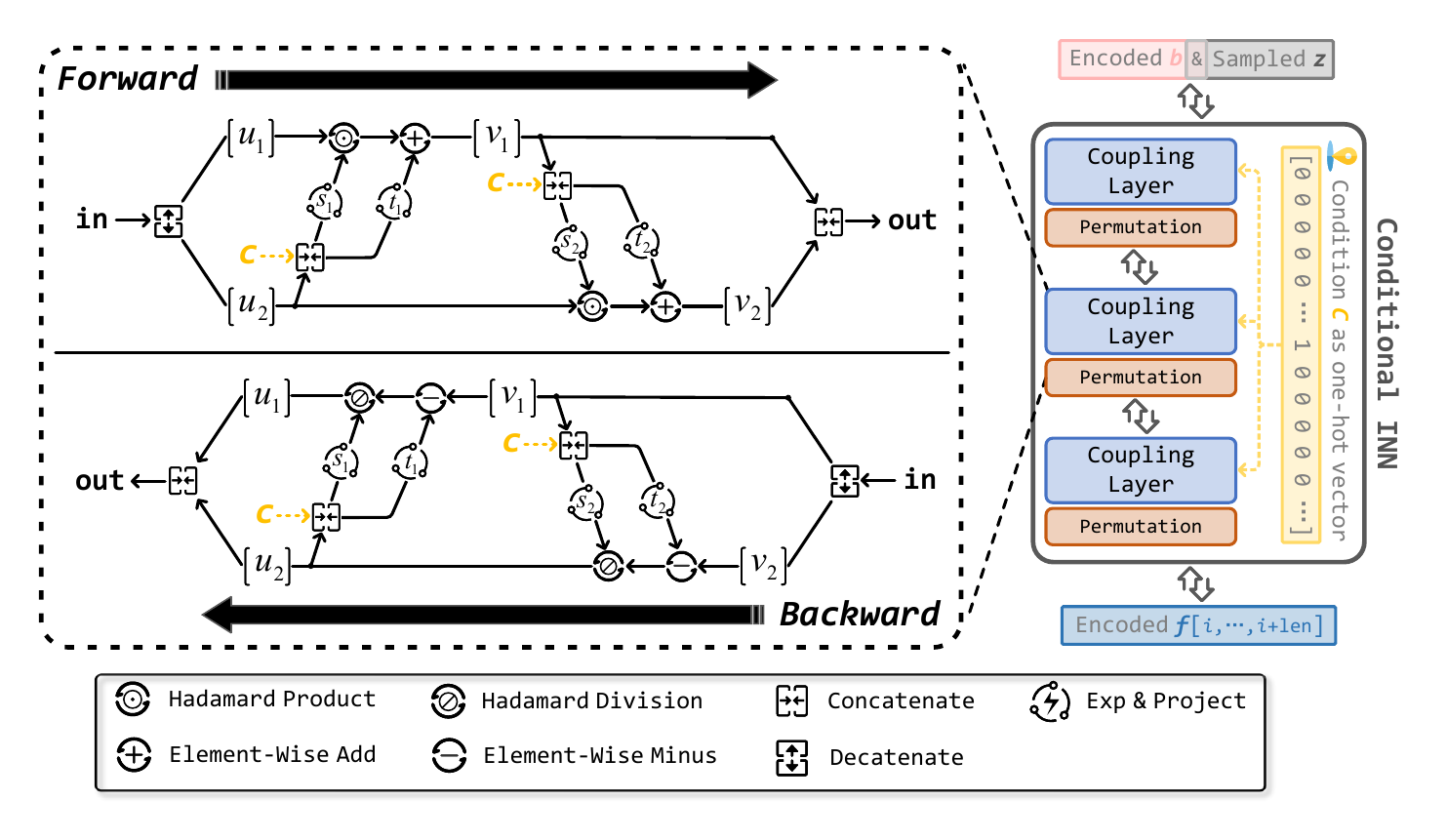}
    \caption{The underlying cINN architecture used in \texttt{FLORE}.}
    \label{fig:architecture}
\end{figure}    

\begin{wrapfigure}[12]{l}{0.36\textwidth}
  \vspace{-0.5cm}
  \begin{center}
  \includegraphics[width=1.\linewidth]{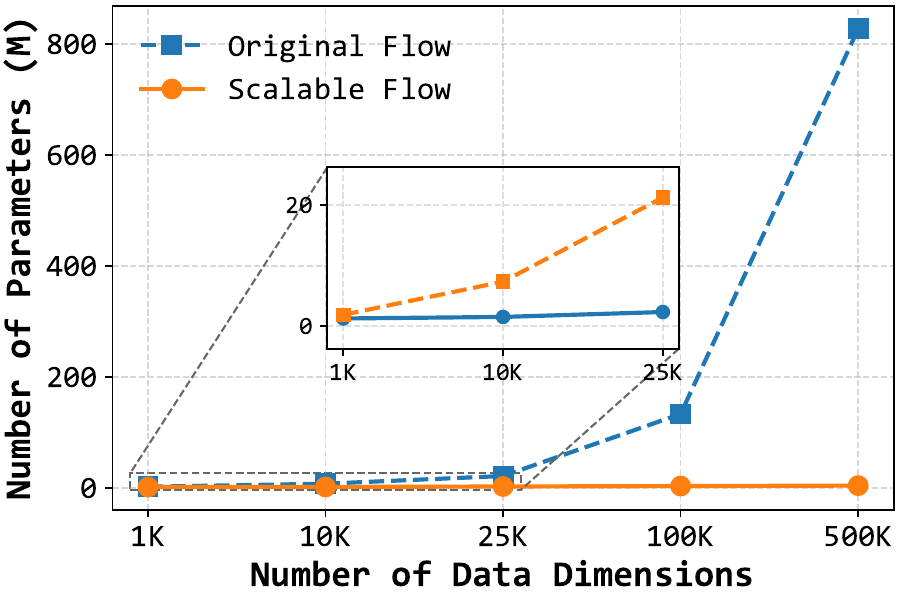}
\caption{Model size v.s. Data stream size.}
\label{fig:size}
\end{center}
\end{wrapfigure}

\vspace{-2mm}
\paragraph{Scalable Flows} 
As previously noted in \S\ref{subsec:flore}, \texttt{FLORE} may support dynamic capacity expansion to accommodate streaming data recovery. Direct retraining on the continually expanded key space would incur prohibitive computational overhead, while high-dimensional data streams (often reaching millions of dimensions) would also lead to a dramatic increase in parameter count. To address these challenges, we partition the GT vector $f$ into equal-length segments and incrementally increase the number of segments over time to adapt those new elements (see Figure~\ref{fig:scalable}). To maintain parameter efficiency, these segments are able to share the same network conditioned on their respective indices. Such design ensures that when the brand-new keys arrive, only the parameters associated with the most recent index require fine-tuning, thereby minimizing the impact on existing segments. As a result, the model's parameter complexity is reduced from $\mathcal{O}(\text{dim}(f))$ to $\mathcal{O}(\text{segment length})$. We empirically demonstrate this benefit in Figure~\ref{fig:size} by plotting how the number of \texttt{FLORE}'s parameters scales with the size of the data stream (i.e., data dimensions). As can be seen, the scalable design substantially reduces the parameter count, achieving a reduction of over 200 times at 500K unique keys. In our context, this approach is implemented using a Conditional INN (cINN,~\citealp{ardizzone2019guided}). Since the subnetworks $s_j$ and $t_j$ are not explicitly inverted, they can be conditioned on condition $c$ via concatenation without compromising the overall invertibility of the block (i.e., replacing $s_1(u_2)$ with $s_1(u_2, c)$). Specifically, the index $c$ is represented as a one-hot vector and processed by a feedforward conditioning network to extract intermediate representations, which is trained jointly with the cINN. Figure~\ref{fig:architecture} illustrates the final architecture; note that two autoencoders are omitted from the diagram for clarity.               

\subsection{Other Details}
\label{subsec:other_details}

\vspace{-0.6mm}
Below, we summarize several auxiliary yet important details that complement the main methodology, covering dataset construction, normalization, retraining, and parameter settings.

\subsubsection{Dataset Construction} 

\vspace{-0.6mm}
First of all, \texttt{FLORE} eliminates the need for real samples ($f$) during training, facilitating online dataset construction, which requires only sketch summaries (${b}$). Then, counters are periodically sampled at fixed intervals (defined by either the number of updates or elapsed time), which we term sampling epochs. To make the model capture the most recent data stream characteristics, we maintain a sliding window of these sampled states by following~\citealp{yuan2025learning}. This window operates on a first-in-first-out (FIFO) style, discarding stale states once its capacity is reached. In our experimental setup, we configure the sampling interval to 5,000 updates and the sliding window length to 200$\sim$500.

\subsubsection{Data Normalization} 

\vspace{-0.6mm}
As discussed in the above dataset construction process, we do not have fixed global statistics such as the overall minimum/maximum values or mean/std, due to the evolving nature of the data stream. Therefore, instead of classical min-max normalization or mean normalization, we employ instance normalization~\cite{du2021putting,yuan2025learning} to mitigate the influence of the sliding window. It is noteworthy that, in \texttt{FLORE}'s CM with $k$ rows and $m/k$ counters per row, the arrival of a streaming item triggers an update to exactly one counter per row, indexed by hash functions. Then, by leveraging the non-negativity of CM, we can define a scaling factor, $\text{scale}$, as the minimum of the row-wise maximum values: $\text{scale} := \min_{1 \le j \le k} \left( \max_{1 \le i \le m/k} {b} \left[ \frac{m(j-1)}{k} + i \right] \right)$. This important factor can be computed in $\mathcal{O}(m)$ linear time (often very fast), and every element in $f$ is smaller than this constant. The normalized counters are then input into the neural network as ${b}' = {b} / \text{scale}$. At inference time, the model $f_{G}$ predicts the GT vector in the normalized space, which is subsequently mapped back to the original space via: $\hat{f} = \text{scale} \cdot f_{G}({b}')$. 

\subsubsection{Training \texttt{FLORE}}

\vspace{-0.6mm}
Since \texttt{FLORE}’s generative model is trained dynamically on sampled data, its operational model can be periodically updated by the cINN fine-tuned in the background on more recent and up to date data, to gracefully adapt to \emph{planned} changes in the database system (resetting sketch parameters, planned addition/removal of keys) and to natural drifts in data stream distributions. 
Our evaluation results (Appendix~\ref{subsec:robust}) indicate that \texttt{FLORE} produces high-quality recoveries even epochs after being deployed, and even if the distribution changes during this time (e.g., due to time or/and attacks). This provides ample time for training substitute invertible functions, and one can tentatively handle previously unencountered queries using a simple algorithm such as Count-Min, since \texttt{FLORE} does not disrupt the original data-plane sketching operation.

\subsubsection{Hyperparameters} 

\vspace{-0.6mm}
\textbf{Data-plane structural parameters.} The memory allocation of \texttt{FLORE} follows the NZE-sketch instance. Specifically, the total memory budget is initially partitioned: 50\% is allocated to the Bloom Filter, while the remaining 50\% is split equally between the Count–Min Sketch and the augmented filter. Both the Bloom Filter and the CM sketch are subject to capacity constraints: a maximum false-positive rate threshold for the Bloom Filter and a hard memory cap of 256KB for the CM sketch. If either limit is reached, any residual memory budget is reallocated to the augmented filter. For CUDA (GPU) implementation of \texttt{FLORE}, we vary the batch size from 100K to 1M and recompile the program for each configuration to evaluate its impact on GPU insertion throughput.

\vspace{-0.6mm}
\textbf{Control-plane model hyperparameters.} We evaluate segment lengths from the set $\{512, 1024, 2048\}$. The hidden dimension of \texttt{FLORE}’s $f$-autoencoder is selected from $\{64, 128, 256, 512\}$. For the hidden representation of $b$, we set its dimension to 2/3 of that of $f$, reserving the remaining 1/3 for concatenation with the sampled Gaussian noise vector $z$. The number of cINN layers is chosen from $\{2, 3, 4, 5\}$. The final layer employs a \textsc{Tanh} activation function, consistent with the encoder’s projection of inputs into the $[-1, 1]$ range.

\vspace{-0.6mm}
\textbf{Training parameters.} The loss trade-off coefficients $\alpha_1$ through $\alpha_4$ are set to (or selected from) $0.5$, $0.001$, $0.01$, and $\{0.01, 0.05, 0.1, 0.25\}$, respectively. Notably, $\alpha_2$ is assigned a near-zero value, as we observed that the INN demonstrates near-perfect invertibility even without strong regularization. The initial training phase consists of 20$\sim$30 epochs, while subsequent fine-tuning to incorporate new elements needs 3$\sim$5 epochs. By default, the MMD metric is used to measure the loss term $\mathcal{L}_{\mathbf{ort}}$, and the Adam optimizer~\citep{kingma2014adam} is employed throughout all experiments.

\newpage

\section{Theoretical Proofs}
\label{app:proofs}

This section develops the theoretical analyses underlying the main results of the paper. We establish rigorous analyses for the linear sketching problem. We first introduce the necessary definitions and notations, followed by the derivation of the main lemmas and theorems. Parts of our proofs leverage and extend ideas from prior works~\cite{candes2005decoding,candes2008restricted,chandar2008negative,gretton2012kernel,ardizzone2019analyzing,foucart2022mathematical,qiao2024routing}.

\subsection{Preliminaries}

\textbf{Notations.} In Table~\ref{tab:notation}, we list the notational conventions used in this paper.

\begin{table}[t]
\centering
\caption{Summary of Main Notations}
\label{tab:notation}
\small
\setlength{\tabcolsep}{6pt}
\renewcommand{\arraystretch}{1.15}
\begin{tabular}{
    c p{0.30\linewidth}
    @{\hspace{1.2em}}
    c p{0.30\linewidth}
}
\toprule
\textbf{Symbol} & \textbf{Description}
& \textbf{Symbol} & \textbf{Description} \\
\midrule
$f$ & Ground-truth (frequency) vector
& $\boldsymbol{\Phi}$ & Linear sketching matrix \\
$b$ & Measurement (counters) vector
& $k$ & Number of hash functions in sketch \\
$z$ & Prior (Gaussian) vector
& $N$ & Number of distinct items \\
$\mathcal{I}$ & Key set of data stream
& $m$ & Number of total counters in sketch \\
$T_i$ & $i$-th invertible transformation function
& $(\varepsilon, \varsigma)$ & Coefficient \& error probability of sketch \\
$\boldsymbol{\epsilon}$ & Error introduced by missing elements
& $m_{bf}$ & Number of bits in Bloom Filter \\
$s$ & Non-zero count upper bound 
& $k_{bf}$ & Number of hash functions in Bloom Filter \\
$p_B(\cdot)$ & Distribution of counters
& $p_Z(\cdot)$ & Distribution of priors \\
$p_F(\cdot)$ & Distribution of ground truth
& $q(\cdot)$ & Joint distribution of counters and priors \\
\bottomrule
\end{tabular}
\end{table}




\begin{definition}
    \textit{(Support)} For a vector $\mathbf{x} \in \mathbb{R}^n$, the set of all indices of non-zero elements in $\mathbf{x}$ is called the support of $x$, denoted as $\operatorname{supp}(\mathbf{x}): \operatorname{supp}(\mathbf{x}) = \left\{ i : \mathbf{x}_i \ne 0 \right\}$. 
\end{definition}

\begin{definition}
    \textit{(Sparsity)} The sparsity of $\mathbf{x}$ = the number of non-zero element in $\mathbf{x}$ = the cardinality of $\operatorname{supp}(\mathbf{x})$. It can be denoted as $|\operatorname{supp}(\mathbf{x})|$ or $\| x \|_0$. A vector $x$ is $s$-sparse if $\| x \|_0 \leq s$.
\end{definition}

\begin{definition}
    \textit{($ s $-term Approximation)} For $ p > 0 $, the $ \ell_p $-error of best $ s $-term approximation to a vector $ \mathbf{x} \in \mathbb{C}^N $ is defined by $\sigma_s(\mathbf{x})_p := \inf \left\{ \left\| \mathbf{x} - \mathbf{z} \right\| _p, \ \mathbf{z} \in \mathbb{C}^N \text{ is } s\text{-sparse} \right\}$.
    \label{def:sa}
\end{definition}

\begin{definition}
    \textit{(Restricted Isometry Constant)} The $s$-th restricted isometry constant $\delta_s = \delta_s(\mathbf{A})$ of a matrix $\mathbf{A} \in \mathbb{C}^{m \times N}$ is the smallest number $\delta_s \geq 0$, such that 
    \(
        (1 - \delta_s)\| \mathbf{x}\| _2^2 \leq \| \mathbf{A}\mathbf{x}\| _2^2 \leq (1 + \delta_s)\| \mathbf{x}\| _2^2 
    \)
    holds for all $s$-sparse vectors $\mathbf{x} \in \mathbb{C}^N$. Equivalently, it is given by
    \(
        \delta_s = \max_{S \subseteq [N],\, \mathrm{card}(S) \leq s} \| \mathbf{A}_S^* \mathbf{A}_S - \mathbf{Id} \| _{2 \to 2}, 
    \)
    where $\mathbf{A}_S^*$ is the hermitian transpose of $\mathbf{A}_S$ and $\mathrm{card}(S)$ is the cardinality of set $S$.
    \label{def:ric}
\end{definition}

\begin{definition}
    \textit{(Kullback–Leibler Divergence)} The Kullback–Leibler (KL) divergence is a type of statistical distance: a measure of how much an approximating probability distribution $q$ is different from a true probability distribution $p$. Mathematically, it is defined as $\mathcal{D}_{\text{KL}}(p \parallel q) = \sum_{x \in \mathcal{X}} p(x) \log \frac{p(x)}{q(x)}$. Obviously, $\mathcal{D}_{\text{KL}}(p \parallel q) \ge 0$ and equals zero if and only if $p=q$ as measures.
    \label{def:kld}
\end{definition}

\begin{definition}
    \textit{(Maximum Mean Discrepancy)} Let $x$ and $y$ be random variables defined on a topological space $X$, with respective Borel probability measures $p$ and $q$. And let $\mathcal{F}$ be a class of functions $F: \mathcal{X} \to \mathbb{R}$. We define the maximum mean discrepancy (MMD) as $\text{MMD}[\mathcal{F}, p, q] := \sup_{F \in \mathcal{F}} \left( \mathbb{E}_x[F(x)] - \mathbb{E}_y[F(y)] \right)$.
\end{definition}

\subsection{Useful Lemmas}

\begin{lemma}
    Let $\mathbf{u}, \mathbf{v} \in \mathbb{C}^N$ be vectors such that $\|\mathbf{u}\|_0 \le s$ and $\|\mathbf{v}\|_0 \le t$. Assume that their supports are disjoint, i.e., \( \operatorname{supp}(\mathbf{u}) \cap \operatorname{supp}(\mathbf{v}) = \varnothing \). Then
    \(
        \bigl| \langle \mathbf{Au}, \mathbf{Av} \rangle \bigr| \;\le\; \delta_{s+t}\, \|\mathbf{u}\|_2 \, \|\mathbf{v}\|_2
    \), where $\delta_{s+t}$ is the isometry constant of matrix $\mathbf{A}$.
    \label{lem:rip_ineqn}
\end{lemma}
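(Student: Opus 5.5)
The plan is the standard polarization argument from compressive sensing, combined with Definition~\ref{def:ric} applied to unit-normalized vectors. The statement is homogeneous in $\mathbf{u}$ and $\mathbf{v}$: both sides scale linearly in each vector. If either vector is zero there is nothing to prove, so we may assume $\|\mathbf{u}\|_2=\|\mathbf{v}\|_2=1$ and aim to show $|\langle \mathbf{Au},\mathbf{Av}\rangle|\le \delta_{s+t}$.

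\textbf{Primary route (operator norm).} Let $S=\operatorname{supp}(\mathbf{u})\cup\operatorname{supp}(\mathbf{v})$, so $\mathrm{card}(S)\le s+t$. Because the supports are disjoint, $\langle \mathbf{u},\mathbf{v}\rangle=0$. Write $\mathbf{u}_S,\mathbf{v}_S$ for the restrictions to $S$. Then
\[
\langle \mathbf{Au},\mathbf{Av}\rangle=\langle \mathbf{A}_S\mathbf{u}_S,\mathbf{A}_S\mathbf{v}_S\rangle-\langle \mathbf{u}_S,\mathbf{v}_S\rangle=\langle (\mathbf{A}_S^*\mathbf{A}_S-\mathbf{Id})\mathbf{v}_S,\mathbf{u}_S\rangle .
\]
By Cauchy--Schwarz this is bounded in modulus by $\|\mathbf{A}_S^*\mathbf{A}_S-\mathbf{Id}\|_{2\to2}\,\|\mathbf{u}\|_2\|\mathbf{v}\|_2$. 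The equivalent characterization in Definition~\ref{def:ric} bounds this operator norm by $\delta_{s+t}$. This route works directly over $\mathbb{C}$, with no real-part issue.

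\textbf{Alternative route (polarization).} One can instead avoid the operator-norm characterization. Since $\mathbf{u}\pm\mathbf{v}$ is $(s+t)$-sparse with $\|\mathbf{u}\pm\mathbf{v}\|_2^2=2$, the RIP gives $2(1-\delta_{s+t})\le\|\mathbf{A}(\mathbf{u}\pm\mathbf{v})\|_2^2\le 2(1+\delta_{s+t})$. The real-polarization identity then yields
\[
\operatorname{Re}\langle\mathbf{Au},\mathbf{Av}\rangle=\tfrac14\left(\|\mathbf{A}(\mathbf{u}+\mathbf{v})\|_2^2-\|\mathbf{A}(\mathbf{u}-\mathbf{v})\|_2^2\right)\le\delta_{s+t}.
\]
To pass to the modulus in the complex case, I would replace $\mathbf{v}$ by $e^{i\theta}\mathbf{v}$, which has the same support and norm. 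Choosing $\theta$ so that the inner product becomes real and nonnegative gives the bound on $|\cdot|$.

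The main point needing care is exactly this complex phase step, or equivalently checking that the operator-norm characterization of $\delta_s$ matches the quadratic-form definition. That equivalence holds because $\mathbf{A}_S^*\mathbf{A}_S-\mathbf{Id}$ is Hermitian, so its norm is the maximum of $|\langle(\mathbf{A}_S^*\mathbf{A}_S-\mathbf{Id})\mathbf{x},\mathbf{x}\rangle|$ over unit $\mathbf{x}$. Everything else is routine, so I will present the operator-norm argument as the main proof.
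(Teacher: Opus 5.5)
Your primary operator-norm argument is correct and is essentially the paper's proof. Both restrict to $S=\operatorname{supp}(\mathbf{u})\cup\operatorname{supp}(\mathbf{v})$, subtract the vanishing $\langle \mathbf{u}_S,\mathbf{v}_S\rangle$, apply Cauchy--Schwarz, and bound $\|\mathbf{A}_S^*\mathbf{A}_S-\mathbf{Id}\|_{2\to2}$ by $\delta_{s+t}$ via the equivalent characterization in Definition~\ref{def:ric}.
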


\begin{proof}
    Let \(S := \operatorname{supp}(\mathbf{u}) \cup \operatorname{supp}(\mathbf{v})\), so that $|S| \le s+t$. Denote by $\mathbf{u}_S, \mathbf{v}_S \in \mathbb{C}^{|S|}$ the restrictions of $\mathbf{u}$ and $\mathbf{v}$ to the index set $S$, and by $\mathbf{A}_S$ the submatrix of $A$ consisting of the columns indexed by $S$. Then $\mathbf{Au} = \mathbf{A}_S \mathbf{u}_S$ and $\mathbf{Av} = \mathbf{A}_S \mathbf{v}_S$. Since $\operatorname{supp}(\mathbf{u}) \cap \operatorname{supp}(\mathbf{v}) = \varnothing$, we have \(\ \langle \mathbf{u}_S, \mathbf{v}_S \rangle = 0\). Therefore, 
    \[
       \bigl| \langle \mathbf{Au}, \mathbf{Av} \rangle \bigr| = \bigl| \langle \mathbf{A}_S \mathbf{u}_S, \mathbf{A}_S \mathbf{v}_S \rangle - \langle \mathbf{u}_S, \mathbf{v}_S \rangle \bigr| = \bigl| \langle (\mathbf{A}_S^* \mathbf{A}_S - I) \mathbf{u}_S, \mathbf{v}_S \rangle \bigr|. 
    \]
    Applying the Cauchy--Schwarz inequality yields
    \[
        \bigl| \langle (\mathbf{A}_S^* \mathbf{A}_S - \mathbf{Id}) \mathbf{u}_S, \mathbf{v}_S \rangle \bigr| \le \| \mathbf{A}_S^* \mathbf{A}_S - \mathbf{Id} \|_{2 \to 2} \, \|\mathbf{u}_S\|_2 \, \|\mathbf{v}_S\|_2.
    \]
    By the RIP of order $s+t$, i.e., Definition~\ref{def:ric}, we have \( \| \mathbf{A}_S^* \mathbf{A}_S - \mathbf{Id} \|_{2 \to 2} \le \delta_{s+t} \). Finally, noting that $\|\mathbf{u}_S\|_2 = \|\mathbf{u}\|_2$ and $\|\mathbf{v}_S\|_2 = \|\mathbf{v}\|_2$, we obtain \( \bigl| \langle \mathbf{Au}, \mathbf{Av} \rangle \bigr| \le \delta_{s+t} \, \|\mathbf{u}\|_2 \, \|\mathbf{v}\|_2 \), which completes the proof.
\end{proof}

\begin{lemma}
    For any $x \in (0, 1)$, we have that \( \sqrt{x+1} < 1 + \left( \sqrt{2} + 1 \right) x \).
    \label{lem:c_ineqn}
\end{lemma}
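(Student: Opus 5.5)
The plan is to reduce the inequality $\sqrt{x+1} < 1 + (\sqrt{2}+1)x$ on $(0,1)$ to an elementary comparison of a concave function with a line. Define $g(x) := 1 + (\sqrt{2}+1)x - \sqrt{x+1}$ on $[0,1]$. It suffices to show $g(x) > 0$ for every $x \in (0,1)$. Since both sides of the original inequality are positive, an alternative is to square and compare polynomials. I will use the concavity route as the main argument and keep squaring as a cross-check.

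\textbf{Main argument.} The function $h(x)=\sqrt{x+1}$ is strictly concave on $[0,1]$, so it lies strictly above its chord and weakly below its tangent line at $0$. The tangent at $0$ is $1 + x/2$. Hence for $x>0$,
\[
\sqrt{x+1} \le 1 + \tfrac{x}{2} < 1 + (\sqrt{2}+1)x ,
\]
where the last step uses $\sqrt{2}+1 > 1/2$ and $x>0$. This essentially closes the proof. The tangent inequality $\sqrt{1+x}\le 1+x/2$ is itself immediate from $(1+x/2)^2 = 1 + x + x^2/4 \ge 1+x$.

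\textbf{Cross-check and a note on the constant.} Squaring gives $1+x < 1 + 2(\sqrt2+1)x + (\sqrt2+1)^2x^2$, which reduces to $0 < (2\sqrt2+1)x + (\sqrt2+1)^2x^2$. This is true for $x>0$. The specific constant $\sqrt{2}+1$ is presumably the slope of the chord of $h$ through $(0,1)$ and $(1,\sqrt2)$, normalized as $(\sqrt2-1)^{-1}$. It is used later when manipulating $\rho \propto \delta_{2s}$ in Proposition~\ref{prop:error_bound}; for this lemma it is far from tight.

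There is no real obstacle here. The only care needed is to keep the inequality strict, which follows because $x>0$ makes the final step strict regardless of equality in the tangent bound. The restriction $x<1$ is not even required, but it is harmless.
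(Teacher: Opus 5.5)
Your proof is correct, but it takes a different, calculus-free route from the paper.

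The paper's argument:
\begin{itemize}
\item It sets $g(x) = (\sqrt{2}+1)x - \sqrt{x+1} + 1$ and notes $g(0)=0$.
\item It computes $g'(x) = \sqrt{2}+1 - \frac{1}{2\sqrt{x+1}} > 0$ on $(0,1)$.
\item It concludes $g>0$ there by monotonicity. The value $g(1)=2$, which the paper also records, plays no role.
\end{itemize}

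Your argument instead bounds $\sqrt{1+x}$ by its tangent line $1+x/2$. You justify this purely algebraically via $(1+x/2)^2 \ge 1+x$, and then compare slopes. Your squaring cross-check is also a valid stand-alone proof, since both sides are positive.

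Both arguments rest on the same fact: the slope of $\sqrt{x+1}$ never exceeds $1/2 < \sqrt{2}+1$ for $x \ge 0$. Your version has some advantages:
\begin{itemize}
\item It avoids derivatives.
\item It yields the sharper intermediate bound $1+x/2$.
\item It makes explicit that the hypothesis $x<1$ is unnecessary. The paper's derivative argument also works verbatim for all $x>0$, but the paper does not note this.
\end{itemize}

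One side remark in your proposal is off, though it does not affect the proof. The constant $\sqrt{2}+1$ is not a chord slope of $\sqrt{x+1}$; the chord through $(0,1)$ and $(1,\sqrt{2})$ has slope $\sqrt{2}-1$. The constant is inherited from $\rho = (\sqrt{2}+1)\delta_{2s}$ in the proof of Proposition~\ref{prop:error_bound}. There it arises from clearing the denominator $1-\delta_{2s}-\sqrt{2}\,\delta_{2s} = 1-(\sqrt{2}+1)\delta_{2s}$. The lemma is then applied with $x=\delta_{2s}$ to obtain $\sqrt{1+\delta_{2s}} \le 1+\rho$.
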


\begin{proof}
    Let $g\left( x \right) = \left( \sqrt{2} + 1 \right) x - \sqrt{x+1} + 1$. Then, \( g'\left( x \right) = \sqrt{2} + 1  - \frac{1}{2\sqrt{x+1}} \). Also, $g\left( 0 \right) = 0$, $g\left( 1 \right) = 2$, and $g'\left( x \right) > 0$ for any $x \in (0, 1)$. Therefore, \( \sqrt{x+1} < 1 + \left( \sqrt{2} + 1 \right) x, \forall x \in (0, 1) \). 
\end{proof}

\begin{lemma}
     Let $ \mathbf{x} \in \mathbb{R}^N $ be an $ s $-sparse vector, and $ \mathbf{A} \in \mathbb{R}^{m \times N} $ be a random dense matrix drawn from the Gaussian matrix ensemble. Let $ \varepsilon \in (0,1) $. If 
     \begin{equation}
         \frac{m^2}{m+1} \geq 2s \left( \sqrt{\ln(2.34N/s)} + \sqrt{\frac{\ln(\varepsilon^{-1})}{s}} \right)^2
         \label{ineq:gaum}
     \end{equation}
     then with probability at least $ 1 - \varepsilon $, $ \mathbf{x} $ is the unique minimizer of $ \left\| \mathbf{z} \right\|_1 $ subject to $ \mathbf{A}\mathbf{x} = \mathbf{A}\mathbf{z} $.
     \label{lem:gaum}
\end{lemma}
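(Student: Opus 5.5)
This is the Gaussian-ensemble nonuniform recovery result (Foucart--Rauhut), and I would prove it through the null space property restricted to the fixed support, combined with Gordon's escape-through-the-mesh theorem. Let $S=\operatorname{supp}(\mathbf{x})$, with $|S|\le s$. Then $\mathbf{x}$ is the unique $\ell_1$ minimizer subject to $\mathbf{A}\mathbf{z}=\mathbf{A}\mathbf{x}$ if and only if $\ker\mathbf{A}\cap T(\mathbf{x})=\{0\}$. Here $T(\mathbf{x})$ is the descent cone of $\|\cdot\|_1$ at $\mathbf{x}$:
\[
T(\mathbf{x})=\operatorname{cone}\{\mathbf{z}-\mathbf{x}:\|\mathbf{z}\|_1\le\|\mathbf{x}\|_1\}.
\]
Equivalently, we need $\min\{\|\mathbf{A}\mathbf{v}\|_2:\mathbf{v}\in T(\mathbf{x})\cap S^{N-1}\}>0$. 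I would state this equivalence first; it is the standard optimality argument.

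Next I would apply Gordon's lemma. For a standard Gaussian $\mathbf{A}\in\mathbb{R}^{m\times N}$ and a closed subset $K$ of the unit sphere, it gives
\[
\mathbb{P}\Bigl(\inf_{\mathbf{v}\in K}\|\mathbf{A}\mathbf{v}\|_2\le E_m-w(K)-t\Bigr)\le e^{-t^2/2},
\]
where $E_m=\mathbb{E}\|\mathbf{g}\|_2$ for $\mathbf{g}\in\mathbb{R}^m$ standard Gaussian and $w(K)=\mathbb{E}\sup_{\mathbf{v}\in K}\langle\mathbf{g},\mathbf{v}\rangle$ is the Gaussian width. Choosing $t=\sqrt{2\ln(\varepsilon^{-1})}$, the failure probability is at most $\varepsilon$ provided $E_m> w(K)+\sqrt{2\ln(\varepsilon^{-1})}$. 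The left-hand side of \eqref{ineq:gaum} comes from the standard bound $E_m\ge m/\sqrt{m+1}$, so $E_m^2\ge m^2/(m+1)$.

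The main step is bounding the width of $K=T(\mathbf{x})\cap S^{N-1}$. I would use duality: $w(K)\le\mathbb{E}\,\operatorname{dist}(\mathbf{g},N(\mathbf{x}))$, where $N(\mathbf{x})=\bigcup_{\tau\ge0}\tau\,\partial\|\mathbf{x}\|_1$ is the normal cone, the polar of $T(\mathbf{x})$. The subdifferential consists of vectors equal to $\operatorname{sgn}(\mathbf{x}_i)$ on $S$ and lying in $[-1,1]$ off $S$. This gives
\[
\operatorname{dist}(\mathbf{g},\tau\partial\|\mathbf{x}\|_1)^2=\sum_{i\in S}(g_i-\tau\operatorname{sgn}x_i)^2+\sum_{i\notin S}(|g_i|-\tau)_+^2 .
\]
Taking expectations and using Jensen's inequality,
\[
w(K)^2\le s(1+\tau^2)+(N-s)\sqrt{2/\pi}\int_\tau^\infty(u-\tau)^2e^{-u^2/2}\,du .
\]
The integral is bounded by $\tfrac{2}{\sqrt{2\pi}}\,\tau^{-1}e^{-\tau^2/2}$-type terms. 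I would choose $\tau=\sqrt{2\ln(N/s)}$; after optimizing constants this should give $w(K)^2\le 2s\ln(eN/s)$, or the sharper $2s\ln(2.34N/s)$ up to lower-order terms, where $2.34$ absorbs the tail constants.

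Combining the pieces, the condition becomes
\[
\frac{m}{\sqrt{m+1}}\ge\sqrt{2s\ln(2.34N/s)}+\sqrt{2\ln(\varepsilon^{-1})}=\sqrt{2s}\Bigl(\sqrt{\ln(2.34N/s)}+\sqrt{\ln(\varepsilon^{-1})/s}\Bigr),
\]
which is exactly \eqref{ineq:gaum} after squaring. I expect the hardest part to be obtaining the precise constant $2.34$ in the width estimate. The Gordon step and the descent-cone characterization are routine, but the Gaussian tail computation must be done carefully, in particular the choice of $\tau$ and the inequality $\int_\tau^\infty(u-\tau)^2e^{-u^2/2}\le \sqrt{2\pi}\,e^{-\tau^2/2}/\tau^3$ or a variant. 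If that constant does not come out cleanly, I would fall back on citing the corresponding Foucart--Rauhut width bound directly.
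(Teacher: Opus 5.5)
Your route is the right one, but the step you leave open is exactly where the constant $2.34$ lives, and your heuristic does not close it. For comparison, the paper gives no argument of its own: it simply invokes the cited Theorem 9.16. Your outline (descent-cone characterization, Gordon's escape through the mesh with $E_m\ge m/\sqrt{m+1}$, polar-cone bound on the Gaussian width) is the standard proof behind such a result; the $m^2/(m+1)$ on the left is the signature of Gordon's lemma.

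The heuristic fails for three reasons. First, since $e>2.34$, a bound $w(K)^2\le 2s\ln(eN/s)$ is larger than the $2s\ln(2.34N/s)$ the hypothesis lets you absorb, so it does not prove the lemma as stated. Second, ``up to lower-order terms'' is not an inequality. Third, the $\tau^{-3}e^{-\tau^2/2}$ tail estimate alone blows up as $N/s\to 1$: with $N/s=1+\delta$ the off-support term behaves like $s\,\delta^{-1/2}$, because your $\tau=\sqrt{2\ln(N/s)}\to 0$.

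You can close the gap with your own choice of $\tau$. Write $h(\tau)=\mathbb{E}(|g|-\tau)_+^2=\tfrac{2}{\sqrt{2\pi}}e^{-\tau^2/2}\int_0^\infty v^2e^{-\tau v-v^2/2}\,dv$ and use two elementary bounds.

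\begin{itemize}
\item Dropping $e^{-\tau v}$ gives $h(\tau)\le e^{-\tau^2/2}$. Hence $w(K)^2\le 2s\ln(N/s)+(2-s/N)\,s$, which is at most $2s\ln(2.34N/s)$ as soon as $s/N\ge 2-2\ln 2.34\approx 0.30$, i.e.\ for all $N/s\le 3.3$.
\item Dropping $e^{-v^2/2}$ gives $h(\tau)\le \tfrac{4}{\sqrt{2\pi}}\tau^{-3}e^{-\tau^2/2}$. Hence $w(K)^2\le 2s\ln(N/s)+s+\tfrac{4}{\sqrt{2\pi}}\,s\,(2\ln(N/s))^{-3/2}$, which is at most $2s\ln(2.34N/s)$ for all $N/s\ge 2.4$.
\end{itemize}

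The two ranges overlap, so $w(K)^2\le 2s\ln(2.34N/s)$ whenever $|\operatorname{supp}\mathbf{x}|=s$, and your Gordon step finishes the argument.

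One further wrinkle concerns $|\operatorname{supp}\mathbf{x}|<s$. The map $t\mapsto t\ln(eN/t)$ is monotone, but $t\mapsto t\ln(2.34N/t)$ decreases for $t>2.34N/e\approx 0.86N$. So for $s>0.86N$ you cannot pass from the support size to $s$ by monotonicity. In that regime use the trivial bound $w(K)\le\mathbb{E}\|\mathbf{g}\|_2\le\sqrt{N}$, which suffices because $N\le 2s\ln(2.34N/s)$ for every $s\in[0.21N,N]$. For $s\le 0.86N$, monotonicity does apply.
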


\begin{proof}
    This lemma is a direct application of Theorem 9.16 from page 258 of~\citealt{foucart2022mathematical}.
\end{proof}

\begin{lemma}
    Let $\mathcal{F}$ be a unit ball in a universal reproducing kernel Hilbert space $\mathcal{H}$, defined on the compact metric space $\mathcal{X}$, with associated continuous kernel $k(\cdot,\cdot)$. Then $\text{MMD}[\mathcal{F}, p, q] = 0$ if and only if $p = q$.
    \label{lem:mmd}
\end{lemma}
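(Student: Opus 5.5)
This is the classical characteristic-kernel result of Gretton et al., cited among the works the appendix builds on. The plan is to prove the two directions separately, using the reproducing property to rewrite the MMD as a distance between mean embeddings.

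\emph{Easy direction.} If $p=q$, then $\mathbb{E}_x[F(x)]=\mathbb{E}_y[F(y)]$ for every $F\in\mathcal{F}$. Hence the supremum is $0$.

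\emph{Mean embeddings.} Since $k$ is continuous on the compact set $\mathcal{X}$, it is bounded, say $\sup_x k(x,x)<\infty$. Therefore $\mathbb{E}_p\|k(x,\cdot)\|_{\mathcal{H}}<\infty$, and the Bochner integral $\mu_p:=\mathbb{E}_p[k(x,\cdot)]\in\mathcal{H}$ exists; the same holds for $\mu_q$. By the reproducing property, $\mathbb{E}_p[F(x)]=\langle F,\mu_p\rangle_{\mathcal{H}}$ for every $F\in\mathcal{H}$. Taking the supremum over the unit ball gives
\[
\text{MMD}[\mathcal{F},p,q]=\sup_{\|F\|_{\mathcal{H}}\le 1}\langle F,\mu_p-\mu_q\rangle_{\mathcal{H}}=\|\mu_p-\mu_q\|_{\mathcal{H}}.
\]

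\emph{Hard direction.} Suppose $\text{MMD}=0$. Then $\mu_p=\mu_q$, so $\mathbb{E}_p F=\mathbb{E}_q F$ for all $F\in\mathcal{H}$. Now use universality: $\mathcal{H}$ is dense in $C(\mathcal{X})$ in the sup norm. Fix $g\in C(\mathcal{X})$ and $\epsilon>0$, and pick $F\in\mathcal{H}$ with $\|g-F\|_\infty<\epsilon$. Then
\[
|\mathbb{E}_p g-\mathbb{E}_q g|\le|\mathbb{E}_p(g-F)|+|\mathbb{E}_p F-\mathbb{E}_q F|+|\mathbb{E}_q(F-g)|<2\epsilon.
\]
Since $\epsilon$ is arbitrary, $\mathbb{E}_p g=\mathbb{E}_q g$ for every continuous $g$. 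On a compact metric space, Borel probability measures are determined by their integrals against $C(\mathcal{X})$ (Riesz representation, or Dudley's lemma). Hence $p=q$.

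The main obstacle is this last measure-determination step. If a self-contained argument is wanted instead of citing Riesz, I would take a closed set $C$ and use the continuous functions $g_n(x)=\max(0,1-n\,d(x,C))$, which decrease to $\mathbf 1_C$. Monotone convergence then gives $p(C)=q(C)$ for all closed $C$. Closed sets form a $\pi$-system generating the Borel $\sigma$-algebra, so the $\pi$–$\lambda$ theorem yields $p=q$.
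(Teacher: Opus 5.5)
Your proof is correct and follows essentially the same route as the argument the paper defers to (Theorem 5 of Gretton et al., 2012). There, too, the MMD is rewritten as $\|\mu_p-\mu_q\|_{\mathcal{H}}$ via mean embeddings, universality (density of $\mathcal{H}$ in $C(\mathcal{X})$) extends equality of expectations to all continuous functions, and the measures are concluded to coincide; the paper only cites this result, and your closed-set/$\pi$--$\lambda$ step is a valid self-contained substitute for the appeal to Dudley's lemma.
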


\begin{proof}
    The original proof can be found in~\citealt{gretton2012kernel} (see its Theorem 5).
\end{proof}

\subsection{Proof of Proposition~\ref{prop:error_bound}}
\label{app:error_bound}

\begin{proof}
    We first partition the vector $f$ into a sequence of vectors $f_{\mathcal{T}_k}$ based on the magnitudes of its entries. Specifically, let $f_{\mathcal{T}_0}$ retain the $s$ largest entries of $f$ in magnitude (with all others set to $0$), $f_{\mathcal{T}_1}$ retain the next $s$ largest entries, and so forth. Given $f = f^{\star} + h$, at a high level, our proof separately provides upper bounds for \( \left\| h_{\mathcal{T}_{0\cup1}} \right\|_2\) and \( \left\| h_{\mathcal{T}_{0\cup1}^c} \right\|_2\), thereby bounding our target $\left\| f^{\star} - f \right\|_2 = \left\| h \right\|_2$ (i.e., the sum of these two terms). According to the above definition, we can write
    \[
        \left\| h_{\mathcal{T}_j} \right\|_2 = \sqrt{\sum_i h_{\mathcal{T}_j}^2(i)} \leq \sqrt{s \max_i h_{\mathcal{T}_j}(i) ^2} \leq \sqrt{\frac{1}{s} \sum_i h_{\mathcal{T}_{j-1}}(i) } = \sqrt{\frac{1}{s}} \left\| h_{\mathcal{T}_{j-1}} \right\| _1,
    \]
    which holds for $ j \geq 2 $. Based on this, we have
    \(
        \sum_{j \geq 2} \left\| h_{\mathcal{T}_j} \right\| _2 \leq \frac{1}{\sqrt{s}} \sum_{j \geq 1} \left\| h_{\mathcal{T}_j} \right\|_1 = \frac{1}{\sqrt{s}} \left\| h_{\mathcal{T}_0^c} \right\|_1.
    \)
    Then the following useful inequality can be further derived:
    \begin{equation}
        \left\| h_{\mathcal{T}^c_{0 \cup 1}} \right\|_2 = \left\|\sum\limits_{j \ge 2} h_{\mathcal{T}_j} \right\|_2 \le \sum\limits_{j \ge 2} {{{\left\| {{h_{{\mathcal{T}_j}}}} \right\|}_2}} \le \frac{1}{{\sqrt s }} \sum\limits_{j \ge 1} {{{\left\| {{h_{{\mathcal{T}_j}}}} \right\|}_1}} = \frac{1}{{\sqrt s }} {{{\left\| {{h_{\mathcal{T}_0^c}}} \right\|}_1}}
        \label{ineqn:useful1}
    \end{equation}

    Since \( f^{\star} \) minimizes \( \| f^{\star} \| _1 \) subject to the constraint \( \boldsymbol{\Phi} f^{\star} = \boldsymbol{\Phi} f + \boldsymbol{\epsilon} \), i.e., $\min_{\boldsymbol{\Phi}f^{\star}=b} \left\| f^{\star} \right\|_p$, it is intuitively clear that \( \| f^{\star} \| _1 \) cannot exceed \( \| f \| _1 \). Hence, we have
    $\left\| f \right\|_1 \geq \left\| f^{\star} \right\| _1$, which leads to
    \[
        \begin{aligned}
            \left\| f_{\mathcal{T}_0} \right\|_1 + \left\| f_{\mathcal{T}_0^c} \right\|_1 = \left\| f \right\| _1 \geq \left\| f^{\star} \right\|_1 = \left\| f - h \right\|_1 &= \left\| (f - h)_{\mathcal{T}_0} \right\|_1 + \left\| (f - h)_{\mathcal{T}_0^c} \right\|_1 \\
            &\geq \left\| f_{\mathcal{T}_0} \right\|_1 - \left\| h_{\mathcal{T}_0} \right\|_1 + \left\| h_{\mathcal{T}_0^c} \right\|_1 - \left\| f_{\mathcal{T}_0^c} \right\|_1.
        \end{aligned}
    \]
    To this end, we get $\left\| h_{\mathcal{T}_0^c} \right\|_1 \leq 2 \left\| f_{\mathcal{T}_0^c} \right\|_1 + \left\| h_{\mathcal{T}_0} \right\|_1$. Combining with Ineq.~(\ref{ineqn:useful1}), it gives the first upper bound
    \begin{equation}
        \left\|h_{\mathcal{T}^c_{0 \cup 1}}\right\|_2 \le \frac{1}{{\sqrt s }} {{{\left\| {{h_{\mathcal{T}_0^c}}} \right\|}_1}} \le \frac{1}{{\sqrt s }}\left\|h_{\mathcal{T}_0}\right\|_1 + \frac{2}{{\sqrt s }}\left\|f_{\mathcal{T}_0^c}\right\|_1
        \label{ineqn:useful2}
    \end{equation}

    We now continue to derive an upper bound for the other term, \( \left\|h_{\mathcal{T}_{0 \cup 1}}\right\|_2 \). By applying the Cauchy–Schwarz inequality, we obtain the following inequality:
    \[
        \begin{aligned}
            \left\| \boldsymbol{\Phi}\left( f^{\star} - f \right) \right\|^2_2 & = \left\langle \boldsymbol{\Phi}f^{\star} - b + b - \boldsymbol{\Phi}f, \boldsymbol{\Phi}f^{\star} - b + b - \boldsymbol{\Phi}f \right\rangle \\
            & = \left\| \boldsymbol{\Phi}f^{\star} - b \right\|_2^2 + \left\| b - \boldsymbol{\Phi}f \right\|_2^2 + 2 \left| \left\langle \boldsymbol{\Phi}f^{\star} - b, b - \boldsymbol{\Phi}f \right\rangle \right|  \\
            & \le \left\| \boldsymbol{\Phi}f^{\star} - b \right\|_2^2 + \left\| b - \boldsymbol{\Phi}f \right\|_2^2 + 2\left\| \boldsymbol{\Phi}f^{\star} - b \right\|_2\left\| b - \boldsymbol{\Phi}f \right\|_2 \\
            & = \left( \left\| \boldsymbol{\Phi}f^{\star} - b \right\|_2 + \left\| b - \boldsymbol{\Phi}f \right\|_2 \right)^2
        \end{aligned}
    \]
    Therefore,
    \[
        \left\| \boldsymbol{\Phi}h \right\|_2 = \left\| \boldsymbol{\Phi}\left( f^{\star} - f \right) \right\|_2 \le \left\| \boldsymbol{\Phi}f^{\star} - b \right\|_2 + \left\| b - \boldsymbol{\Phi}f \right\|_2 \le 2\left\| \boldsymbol{\epsilon} \right\|_2 \le 2\eta
    \]
    Here we again apply the Cauchy–Schwarz inequality together with Definition~\ref{def:ric} to get:
    \[
       \left| \left\langle \boldsymbol{\Phi}h_{\mathcal{T}_{0 \cup 1}}, \boldsymbol{\Phi}h \right\rangle \right| \le \left\| \boldsymbol{\Phi}h_{\mathcal{T}_{0 \cup 1}} \right\|_2 \left\| \boldsymbol{\Phi}h \right\|_2 \le 2\eta \sqrt{1 + \delta_{2s}}\left\| h_{\mathcal{T}_{0 \cup 1}} \right\|_2
    \]
    By Lemma~\ref{lem:rip_ineqn}, we have
    \[\begin{cases} 
        \left| \left\langle \boldsymbol{\Phi}h_{\mathcal{T}_0}, \boldsymbol{\Phi}h_{\mathcal{T}_j} \right\rangle \right| \le \delta_{2s} \left\| h_{\mathcal{T}_0} \right\|_2 \left\| h_{\mathcal{T}_j} \right\|_2, \qquad \forall j \in \{1, 2, \dots\}, \\
        \left| \left\langle \boldsymbol{\Phi}h_{\mathcal{T}_1}, \boldsymbol{\Phi}h_{\mathcal{T}_j} \right\rangle \right| \le \delta_{2s} \left\| h_{\mathcal{T}_1} \right\|_2 \left\| h_{\mathcal{T}_j} \right\|_2, \qquad \forall j \in \{0, 2, \dots\}. 
    \end{cases}\]
    Thus, using the inequality \(\left\|  h_{\mathcal{T}_0}  \right\|_2 + \left\| h_{\mathcal{T}_1}  \right\|_2 \le \sqrt{2} \left\| h_{\mathcal{T}_{0 \cup 1}} \right\|_2 \), we obtain
    \[
        \begin{aligned}
            \left\| \boldsymbol{\Phi}h_{\mathcal{T}_{0 \cup 1}} \right\|_2^2 = \left\langle \boldsymbol{\Phi}h_{\mathcal{T}_{0 \cup 1}}, \boldsymbol{\Phi}l \right\rangle - \left\langle \boldsymbol{\Phi}h_{\mathcal{T}_{0 \cup 1}}, \sum\limits_{j \ge 2} \boldsymbol{\Phi}h_{\mathcal{T}_j} \right\rangle 
            & \le 2\eta \sqrt{1 + \delta_{2s}}\left\| h_{\mathcal{T}_{0 \cup 1}} \right\|_2 - \left\langle \boldsymbol{\Phi}h_{\mathcal{T}_0} + \boldsymbol{\Phi}h_{\mathcal{T}_1}, \sum\limits_{j \ge 2} \boldsymbol{\Phi}h_{\mathcal{T}_j} \right\rangle \\
            & \le 2\eta \sqrt{1 + \delta_{2s}}\left\| h_{\mathcal{T}_{0 \cup 1}} \right\|_2 + \delta_{2s} \left(\left\|  h_{\mathcal{T}_0}  \right\|_2 + \left\| h_{\mathcal{T}_1}  \right\|_2\right) \sum\limits_{j \ge 2} \left\| h_{\mathcal{T}_j} \right\|_2 \\
            & \le 2\eta \sqrt{1 + \delta_{2s}}\left\| h_{\mathcal{T}_{0 \cup 1}} \right\|_2 + \sqrt{2} \delta_{2s}  \left\| h_{\mathcal{T}_{0 \cup 1}} \right\|_2 \sum\limits_{j \ge 2} \left\| h_{\mathcal{T}_j} \right\|_2.
        \end{aligned}
    \]
    Moreover, since \( \left( 1 - \delta_{2s} \right) \left\| h_{\mathcal{T}_{0 \cup 1}} \right\|_2^2 \le \left\| \boldsymbol{\Phi}h_{\mathcal{T}_{0 \cup 1}} \right\|_2^2 \), revisiting Ineq.~(\ref{ineqn:useful1}) yields:
    \[
        \left\| h_{\mathcal{T}_{0 \cup 1}} \right\|_2 \le \frac{2\sqrt{1 + \delta_{2s}}}{1 - \delta_{2s}} \eta + \frac{ \sqrt{2} \delta_{2s} }{1 - \delta_{2s}} \sum\limits_{j \ge 2} \left\| h_{\mathcal{T}_j} \right\|_2 \le \frac{2\sqrt{1 + \delta_{2s}}}{1 - \delta_{2s}} \eta + \frac{ \sqrt{2} \delta_{2s} }{\sqrt{s} \left(1 - \delta_{2s}\right)} \left\| h_{\mathcal{T}^c_{0}} \right\|_1
    \]
    Combining this further with inequality $\left\| h_{\mathcal{T}_0^c} \right\|_1 \leq 2 \left\| f_{\mathcal{T}_0^c} \right\|_1 + \left\| h_{\mathcal{T}_0} \right\|_1$ established earlier, we have
    \[
        \left\| h_{\mathcal{T}_{0 \cup 1}} \right\|_2 \le \frac{2\sqrt{1 + \delta_{2s}}}{1 - \delta_{2s}} \eta + \frac{ \sqrt{2} \delta_{2s} }{\sqrt{s} \left(1 - \delta_{2s}\right)} \left\| h_{\mathcal{T}_{0}} \right\|_1 + \frac{ 2\sqrt{2} \delta_{2s} }{\sqrt{s} \left(1 - \delta_{2s}\right)} \left\| f_{\mathcal{T}^c_{0}} \right\|_1.
    \]
    It should also be noted that $\left\| h_{\mathcal{T}_0} \right\| _1 \leq \sqrt{s}\, \left\| h_{\mathcal{T}_0} \right\| _2 \leq \sqrt{s} \left\| h_{\mathcal{T}_{0 \cup 1}} \right\| _2$, which also follows from the Cauchy–Schwarz inequality, as shown in the following derivation:
    \[
        \frac{1}{{\sqrt s}}{\left\| {{h_{{\mathcal{T}_0}}}} \right\|_1} = \sqrt {{{\left( {\sum\limits_{i} {\frac{1}{{\sqrt s }}|{h_{{\mathcal{T}_0}}}\left( i \right)|} } \right)}^2}} \le \sqrt {{\left({\sum\limits_{j = 1}^s {{\frac{1}{{s }}}} }\right)}\sum\limits_{i} {h_{{\mathcal{T}_0}}^2\left( i \right)} }  = {\left\| {{h_{{\mathcal{T}_0}}}} \right\|_2} \le {\left\| {{h_{{\mathcal{T}_{0 \cup 1}}}}} \right\|_2}
    \]
    Thus, further, we have
    \begin{equation}
       \begin{aligned}
           \left\| h_{\mathcal{T}_{0 \cup 1}} \right\|_2 & \le \frac{2\sqrt{1 + \delta_{2s}}}{1 - \delta_{2s}} \eta + \frac{ \sqrt{2} \delta_{2s} }{1 - \delta_{2s}} \left\| h_{\mathcal{T}_{0 \cup 1}} \right\|_2 + \frac{ 2\sqrt{2} \delta_{2s} }{\sqrt{s} \left(1 - \delta_{2s}\right)} \left\| f_{\mathcal{T}^c_{0}} \right\|_1 \\
           & \le \frac{2\sqrt{1 + \delta_{2s}}}{1 - \left( {\sqrt 2  + 1}  \right) \delta_{2s}} \eta + \frac{ 2\sqrt{2} \delta_{2s} }{\sqrt{s} - \sqrt{s} \left( {\sqrt 2  + 1}  \right) \delta_{2s}} \left\| f_{\mathcal{T}^c_{0}} \right\|_1
       \end{aligned}
        \label{ineqn:useful3}
    \end{equation}
    Now using the first upper bound as in Ineq.~(\ref{ineqn:useful2}), we can write 
    \[
        \left\| f^{\star} - f \right\|_2 = \left\| h \right\|_2 \le \left\| h_{\mathcal{T}_{0 \cup 1}} \right\|_2 + \left\| h_{T^c_{0 \cup 1}} \right\|_2 \le 2 \left\| h_{\mathcal{T}_{0 \cup 1}} \right\|_2 + \frac{2}{\sqrt{s}} \left\| f_{\mathcal{T}_0^c} \right\|_1.
    \]
    Finally, Substituting into the Ineq.~(\ref{ineqn:useful3}), we see that
    \[
        \left\| f^{\star} - f \right\|_2 = \left\| h \right\|_2 \le \frac{{ {{2 + \left( {2\sqrt 2  + 2} \right){\delta_{2s}}}}}}{{\sqrt s - \sqrt s \left( {\sqrt 2  + 1} \right){\delta_{2s}}}} \left\| f_{\mathcal{T}_0^c} \right\|_1 + \frac{4 \sqrt{1+\delta_{2s}}}{1 - \left( \sqrt{2}+1 \right)\delta_{2s}} \eta
    \]
    Let $\rho = \left( \sqrt{2} + 1 \right) \delta_{2s}$; then the above expression can be rewritten as
    \[
        \left\| f^{\star} - f \right\|_2 \le \frac{2\left(1+\rho\right)}{\sqrt{s}\left(1-\rho\right)} \left\| f_{\mathcal{T}_0^c} \right\|_1 + \frac{4\sqrt{1+\delta_{2s}}}{1-\rho} \eta \le \frac{2\left(1+\rho\right)}{\sqrt{s}\left(1-\rho\right)} \left\| f_{\mathcal{T}_0^c} \right\|_1 + \frac{4 \left(1+\rho\right)}{1-\rho} \eta, 
    \]
    where the second inequality is by Lemma~\ref{lem:c_ineqn}. And by Definition~\ref{def:sa}, \(\sigma_s(f)_1\) is the minimum \(\ell_1\)-error over all \(s\)-sparse approximations of \(f\). Since \(f_{\mathcal{T}_0}\) retains the \(s\) largest entries of \(f\) in magnitude, \( \sigma_s(f)_1 = \left\| f - f_{\mathcal{T}_0} \right\|_1 = \left\| f_{\mathcal{T}_0^c} \right\|_1 \). Hence, $\|f^{\star}-f\|_2 \leq C s^{-1/2} \sigma_s(f)_1 + 2C \eta$, where $C = \frac{2(1+\rho)}{1-\rho}$. This concludes the proof.
\end{proof}

\subsection{Proof of Theorem~\ref{thm:size}}

\begin{proof}
    Clearly, our proof is divided into two parts, establishing the lower bounds for the 0-1 sparse and random dense cases separately. Without loss of generality, we further emphasize that these two cases specifically refer to the count–min and Gaussian settings; that is, in the 0-1 matrix, each column contains exactly \(k\) (determined by the number of hash functions) 1's. Before we prove the first case, we need to introduce the following variant of RIP: there exists $ c > 0 $ such that for all $ s $-sparse vector $ f \in \mathbb{R}^N $, $c \|\boldsymbol{\Phi}f\|_2 \leq \|f\|_2 \leq cD \|\boldsymbol{\Phi}f\|_2$.

    \textbf{\textcircled{\scriptsize 1} $\boldsymbol{\Phi}$ is 0-1 sparse matrix.}
    To start, we square the new RIP inequality to obtain \(c^2 \|\boldsymbol{\Phi}f\|^2_2 \leq \|f\|^2_2 \leq c^2D^2 \|\boldsymbol{\Phi}f\|^2_2\). We can bound $ c $ by considering the 1-sparse vector $ f = e_i $, where $ e_i $ is a standard basis vector with a 1 in a coordinate corresponding to a column of $ \boldsymbol{\Phi} $ with $ k $ 1's. Then, the inequalities above give $ c^2 k \leq 1 \leq c^2 D^2 k $. We will only need the right inequality, which we rewrite as $\frac{1}{c^2} \leq D^2 k$.

    Now, consider the vector $ f $ with 1's in the first $ s $ positions and 0's elsewhere, i.e., $ f = \sum_{i=1}^s e_i $. Let $ w_i $ denote the number of 1's in row $ i $ and in the first $ s $ columns of $ \boldsymbol{\Phi} $. Then, since the first $ s $ columns of $\boldsymbol{\Phi} $ contain $ ks $ 1's, we can write $ \sum_i w_i \geq ks$. Applying the Cauchy-Schwarz inequality, we obtain
    \[
        \left\|\boldsymbol{\Phi}f\right\|_2^2 = \sum_{i=1}^m w_i^2 \geq \frac{(\sum_{i=1}^m w_i)^2}{m} \geq \frac{(ks)^2}{m}, \ \text{but} \  \left\|\boldsymbol{\Phi}f\right\|_2^2 \leq \frac{\|f\|_2^2}{c^2} \leq D^2 ks,
    \]
    so putting the two inequalities together gives $ \frac{(ks)^2}{m} \leq D^2 ks $. Cancelling out $ ks $ from both sides gives $ \frac{k}{m} \leq \frac{D^2}{s} $.

    Also, let $ r $ be the maximum number of 1's in any row of $ \boldsymbol{\Phi} $. By permuting the rows and columns of $ \boldsymbol{\Phi} $, we may assume that $ \boldsymbol{\Phi}_{1,i} = 1 $ for $ 1 \leq i \leq r $, i.e., that the first row of $ \boldsymbol{\Phi} $ has a 1 in the first $ r $ entries. In the following proof, we assume that this reordering has already been done. We can now obtain a second bound on $ \frac{k}{m} $ from the obvious inequality: \( k N \leq \text{number of 1's in } \boldsymbol{\Phi} \leq r m \), and thus, $ \frac{k}{m} \leq \frac{r}{n} $.

    For notational convenience, let $ \tau = \min\{r, s\} $. Consider the vector $ f $ with 1's in the first $ \tau $ positions and 0's elsewhere, i.e., $ f = \sum_{i=1}^\tau e_i $. For this choice of $ f $, we see that \(\|\boldsymbol{\Phi}f\|_2^2 \geq \tau^2\) because the first entry of $ \boldsymbol{\Phi}f $ is $ \tau $. Note that because $ \tau \leq s $, the inequalities defining the new RIP apply to $ f $. Thus, we can apply $\frac{1}{c^2} \leq D^2 k$ to get \(\tau^2 \leq \|\boldsymbol{\Phi}f\|_2^2 \leq \frac{\|f\|_2^2}{c^2} \leq D^2 k \tau \).

    We now plug in our two bounds on $ \frac{k}{m} $. If $ r > s $, then $ \tau = s $, and we use the bound $ \frac{k}{m} \leq \frac{D^2}{s} $. This gives \( s^2 \leq D^2 \frac{D^2}{s} m s \), so $ m \geq \frac{s^2}{D^4} $. Similarly, if $ r \leq s $, then $ \tau = r $, so using our second bound on $ \frac{k}{m} $ gives $r^2 \leq D^2 \frac{r}{N} m r$. Therefore, in this case $ m \geq \frac{N}{D^2} $. Putting the two bounds together, $ m \geq \min\left\{ \frac{s^2}{D^4}, \frac{N}{D^2} \right\}$, or we can say that $m \geq \mathcal{O}\left(s^2\right)$.

    \textbf{\textcircled{\scriptsize 2} $\boldsymbol{\Phi}$ is random dense matrix.} 
    Based on Lemma~\ref{lem:gaum}, we consider the asymptotic regime where $N, s, m \to \infty$. The left-hand side of Ineq.~(\ref{ineq:gaum}) satisfies $\frac{m^2}{m+1} \approx m$ for large $m$. On the right-hand side, since $N/s$ is large, we have $\ln(2.34 N/s) = \ln(N/s) + O(1) \approx \ln(N/s)$. Furthermore, as $s$ grows, the term $\frac{\ln(\epsilon^{-1})}{s}$ becomes negligible and vanishes in the limit. Substituting these approximations back into the inequality yields $m \gtrsim 2s (\sqrt{\ln(N/s)})^2$, which simplifies to the lower bound $m \ge 2s \ln(N/s)$, or we can say that $\mathcal{O}\!\left( s \log (N/s) \right)$, ending the proof.
\end{proof}

\subsection{Proof of Proposition~\ref{prop:failure_guarantee}}
\label{app:failure_guarantee}

\begin{proof}
    Before decomposing the vector \( f \), we first perform a singular value decomposition (SVD) of the sketching matrix \( \boldsymbol{\Phi} \): \( \boldsymbol{\Phi} = \boldsymbol{U} \boldsymbol{\Sigma} \boldsymbol{V^{\top}} \).
    Here, \( \boldsymbol{U} \in \mathbb{R}^{m \times m} \) and \( \boldsymbol{V} \in \mathbb{R}^{N \times N} \) are orthogonal matrices, and \( \boldsymbol{\Sigma} \in \mathbb{R}^{m \times N} \) is a rectangular diagonal matrix whose diagonal entries are the singular values of $\boldsymbol{\Phi}$, arranged in descending order. Let the rank of $\boldsymbol{\Phi}$ be $r$. According to the properties of SVD, the orthogonal matrix \( \boldsymbol{V} \) can be written as \( \boldsymbol{V} = [\boldsymbol{\psi}, \boldsymbol{\phi}] \), where \( \boldsymbol{\psi} = {\psi_1, \psi_2, \dots, \psi_r} \in \mathbb{R}^{N \times r} \) forms an orthonormal basis for the range-space of \( \boldsymbol{\Phi} \), and \( \boldsymbol{\phi} = {\phi_1, \phi_2, \dots, \phi_{N-r}} \in \mathbb{R}^{N \times (N-r)} \) forms an orthonormal basis for the null-space of \( \boldsymbol{\Phi} \).

    Now define \( f_{\Phi} = \boldsymbol{\psi} \boldsymbol{\psi}^{\top} f \) and \( f_N = \boldsymbol{\phi} \boldsymbol{\phi}^{\top} f \). Note that 
    \[
        f_{\Phi} + f_N = \boldsymbol{\psi} \boldsymbol{\psi}^{\top} f + \boldsymbol{\phi} \boldsymbol{\phi}^{\top} f = (\boldsymbol{\psi} \boldsymbol{\psi}^{\top} + \boldsymbol{\phi} \boldsymbol{\phi}^{\top}) f = \boldsymbol{V} \boldsymbol{V}^{\top} f = f
    \]
    Hence, \( f \) can be decomposed into two components, \( f_{\Phi} \) and \( f_N \). Since \( f = f_{\Phi} + f_N \), left-multiplying both sides by \( \boldsymbol{\Phi} \) yields \( \boldsymbol{\Phi}f = \boldsymbol{\Phi}f_{\Phi} + \boldsymbol{\Phi}f_N \). And because \( f_N = \boldsymbol{\phi} \boldsymbol{\phi}^{\top} f \) and \( \boldsymbol{\phi} \) is a basis of the null-space of \( \boldsymbol{\Phi} \), we have \( \boldsymbol{\Phi}f_N = \boldsymbol{\Phi} \boldsymbol{\phi} \boldsymbol{\phi}^{\top} f = 0 \), and therefore \( \boldsymbol{\Phi}f_{\Phi} = \boldsymbol{\Phi}f = b \). Thus, \( f_{\Phi} \) and \( f_N \) correspond to the components of $f$ lying in the range-space and the null-space of \( \boldsymbol{\Phi} \), respectively.
    Moreover,
    \[
        f_{\Phi}^{\top} f_N = \left(\boldsymbol{\phi} \boldsymbol{\phi}^{\top} f \right)^{\top}, \quad \boldsymbol{\phi} \boldsymbol{\phi}^{\top} f = f^{\top} \boldsymbol{\psi} (\boldsymbol{\psi}^{\top} \boldsymbol{\phi}) \boldsymbol{\phi}^{\top} f = 0
    \]
    Therefore, \( f_{\Phi} \) and \( f_N \) are orthogonal and uncorrelated, which completes the proof.
\end{proof}

\subsection{Proof of Theorem~\ref{thm:mapping}}

\begin{proof}
    This subsection continues the proof of Proposition~\ref{prop:failure_guarantee}; thus, there is conceptual overlap, and unless otherwise stated, the used notation is consistent with the preceding discussion. 
    
    We first notice that \( b = \boldsymbol{\Phi} f_{\Phi} = \boldsymbol{U} \boldsymbol{\Sigma} \boldsymbol{V^{\top}} f_{\Phi} \). Recalling that \( \boldsymbol{V^{\top}} = [\boldsymbol{\psi}, \boldsymbol{\phi}] \), and using the facts that \( \boldsymbol{\psi}^{\top} \boldsymbol{\psi} = \mathbf{Id} \) and \( \boldsymbol{\phi}^{\top} \boldsymbol{\phi} = 0 \), this expression can be further expanded as 
    \(
        \left(
            \begin{array}{c}
                 \boldsymbol{\Sigma}_{\leq r}^{-1} \boldsymbol{U}_{\leq r}^{\top} b \\ 
                 \mathbf{0}
            \end{array}
        \right)
        = \boldsymbol{V^{\top}} f_{\Phi},
    \)
    where \( \boldsymbol{\Sigma}_{\leq r} \in \mathbb{R}^{r \times r} \) is the diagonal matrix consisting of the top \( r \) singular values of \( \boldsymbol{\Phi} \), \( \boldsymbol{\Sigma}_{\leq r}^{-1} \) is its inverse, \( \boldsymbol{U}_{\leq r}^{\top} \in \mathbb{R}^{r \times m} \) denotes the first \( r \) rows of \( \boldsymbol{U}^{\top} \), and \( \mathbf{0} \) is the zero vector.
    Since \( \boldsymbol{V}^{\top} \) is full rank, there exists a one-to-one correspondence between \( f_{\Phi} \) and \( \bar{b} = \boldsymbol{U}_{\leq r}^{\top} b \). 
    
    Next, we show that \( b \) and \( \bar{b} \) are also in one-to-one correspondence. Analogous to \( \boldsymbol{V} \), we can rewrite \( \boldsymbol{U}^{\top} \) as \( \boldsymbol{U}^{\top} = [\boldsymbol{\psi}', \boldsymbol{\phi}'] \), where \( \boldsymbol{\psi}' = \boldsymbol{U}_{\leq r}^{\top} \) forms an orthonormal basis for the column space of \( \boldsymbol{\Phi} \), and \( \boldsymbol{\phi}' \) spans the null space of \( \boldsymbol{\Phi}^{\top} \). Since for all possible \( f \), the vector \( b \) always lies in the column space of \( \boldsymbol{\Phi} \), we have 
    \( 
        \boldsymbol{U}^{\top} b = 
        \left(\begin{array}{c}
            \boldsymbol{\psi}' b \\
            \mathbf{0}
        \end{array}\right) = 
        \left(\begin{array}{c}
            \bar{b} \\
            \mathbf{0}
        \end{array}\right).
    \)
    Since \( \boldsymbol{U}^{\top} \) is full rank, there exists a one-to-one correspondence between \( b \) and \( \bar{b} \). To this end, \( b \) is uniquely associated with both \( \bar{b} \) and \( f_{\Phi} \).

    Similarly, left-multiplying the identity \( f = f_{\Phi} + f_N \) by \( \boldsymbol{\phi}^{\top} \) yields 
    \(
        \boldsymbol{\phi}^{\top} f = \boldsymbol{\phi}^{\top} f_{\Phi} + \boldsymbol{\phi}^{\top} f_N = \boldsymbol{\phi}^{\top} f_N =: Z \in \mathbb{R}^{N-r}.
    \)
    We then perform an SVD of \( \boldsymbol{\phi}^{\top} \), denoted by \( \boldsymbol{\phi}^{\top} = U' \Sigma' V'^{\top} \). Since the rank of \( \boldsymbol{\phi}^{\top} \) is \( N - r \), its singular value matrix \( \Sigma' \) contains \( N - r \) non-zero singular values. Moreover, because the null space of \( \boldsymbol{\phi}^{\top} \) is spanned by \( \operatorname{span}(\boldsymbol{\psi}) \), we have \( V' = [\boldsymbol{\phi}, \boldsymbol{\psi}] \). Consequently, the above equation can be rewritten as
    \(
        \left(\begin{array}{c}
            \Sigma_{\leq N-r}^{'-1} U'^{\top} Z \\
            \mathbf{0}
        \end{array}\right)
        = V'^{\top} f_N,
    \)
    where \( \Sigma'_{\leq N-r} \) consists of the first \( N - r \) columns of \( \Sigma' \), and \( \Sigma_{\leq n-r}^{'-1} \) denotes its inverse. Since both \( V' \) and \( U' \) are full-rank matrices, the above equation establishes a one-to-one correspondence between \( Z \) and \( f_N \).

    Since any fixed vector can be regarded as the mean of some Gaussian distribution, we can efficiently approximate $Z$ using a gaussian vector $z$. The vector $z$ can be standard gaussian if passed through a single-layer MLP by leveraging Universal Approximation Theorem~\cite{pinkus1999approximation}. Therefore, we can say that the pair \( [b, z] \) uniquely determines \( f = f_{\Phi} + f_N \). Equivalently, there exists a learnable mapping \(G : \mathbb{R}^{m + N - r} \rightarrow \mathbb{R}^{N} \) such that \(G([b, z]) = f \), which completes the proof.
\end{proof}

\subsection{Proof of Theorem~\ref{thm:convergence}}

\begin{proof}
    Suppose we take a posterior conditioned on a fixed $b$, i.e., $p(f \mid b)$, and transform it using the forward pass of our perfectly converged INN.
    First, if the orthogonal loss $\mathcal{L}_{\mathbf{ort}}$ converges to zero, we have $ q(b,z) = p_B(b)p_Z(z)$, according to Definition~\ref{def:kld} (for KL divergence) and Lemma~\ref{lem:mmd} (for MMD). Then because $\mathcal{L}_{\mathbf{con}} = \mathbb{E} \left[ \left( \boldsymbol{\Phi}f - b \right)^2 \right] \rightarrow 0$ and $\mathcal{L}_{\mathbf{inv}} = \mathbb{E} \left[ \left( f -G\left(G^{-1}(f) \right) \right)^2 \right] \rightarrow 0$, we know that the INN's output distribution of $b'$ (marginalized over $z$) must be \( p_B'(b') = \delta(b' - b) \) where $\delta (\cdot)$ is the Dirac delta function. Also, because of the independence between $z$ and $b'$ in the output, the output distribution of $z$ is still pre-defined $p_Z(z)$. So the joint distribution of outputs is $q(b',z) = \delta(b' - b) p_Z(z)$.

    When we run forward the INN, and repeatedly input $b$ while sampling $z \sim p_Z(z)$, this is the same as sampling $\pi := [b', z]$ from the joint distribution $q(b',z)$ above. Next, we show that the INN $G$ will transform these samples back to $p(f \mid b)$ or $p_F(f)$ for convenience.
    
    Note that by the change of variables theorem, we can write the backward transformation as 
    \( q(\pi) = p_F\left(f=G \left( \pi \right) \right) \left| \operatorname{det} \frac{dG}{d\pi} \right| \) where $\operatorname{det} \frac{dG}{d\pi}$ is the Jacobian determinant of the bijective function. Similarly, for the forward direction, we have $p'_F(f) = q\left(\pi =G^{-1}\left( f \right)\right) \left| \operatorname{det} \frac{dG^{-1}}{df} \right|$. Putting it together, we have
    \[
        p'_F(f) = p_F\left(f=G \left(G^{-1}\left( f \right) \right) \right) \left| \operatorname{det} \left[ \frac{dG}{d\pi} \frac{dG^{-1}}{df} \right] \right| = p_F(f) \left| \operatorname{det} \left[ \mathbf{Id} \right] \right| = p_F(f) = p(f \mid b),
    \]
    which concludes our proof.
\end{proof}

\subsection{Proof of Theorem~\ref{thm:em}}
\label{subsec:em}

\begin{proof}
    Without loss of generality, we start by a rescaled version of the linear sketching problem:
    \[
        g_j = \sum_i f_i h_{i,j}, \quad \text{s.t.} \ \ f_i \geq 0, \ \sum f_i = 1, \ g_j = \frac{b_j}{\sum_j b_j}, \ \text{and} \ h_{i, j} = \frac{\boldsymbol{\Phi}^\top_{i,j}}{\sum_i \boldsymbol{\Phi}^\top_{i,j}} = \frac{\boldsymbol{\Phi}^\top_{i,j}}{k_j}, 
    \]
    which transforms all measures, i.e., $f$ and $b$, into probability measures. Then, if the original solution is given, the solution can be transformed as follows: given the obtained solution $f$, the true solution is $\hat f = \frac{k_j}{\sum_j b_j} f$. 
    
    Our formulation can also be regarded as a statistical model. Suppose we observe $ N $ samples $ Y_1, \ldots, Y_N $, which are independently drawn from a distribution $ g $. If the distribution $ g(y) $ is given by $ g(y) = \int h(x, y) f(x) \, dx$ where $h(x, y)$ is the corresponding kernel function of $h_{(\cdot)}$, this is equivalent to the following two-step sampling process: first draw $ X $ from $ f(x) $, and then draw $ Y $ from the conditional distribution $ h(x, \cdot) $ given $ X $. In other words, the observed data $ Y $ arise from a latent variable model:
    $$
        X \to Y, \quad \text{where } X \sim f \text{ and } Y \mid X = i \sim h_{i,j}.
    $$
    As a result, the joint and marginal probabilities are given by
    $$
        P(X = i, Y = j) = f_i h_{i,j}, \quad P(Y = j) = g_j = \sum_i f_i h_{i,j}.
    $$
    This structure constitutes a standard latent variable model, which is formally very similar to a Gaussian Mixture Model (GMM), except that in this case, the transition probabilities $ h_{i,j} $ are fixed and known.
    If we only observe the data $ Y_1, \ldots, Y_N $, the likelihood function is given by
    $
        L_N(f) = \prod_j \left( \sum_i f_i h_{i,j} \right)^{n_j},
    $
    where $ n_j $ denotes the number of times $ Y $ takes the value $ j $, and $ \sum_j n_j = N $.
    Therefore, the log-likelihood is
    $
        \log L_N(f) = \sum_j n_j \log \left( \sum_i f_i h_{i,j} \right).
    $
    If we express the empirical distribution as
    $
        g_j = \frac{n_j}{N},
    $
    then we obtain
    \begin{equation}
        \frac{1}{N} \log L_N(f) = \sum_j g_j \log \left( \sum_i f_i h_{i,j} \right)
        \label{eqn:logll}
    \end{equation}

    Next, we show its connection with KL divergence. First, we can interpret the sketching problem $g_j = \sum_i f_i h_{i,j}$ as solving the following optimization problem: $\min_{f \geq 0,\ \sum f_i = 1} \mathcal{D}_{\text{KL}}\left( g \parallel H^\top f \right)$ where $ H^\top f $ represents the distribution induced by $ f $ through the transition matrix $ H $, i.e., $ (H^\top f)_j = \sum_i f_i h_{i,j} $. In other words, we seek a distribution $ f $ such that the resulting mixture distribution $ H^\top f $ is as close as possible to the observed empirical distribution $ g $, measured in terms of KL divergence. By Definition~\ref{def:kld}, this minimization problem is equivalent to:
    \(
        \max_f \sum_j g_j \log \left( \sum_i f_i h_{i,j} \right),
    \)
    which is precisely Eqn.~(\ref{eqn:logll}).

    Now we return to the latent variable interpretation introduced earlier. Let us clarify the assumptions and notation: \textcircled{\scriptsize 1} Complete data: $ (X, Y) $, with joint distribution $ P(X = i, Y = j) = f_i h_{i,j} $; \textcircled{\scriptsize 2} Observed data: only $ Y $ is observed. This is precisely the definition of an incomplete data problem. Consequently, the log-likelihood based on the complete data is $\sum_{i,j} n_{i,j} \log(f_i h_{i,j})$ where $ n_{i,j} $ is the number of times $ (X = i, Y = j) $ occurs. However, in practice, we can only compute the log-likelihood based on the observed data $\sum_j n_j \log\left( \sum_i f_i h_{i,j} \right)$. Therefore, this problem is well-suited for iterative solution using the EM algorithm, which alternates between estimating the missing data (Expectation step, E-step) and maximizing the likelihood (Maximization step, M-step) under the current estimate of the parameters.

    Formally, in each iteration of the EM algorithm, we can proceed as follows:
    
    \textbf{\textbullet\ E-step:} Based on the current estimate $ f^{(n-1)} $, we compute the hidden responsibility probabilities, i.e., the posterior probability that a given observation $ Y = j $ originated from latent component $ X = i $. This is given by Bayes' rule:
    \begin{equation}
        P^{(n-1)}(X = i \mid Y = j) = \frac{f_i^{(n-1)} h_{i,j}}{\sum_k f_k^{(n-1)} h_{k,j}}
        \label{eqn:prob}
    \end{equation}

    \textbf{\textbullet\ M-step:} We update the parameter $ f_i $ by maximizing the expected complete-data log-likelihood with respect to $ f $. The updated estimate is obtained as
    \begin{equation}
        f_i^{(n)} = \sum_j g_j \, P^{(n-1)}(X = i \mid Y = j)
        \label{eqn:rule}
    \end{equation}

    Substituting the expression for the conditional probability, i.e., Eqn.~(\ref{eqn:prob}), into the update rule, i.e., Eqn.~(\ref{eqn:rule}), we have
    \[
        f_i^{(n)} = f_i^{(n-1)} \sum_j \left( \frac{h_{i,j}}{\sum_k f_k^{(n-1)} h_{kj}} \right) g_j  \quad \Longleftrightarrow \quad \hat f_i^{(n)} = \hat f_i^{(n-1)} \sum_j \frac{\boldsymbol{\Phi}^\top_{i,j}}{(\boldsymbol{\Phi} \hat f^{(n-1)})_j} b_j.
    \]
    This recurrence relation provides an iterative scheme for updating the distribution $ f $, and it corresponds exactly to the formula presented in Theorem~\ref{thm:em}, which concludes the proof.
\end{proof}

\vspace{-1.5mm}
\begin{algorithm}[H]
\caption{(Pytorch) EM iterative refinement pseudocode for \texttt{FLORE}}
\label{alg:em}
\vspace{-1.5mm}
\begin{lstlisting}[language=Python]
# f: Count-Min solution (batch-wise);  b: sketch observations;  Phi: sketching matrix;  T: number of EM steps

minloss = L1Loss(Phi @ f, b)                             # initialize flag
for t in range(T):
    # E-step: compute normalization factors
    C1 = Phi / (Phi @ f)                                 # measurement-wise normalization
    C2 = f / Sum(Phi, axis=0)                            # prior normalization

    # M-step: multiplicative update
    ft = C2 * WeightedSum(b, C1)                             

    # accept update if reconstruction error is reduced
    if L1Loss(Phi @ ft, b) < minloss:
        f = ft; minloss = L1Loss(Phi @ ft, b)

return f
\end{lstlisting}
\vspace{-1.5mm}
\end{algorithm}
\vspace{-1.5mm}

\begin{remark}
    From an interpretative perspective, we can view this EM algorithm as ``denoising'' the Count-Min solutions using global information, i.e., the linear system $b = \boldsymbol{\Phi} f$. To realize the iterative update, we have the following formula:
    \begin{equation}
        {f_i} \leftarrow \frac{{{f_i}}}{{\sum\nolimits_{j = 1}^m {{\boldsymbol{\Phi}^\top_{i,j}}} }}\sum\nolimits_{j = 1}^m {\frac{{{\boldsymbol{\Phi}^\top_{i,j}} \times {b_j}}}{{\sum\nolimits_{k = 1}^N {{\boldsymbol{\Phi}_{i,k}} \times {f_k}} }}}
    \end{equation}
    While the EM algorithm is defined on a single-key basis, it admits more efficient GPU parallelization. Algorithm~\ref{alg:em} illustrates the procedure in a PyTorch-like pseudocode, in which all batch-wise operations are implemented via matrix and tensor primitives. At the end of each iteration, the solution is updated only if the reconstruction error improves, guaranteeing that the refined result does not degrade and otherwise stays unchanged.
\end{remark}

\newpage

\section{On the Choice of GMs for Data Stream}
\label{app:gm_select}

\begin{figure}
    \centering
    \setlength{\belowcaptionskip}{-0.1cm} 
    \includegraphics[width=0.99\linewidth]{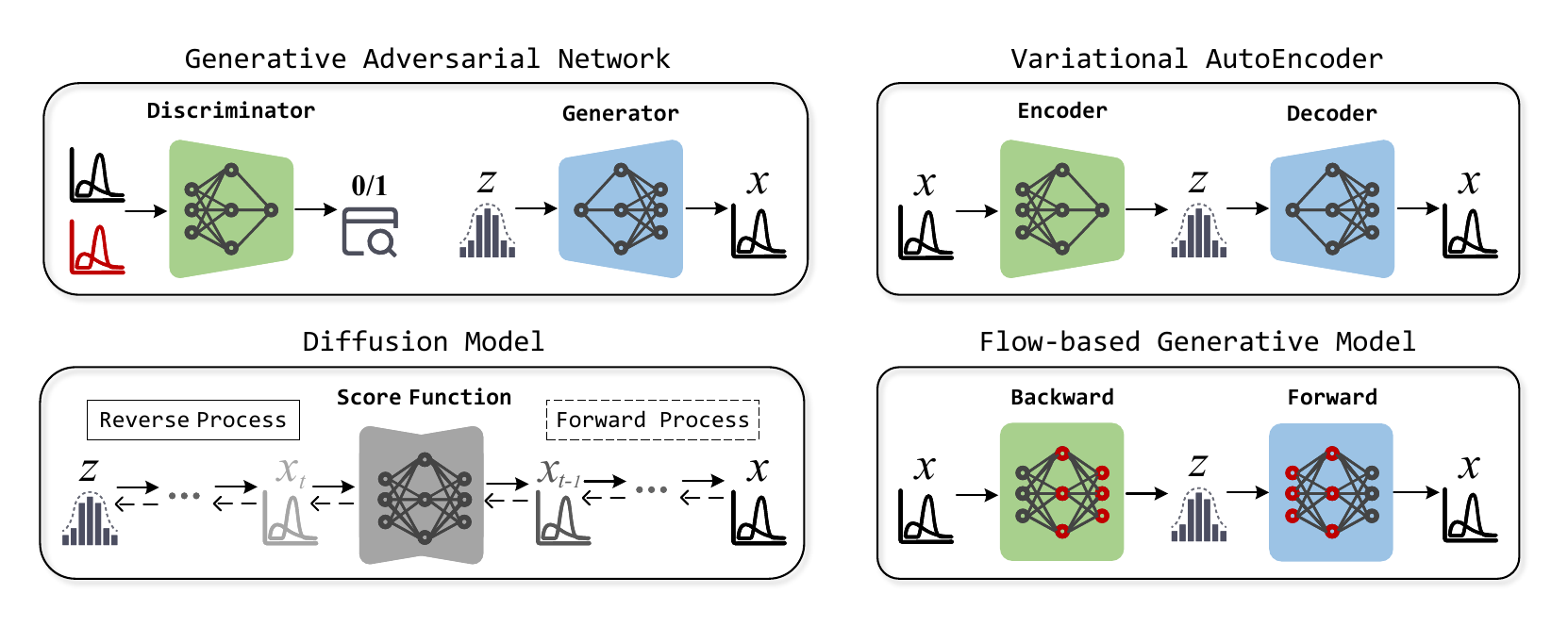}
    \caption{Overview of different types of deep generative models considered in this work.}
    \label{fig:gm_overview}
\end{figure}  

In this section, we aim to analyze deep GM families to select a base model for our generative sketching paradigm. As mentioned in \S\ref{subsec:igp}, here we restrict our analysis to four representative classes of deep generative models—\textcircled{\scriptsize 1} Variational Autoencoders (VAEs): it learns a probabilistic latent representation by optimizing a variational lower bound. \textcircled{\scriptsize 2} Generative Adversarial Networks (GANs): it generates data through an adversarial game between a generator and a discriminator. \textcircled{\scriptsize 3} Diffusion Models (DMs): it synthesizes data by iteratively reversing a noise-injection process. \textcircled{\scriptsize 4} Flow-based Generative Models (FGMs): it constructs an exact and invertible mapping between data and latent variables, allowing tractable likelihood computation and efficient sampling under architectural constraints.
The high-level overview of these four GMs is illustrated in Figure~\ref{fig:gm_overview}. Further, to support real-world streaming applications, our model selection is guided by three key principles: (i) The selected GM should support \textbf{easy adaptation}, because after prolonged usage, the model inevitably encounters generalization issues, and any subsequent fine-tuning must minimize disruption as much as possible; (ii) The selected GM should support \textbf{fast inference}, because sketch recovery and downstream analytics are often performed under limited latency constraints in high-speed data stream processing; and (iii) The selected GM should support \textbf{high accuracy \& realness}, because faithful reconstruction of the underlying data distribution is essential for reliable estimation and decision-making in measurement-based applications.

\subsection{Unconditional Distribution Modeling Capability}

We start with evaluating the GM's ability to fit the data stream distribution, as this ability forms the foundation of any downstream generative task. To this end, we construct synthetic data streams drawn from two different distributions. Specifically, \textit{Gaussian} streams are used to assess the models’ generalization ability under smooth and light-tailed distributions, while \textit{Zipfian} streams are adopted to emulate natural data stream characteristics with strong skewness and heavy tails. For both distributions, we vary the streaming data dimensionality, i.e., the number of distinct keys, from 1K to 100K, thereby systematically examining the robustness and scalability of each model across a wide range of stream sizes. 
Then, we train unconditional GMs on these synthetic streams using standard training protocols. All models adopt lightweight architectures composed of simple linear layers. VAEs, GANs, and FGMs are implemented with conventional adversarial or log-likelihood objectives, while the Wasserstein distance regularization is additionally included to improve adversarial training stability (WGAN,~\citealp{arjovsky2017wasserstein}). For DMs, we directly train the underlying network to predict the clean signal from noisy inputs~\cite{yuan2024diffusion,li2025back}, in which 1,000 diffusion and sampling steps is used. 

\begin{figure}
    \subfigure[\# of distinct keys $\approx$ 1K\label{subfig:1k}]{
        \centering
        \includegraphics[width=0.235\linewidth]{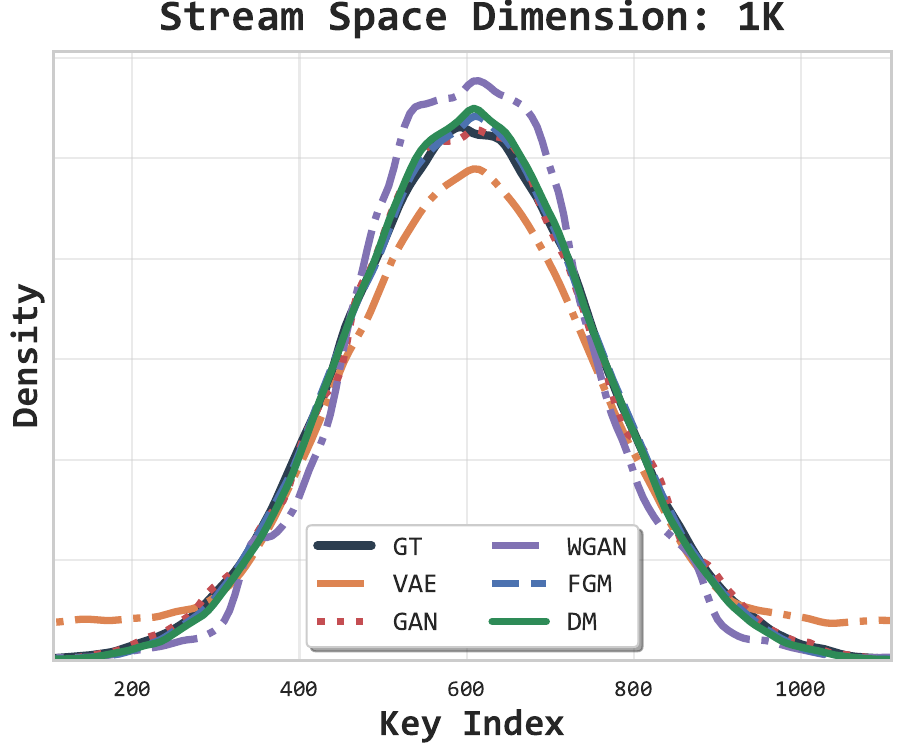}
    }
    \hfill
    \subfigure[\# of distinct keys $\approx$ 10K\label{subfig:10k}]{
        \centering
        \includegraphics[width=0.235\linewidth]{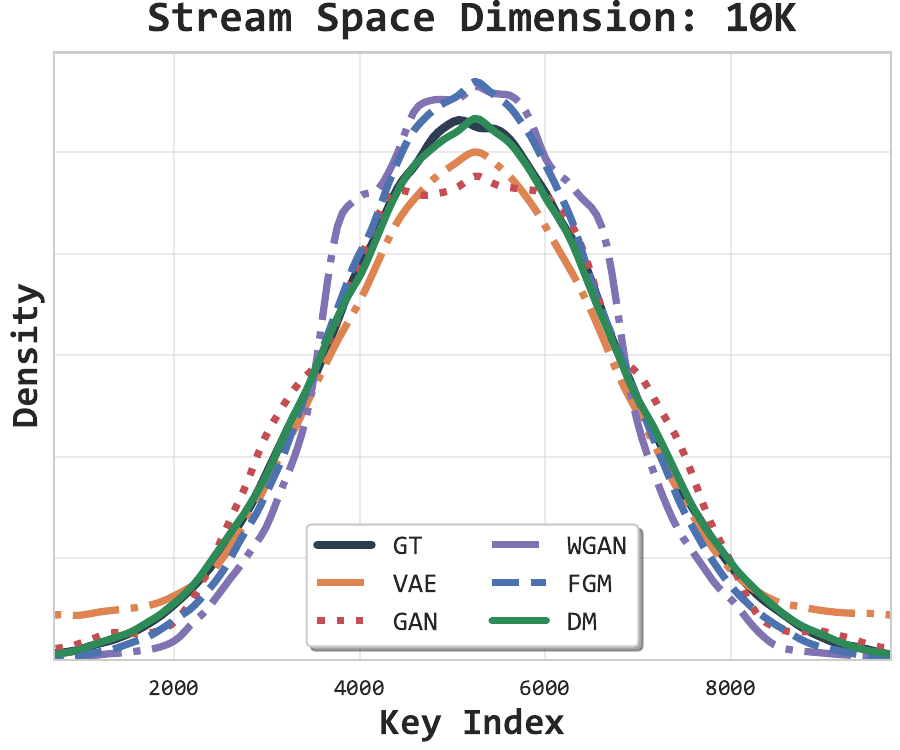}
    }
    \hfill
    \subfigure[\# of distinct keys $\approx$ 50K\label{subfig:50k}]{
        \centering
        \includegraphics[width=0.235\linewidth]{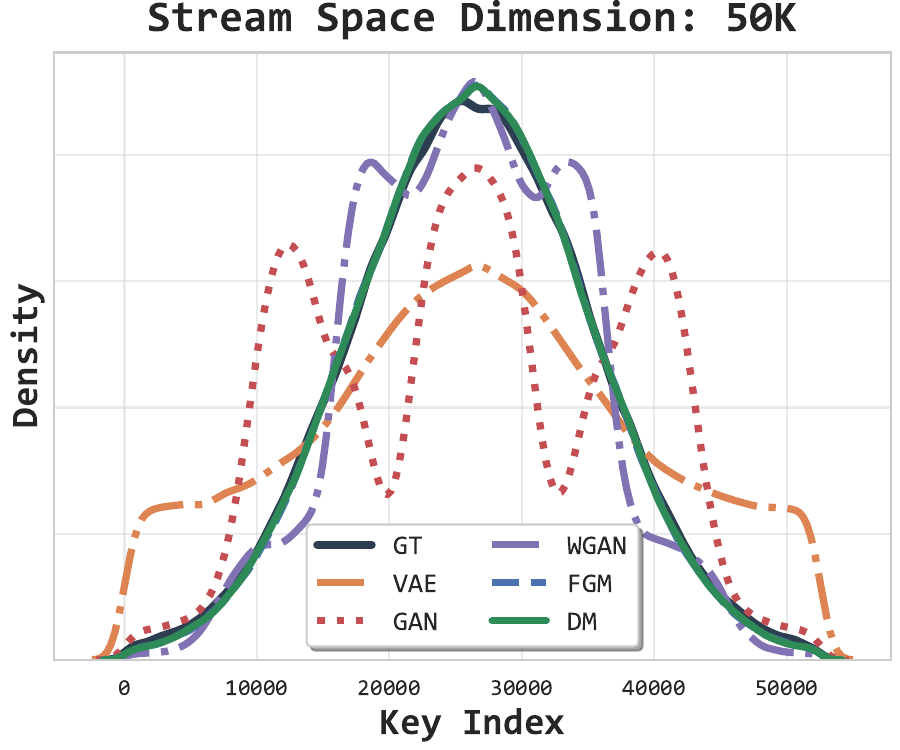}
    }
    \hfill
    \subfigure[\# of distinct keys $\approx$ 100K\label{subfig:100k}]{
        \centering
        \includegraphics[width=0.235\linewidth]{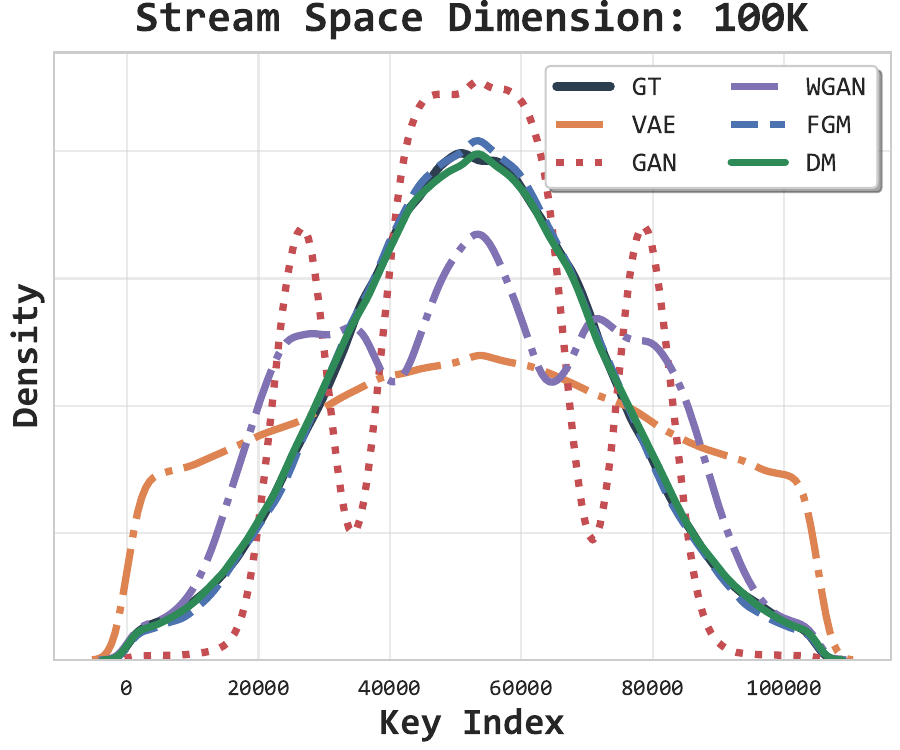}
    }\\
    \subfigure[\# of distinct keys $\approx$ 1K\label{subfig:zipf_1k}]{
        \centering
        \includegraphics[width=0.235\linewidth]{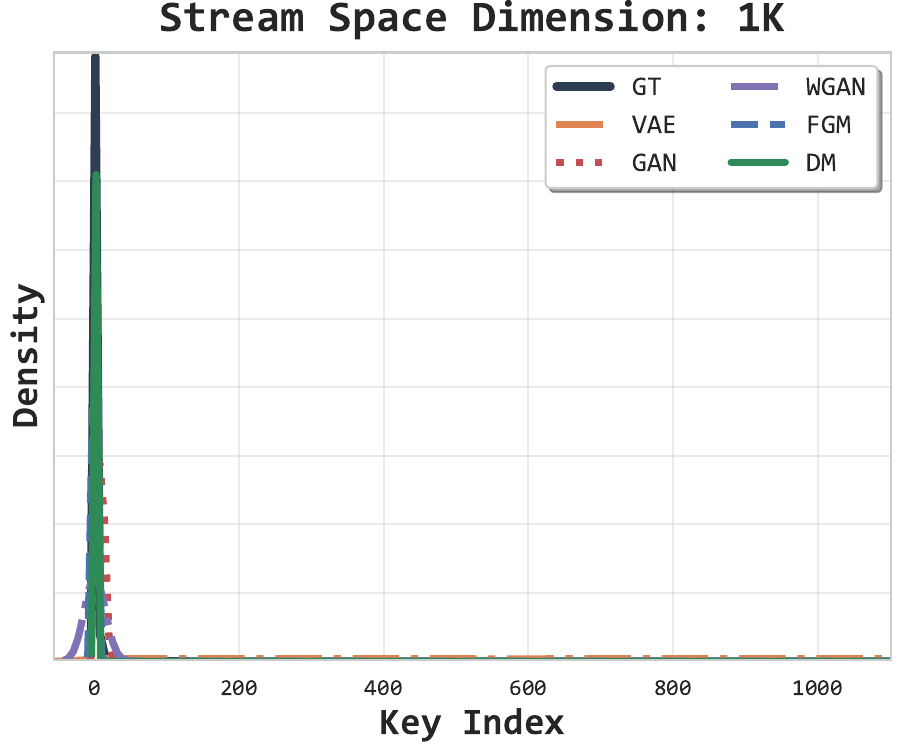}
    }
    \hfill
    \subfigure[\# of distinct keys $\approx$ 10K\label{subfig:zipf_10k}]{
        \centering
        \includegraphics[width=0.235\linewidth]{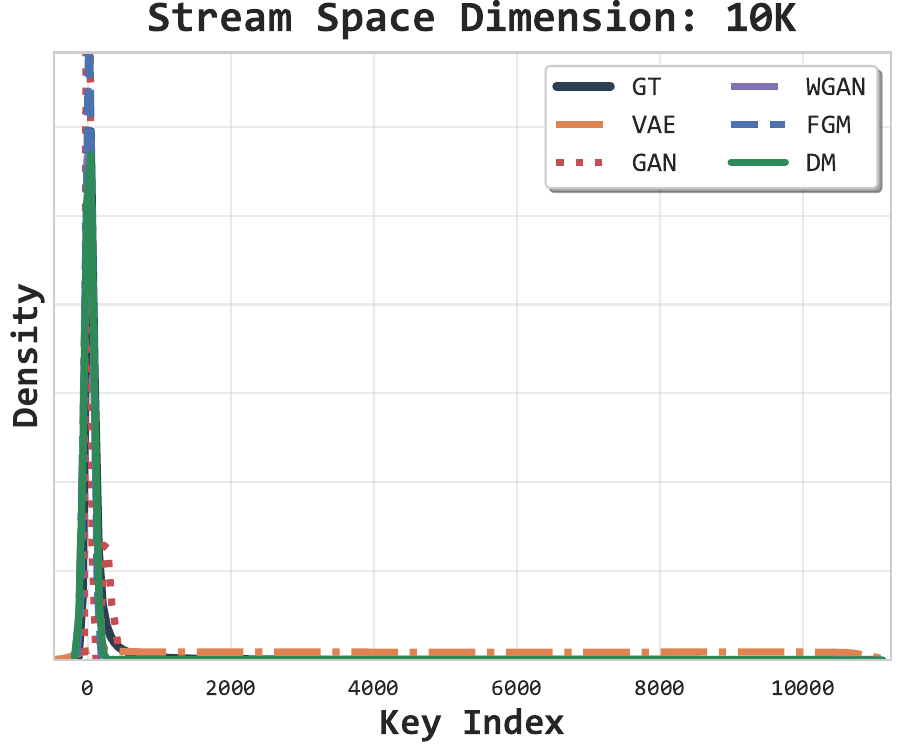}
    }
    \hfill
    \subfigure[\# of distinct keys $\approx$ 50K\label{subfig:zipf_50k}]{
        \centering
        \includegraphics[width=0.235\linewidth]{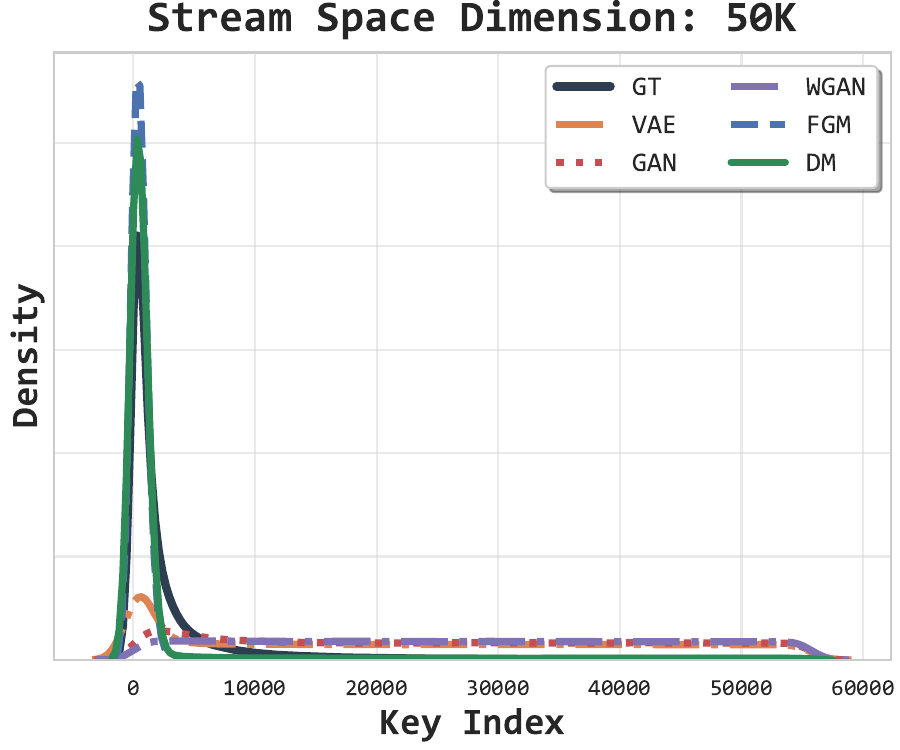}
    }
    \hfill
    \subfigure[\# of distinct keys $\approx$ 100K\label{subfig:zipf_100k}]{
        \centering
        \includegraphics[width=0.235\linewidth]{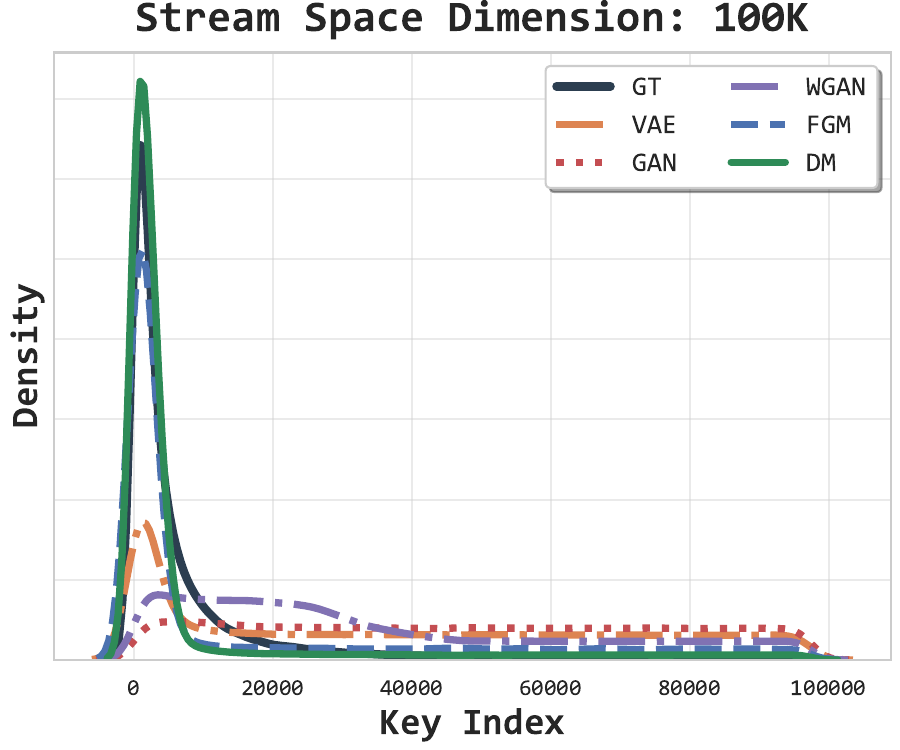}
    }
    \caption{Visualization of Streaming data distribution fitting. The first row shows results on Gaussian distributions, while the second row shows results on Zipfian distributions. From left to right, the scale of the data stream gradually increases.}
    \label{fig:unc_syn}
\end{figure}

Figure~\ref{fig:unc_syn} compares the sampling results of these five methods on the two datasets, in which the closer the curve is to the GT curve, the better the distribution fitting capability. The figure shows that all GMs perform very well under the 1K and 10K settings. Among them, FGM and DM exhibit the most accurate fits, whereas VAE and GAN perform slightly worse. Compared to GAN, WGAN yields more confident generations, with its curve consistently exhibiting a sharper peak around the GT mode. 
However, as the dimensionality of the data stream increases to 50K and 100K, the performance landscape changes markedly. VAE fails to approach the target distribution, and GAN-based methods suffer from severe instability (see 1st row), from which even WGAN cannot escape this ``fate''. In contrast, FGM and DM remain robust under high-dimensional settings. Even worse, in heavy-tailed distributions (see 2nd row), the aforementioned methods cannot even capture the overall ``shape''. Notably, DM demonstrates superior capability in accurately shaping the underlying distribution. Hence, from the perspective of unconditional distribution learning, FGM and DM emerge as the most suitable candidates.

\subsection{Conditional Generation for Stream Recovery}

Next, we evaluate the transferability of the (unconditional) generative capabilities of different models to downstream data stream (conditional) recovery.To elicit the GM's intrinsic distribution-shaping capability and minimize the influence of the underlying architecture, we directly extend the unconditional GM to a conditional version. Our approach is based on gradient-based search in the latent space of the GM for a ``good'' latent variable $z$ that produces content consistent with the observed data. By following prior works~\cite{bora2017compressed,yuan2023traffic}, for all GMs except DMs, we first identify a "starting point," and then perform backward updates on it using the gradient of $\| b - \boldsymbol{\Phi} f_G(z) \|_2^2 + \|z\|_1$ with regard to $z$, where the former term enforces consistency with the observations, while the latter term promotes plausibility (or likelihood). Regarding DMs, we directly integrate this gradient adjustment into the reverse sampling process, as applying the aforementioned iterative refinement to them would be too time-consuming. See DPS~\cite{chung2023diffusion} for more details. Algorithm~\ref{alg:gms} and Algorithm~\ref{alg:dms} present our two implementation strategies, respectively. 

\vspace{-0.3cm}
\begin{figure}[ht]
    \centering
    \begin{minipage}[t]{0.49\textwidth} 
        \begin{algorithm}[H]             
        \caption{Estimation Algorithm for Unconditional GMs}
        \label{alg:gms}
        \begin{algorithmic}
            \ENSURE Volume estimation using an unconditional GM
            \REQUIRE GM $f_G$, strength $\zeta$, matrix $\boldsymbol{\Phi}$, \# of loops $K$
            \STATE $\{z_0,\dots,z_i,\dots\} \gets$ sample a set of priors from $\mathcal{N}(0,I)$
            \STATE Select a good $z$ such that ${\arg \min}_{z_i} \|b - \boldsymbol{\Phi} f_G(z_i) \|_2^2$
            \STATE $b \gets$ sample counters by linear sketching
            \FOR{$j=K$ to $0$}
                \STATE $z \gets z - \zeta \nabla_{z} \left(\| b - \boldsymbol{\Phi} f_G(z) \|_2^2 + \|z\|_1 \right)$
            \ENDFOR
            \STATE $x \gets f_G(z)$
            \STATE return $x$
        \end{algorithmic}
        \end{algorithm}
    \end{minipage}%
    \hfill
    \begin{minipage}[t]{0.49\textwidth} 
        \begin{algorithm}[H]
        \caption{Estimation Algorithm for Unconditional DMs}
        \label{alg:dms}
        \begin{algorithmic}
            \ENSURE Volume estimation using an unconditional DM
            \REQUIRE DM $f_G$, strength $\zeta$, matrix $\boldsymbol{\Phi}$, \# of steps $T$
            \STATE $x_T \gets$ sample prior from $\mathcal{N}(0,I)$
            \STATE $b \gets$ sample counters by linear sketching
            \FOR{$t=T$ to $0$}
                \STATE $x'_{t-1} \gets $ \textsc{ReverseSampleOneStep}$(x_t, t, f_G)$
                \STATE $\hat x_{0} \gets $ \textsc{PredcitX0withXt}$(x'_{t-1}, t)$
                \STATE $x_{t-1} \gets x'_{t-1} - \zeta \nabla_{x_t} \| b - \boldsymbol{\Phi} \hat x_{0} \|_2^2$
            \ENDFOR
            \STATE return $x_0$
        \end{algorithmic}
        \end{algorithm}
    \end{minipage}
\end{figure}

We still experiment on the two synthetic datasets described above, employ Count-Min as the underlying sketch, insert the GT vector into the sketch, construct the projection matrix $\boldsymbol{\Phi}$, and force the trained GM to recover the GT vector from the sketch counters $b$. As Figure~\ref{fig:gm_radar} shows, we test their sketching performance along four dimensions to embrace our principles: \textit{fidelity} measured by WMRE, \textit{accuracy} measured by ARE, \textit{infer}ence time measured by the logarithmic time (ms) taken for a single generation pass, and \textit{train}ing time (s) measured by the time taken for training the GM. On one hand, consistent with their unconditional generation results, VAEs and GANs perform adequately on small-scale data streams, but their accuracy and fidelity drop sharply as data dimensionality increases, especially under Zipfian distributions, whereas FGM and DM remain robust. On the other hand, what is not evident from unconditional generation results is that: First, despite DM consistently achieving the highest fidelity, its accuracy does not follow suit. For instance, under the Zipfian distribution, DM’s accuracy gradually falls behind that of FGM. Second, GANs suffer from extremely long training times, roughly $5\sim10$ times longer than other methods, due to their notoriously unstable adversarial training; as dimensionality increases, convergence to an optimal generator requires painstaking and delicate parameter tuning. Meanwhile, DM also faces severe inference inefficiency, as it requires lengthy iterative sampling along a Markov chain. Concretely, generating a single sample (or batch) of 100K-dimensional vectors with 1,000 diffusion steps consumes over 10 GB of GPU memory and takes tens of seconds, making it roughly $10^3\times$ slower than alternative approaches.
Given these two critical limitations, we exclude both GANs and DM from further consideration. Finally, due to the limited expressive power of VAEs, we select FGMs, which offer a more favorable balance, as the backbone model for our generative sketching paradigm.

\begin{figure}
    \centering
    \setlength{\belowcaptionskip}{-0.1cm} 
    \includegraphics[width=1.\linewidth]{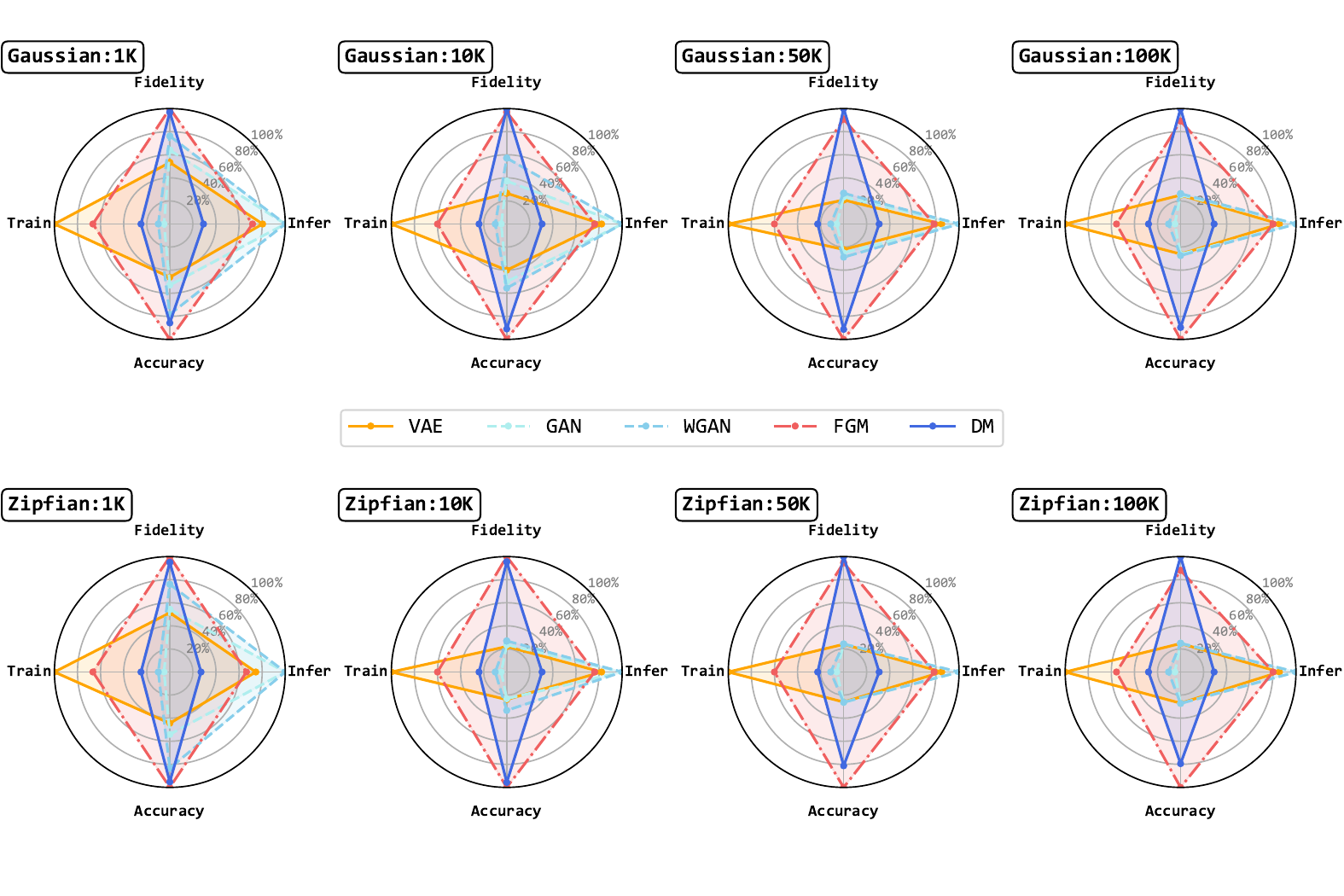}
    \caption{Performance comparison of different generative models under Gaussian and Zipfian distributions. For better visualization, metrics across different dimensions are normalized by the best result, where higher values indicate better performance.}
    \label{fig:gm_radar}
\end{figure}   


\section{Supplementary Experimental Results}
\label{app:ser}

In this section, we report additional experiments that could not be accommodated in the main text. Below, we sequentially present: comparisons on synthetic data streams (\ref{subsec:syn_exp}), distributional visualization (\ref{subsec:vs}), ablation studies (\ref{subsec:ablation}), and robustness evaluation under changing conditions (\ref{subsec:robust}).

\subsection{Additional Results on Synthetic Data Stream}
\label{subsec:syn_exp}

To evaluate the adaptability of our method to data streams with different distributions, Figure~\ref{fig:syn} details its performance across five heavy-tailed distributions, including Zipf, Pareto, Exponential, and Log-Normal. The overall trend closely mirrors that observed on real-world datasets in the main text (with \texttt{FLORE} consistently achieving optimal or near-optimal results), yet the degree of distribution skew impacts performance.

First, regarding per-element accuracy (columns 1$\sim$2): Under tight memory budgets, all non-\texttt{CS}-based approaches perform poorly. For instance, CM and CS typically incur errors on the order of $10^3$ (both relative and absolute) at 16KB. Learning-augmented sketches fare even worse, as they require extra space to store a small subset of hot items and their classifiers. Interestingly, LS performs well under certain skew levels by directly truncating counters (e.g., highly skewed CAIDA, and moderately skewed Exponential and Log-Normal here), though this advantage quickly diminishes as memory increases. Among \texttt{CS}-based schemes, PR exhibits severe estimation instability, only converging to reasonable error levels when ample memory is available (as the system becomes determined and admits a unique solution). NZE steadily improves accuracy with more memory, making it the most accurate sketch algorithm (trading computational complexity for performance), but it is still outperformed by \texttt{FLORE}. Specifically, \texttt{FLORE} reduces error by up to $700\times$ compared to other methods and achieves near-zero error at memory sizes of 256KB and above.

Second, regarding heavy-hitter accuracy (column 3): The behavior under the Zipf distributions closely mirrors that of the real-world distributions presented in the main text, where \texttt{FLORE} always achieves near 100\% accuracy, requiring only 16KB. However, ``surprises'' emerge under the Exponential and Log-Normal distributions. Almost all methods fail to detect heavy hitters effectively under a low skewness, as the keys become less distinguishable. Notably, CM performs worse than usual, even falling behind PR (which typically ranks last), with its peak F1 score barely reaching around 80\%. Other approaches, including \texttt{CS}, optimizations, and learning-augmented sketches, suffer similarly. LS is particularly noteworthy: its counter-truncation strategy yields unusually low accuracy on these latter two distributions. In contrast, \texttt{FLORE} not only consistently sits at the top of the curves but also demonstrates remarkable recovery: although its performance at 16KB is subpar, it rebounds almost linearly thereafter, pulling ahead of all competitors by a wide margin. Specifically, \texttt{FLORE} achieves roughly $2\times$ to $4\times$ higher F1 scores.

Third, regarding distribution alignment (column 4$\sim$5): Thanks to its generative nature, though it is fully GT-free, \texttt{FLORE} continues to dominate decisively on these metrics. Specifically, \texttt{FLORE}’s WMRE rarely exceeds 0.5, while most baseline methods require as much as 512KB of memory to achieve comparable accuracy. Even more strikingly, under tight memory budgets, \texttt{FLORE}’s error is less than 1/3 that of baselines. Moreover, even high-accuracy \texttt{CS}-based methods like PR and NZE suffer from inherent limitations in distribution modeling. Notably, NZE’s enforced sparsity during reconstruction inevitably compromises fidelity to the true underlying distribution. Regarding entropy estimation, although low-skew distributions narrow the performance gap between \texttt{FLORE} and other methods, \texttt{FLORE} still significantly outperforms all non-\texttt{CS}-based approaches. For example, on the Zipf dataset, \texttt{FLORE}’s absolute entropy estimation error is, on average, $105.14\times$ and $712.98\times$ smaller than that of CM and \texttt{CS}, respectively. Another observation is that PR’s performance improves noticeably in the bottom two rows, sometimes even surpassing \texttt{FLORE}, but its gains remain very limited when ample memory is available. We attribute this stagnation to the solver’s inability to find a sparser solution, causing it to plateau.

\begin{figure}
    \centering
    \subfigure[{Zipf-icml Distribution}\label{subfig:zipf_icml}]{
        \centering
        \includegraphics[width=1.\linewidth]{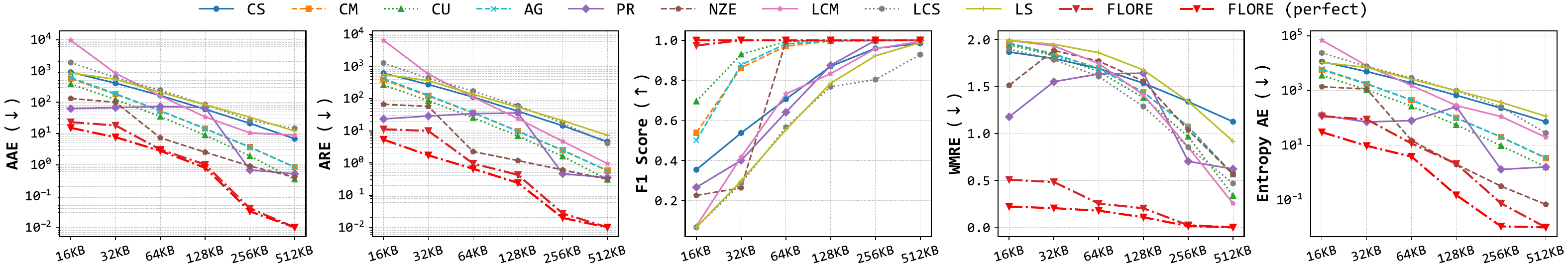}
    }
    \subfigure[{Zipf Distribution}\label{subfig:zipf}]{
        \centering
        \includegraphics[width=1.\linewidth]{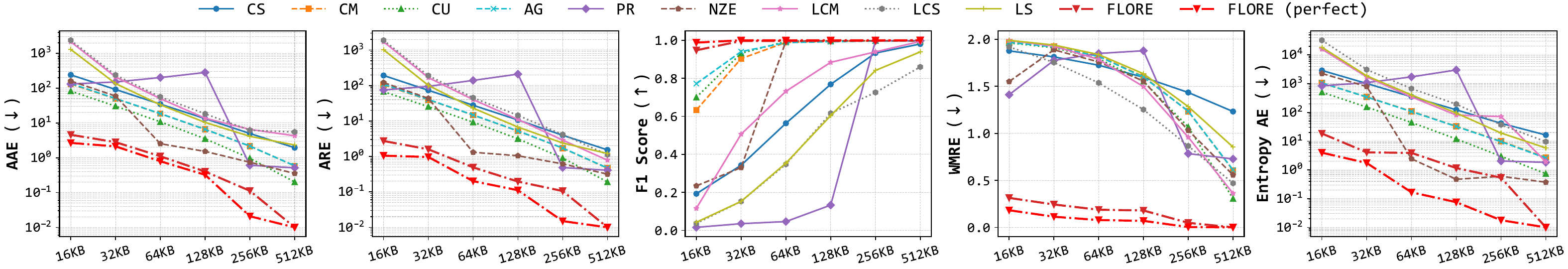}
    }
    \subfigure[{Pareto Distribution}\label{subfig:pareto}]{
        \centering
        \includegraphics[width=1.\linewidth]{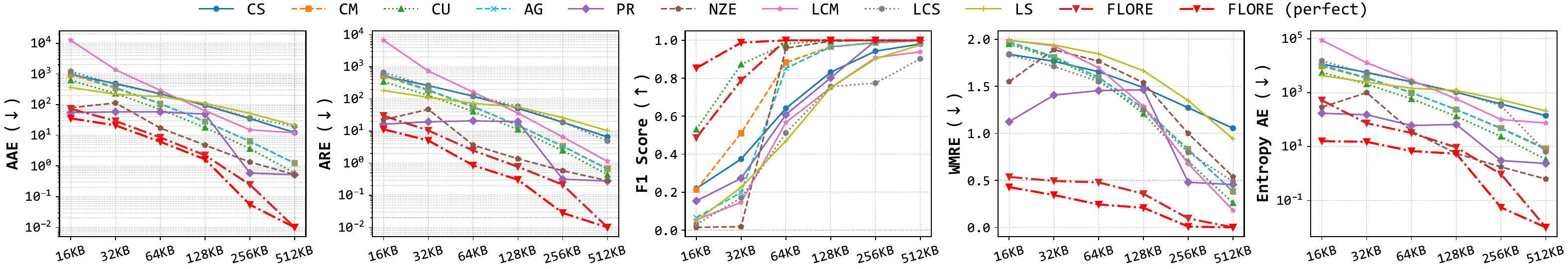}
    }
    \subfigure[{Exponential Distribution}\label{subfig:exponential}]{
        \centering
        \includegraphics[width=1.\linewidth]{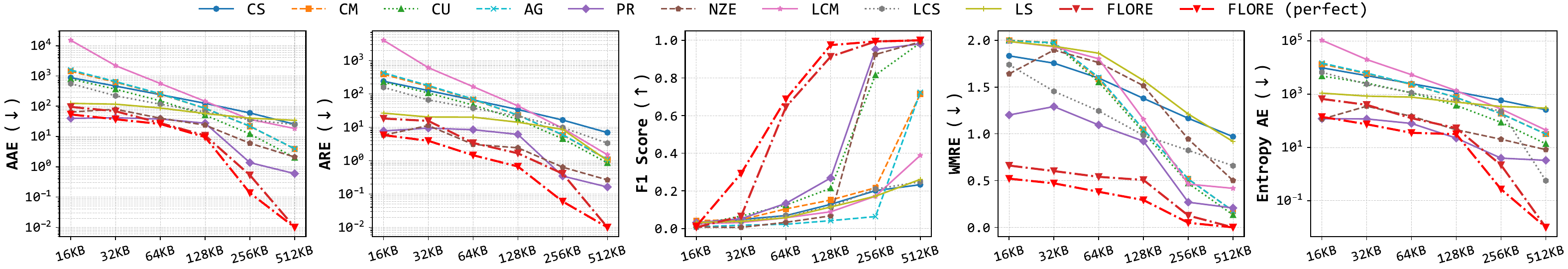}
    }
    \subfigure[{Log-Normal Distribution}\label{subfig:lognormal}]{
        \centering
        \includegraphics[width=1.\linewidth]{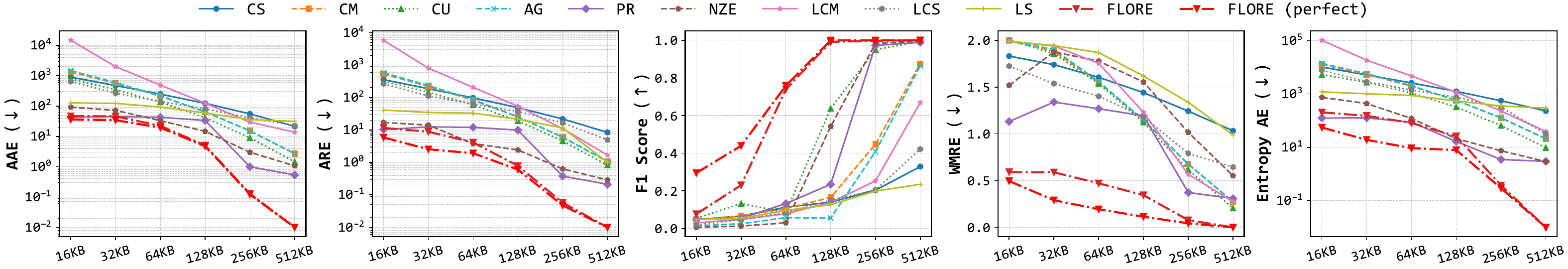}
    }
    \caption{Sketching performance comparison with different synthetic streaming data distributions. The length of all synthesized streams is fixed at 1 million items. While the number of distinct keys varies across different heavy-tailed distributions, it stays at approximately 30k across all cases. \texttt{FLORE} exhibits substantially stronger robustness to distributional variation compared to the alternatives.}
    \label{fig:syn}
\end{figure}

\subsection{Additional Results on Visualizations}
\label{subsec:vs}

To gain insights into the behaviors of sketches, we next visualize and compare the recovery results from each sketch viewpoint (i.e., \texttt{FLORE}, CM, CS, NZE, LCM, and LCS), which can also be regarded as a reflection of how well the distribution is perceived by them. The resulting visualizations under different distributions and space budgets are shown from Figure~\ref{fig:syn_dist_32} to Figure~\ref{fig:syn_dist_256}, where key-wise frequencies are aggregated by their nearest 100 elements. Similar to the visualizations in the previous section, here the more the shaded areas overlap and the smaller the area, the better the results. First of all, the output of all sketches becomes more realistic as the memory budget increases. We also observe that all non-\texttt{CS} methods significantly overestimate the true frequencies of most elements because they lack a global perspective. Meanwhile, due to the need to satisfy both consistency and sparsity constraints, the total area of NZE using compressive sensing is close to that of the true distribution; however, it often compensates for high-frequency items by underestimating the majority of values or otherwise. In comparison, \texttt{FLORE} consistently achieves the highest fidelity in both curve shape and overall pattern under identical settings. Although \texttt{FLORE} partially overestimates the true values of some elements at 32KB under the guidance of the CM enhanced by EM refinements, it requires only 128KB to become nearly indistinguishable from the ground-truth pattern, achieving an almost perfect fit. This aligns with our quantitative distribution estimation results (WMRE) and can be attributed to the powerful distribution approximation capability of the generative modeling paradigm.

\begin{figure}
    \centering
    \subfigure[{Zipf-icml Distribution}\label{subfig:zipf_icml_dist_32}]{
        \centering
        \includegraphics[width=1.\linewidth]{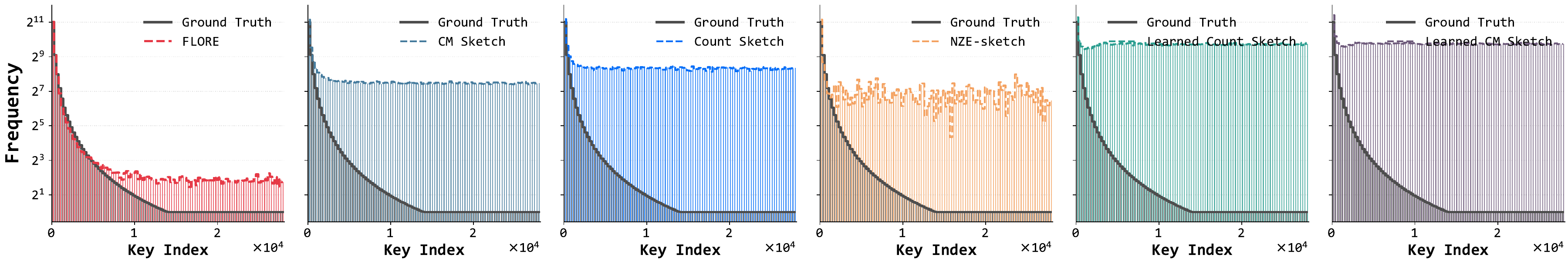}
    }
    \subfigure[{Zipf Distribution}\label{subfig:zipf_dist_32}]{
        \centering
        \includegraphics[width=1.\linewidth]{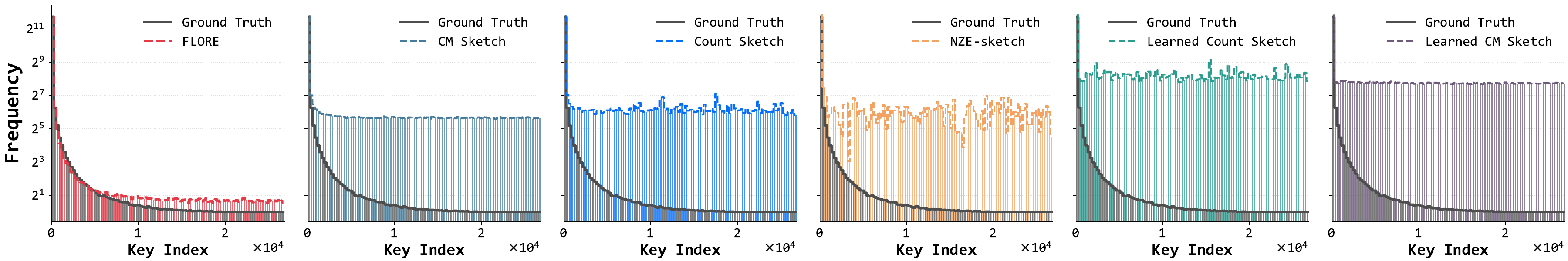}
    }
    \subfigure[{Pareto Distribution}\label{subfig:pareto_dist_32}]{
        \centering
        \includegraphics[width=1.\linewidth]{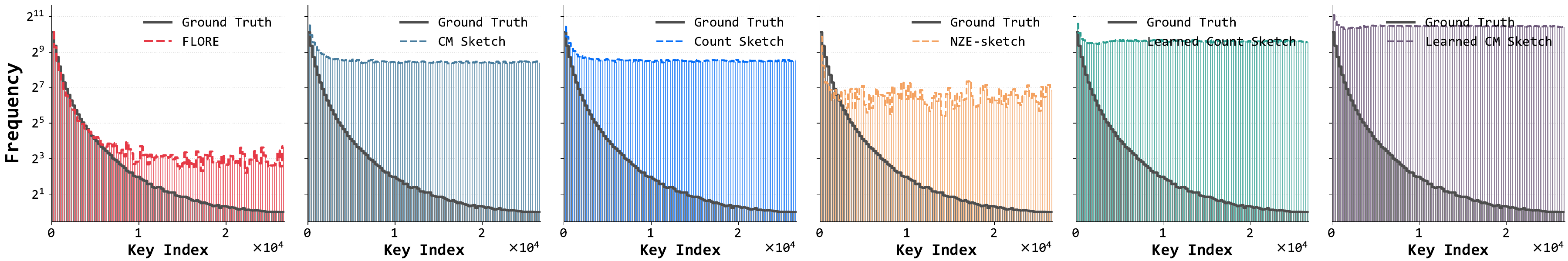}
    }
    \subfigure[{Exponential Distribution}\label{subfig:exponential_dist_32}]{
        \centering
        \includegraphics[width=1.\linewidth]{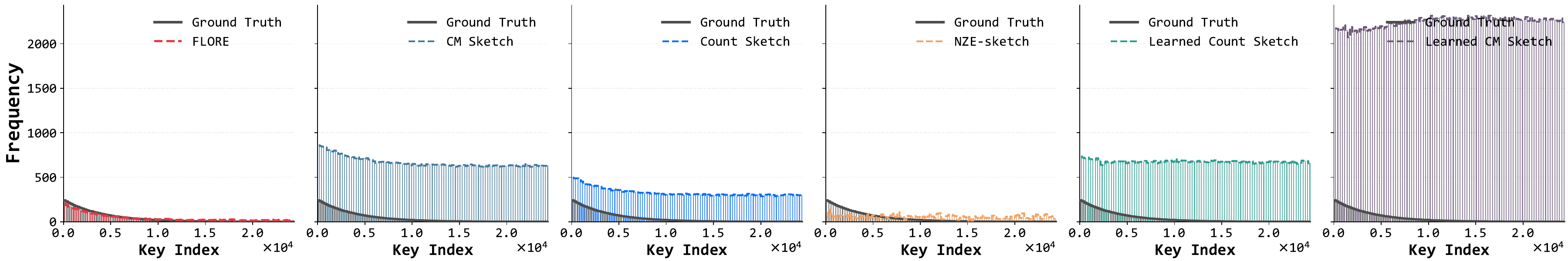}
    }
    \subfigure[{Log-Normal Distribution}\label{subfig:lognormal_dist_32}]{
        \centering
        \includegraphics[width=1.\linewidth]{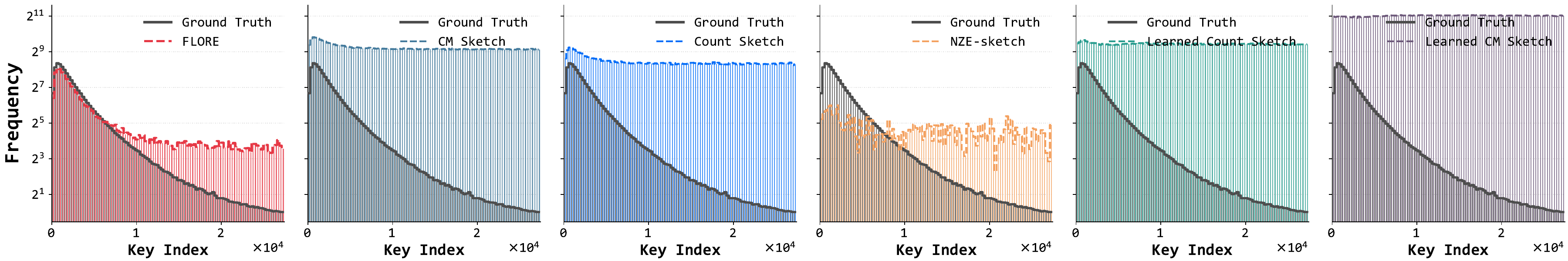}
    }
    \caption{Visualizations of sketch recovery under a fixed \textit{32KB} memory budget. From left to right, we present the recovery results of \texttt{FLORE}, CM, CS, NZE, LCM, and LCS, compared against the ground-truth sample distribution. To enhance readability, every 100 consecutive keys are aggregated in the x-axis.}
    \label{fig:syn_dist_32}
\end{figure}

\begin{figure}
    \centering
    \subfigure[{Zipf-icml Distribution}\label{subfig:zipf_icml_dist_128}]{
        \centering
        \includegraphics[width=1.\linewidth]{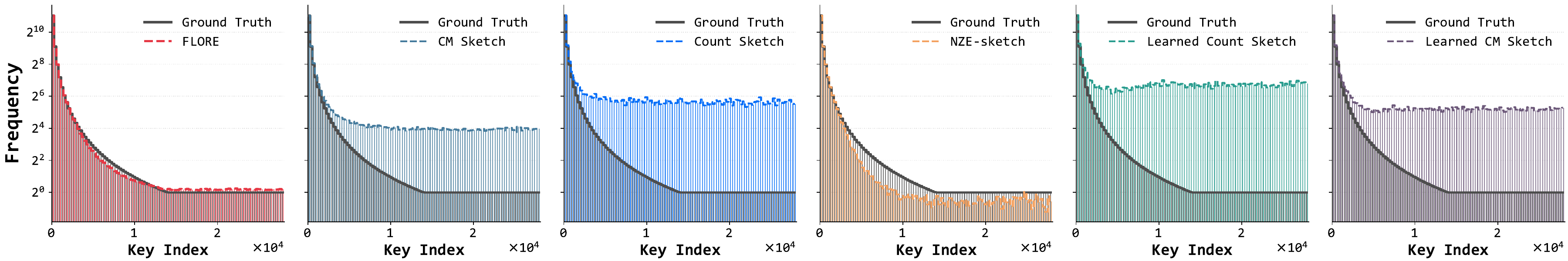}
    }
    \subfigure[{Zipf Distribution}\label{subfig:zipf_dist_128}]{
        \centering
        \includegraphics[width=1.\linewidth]{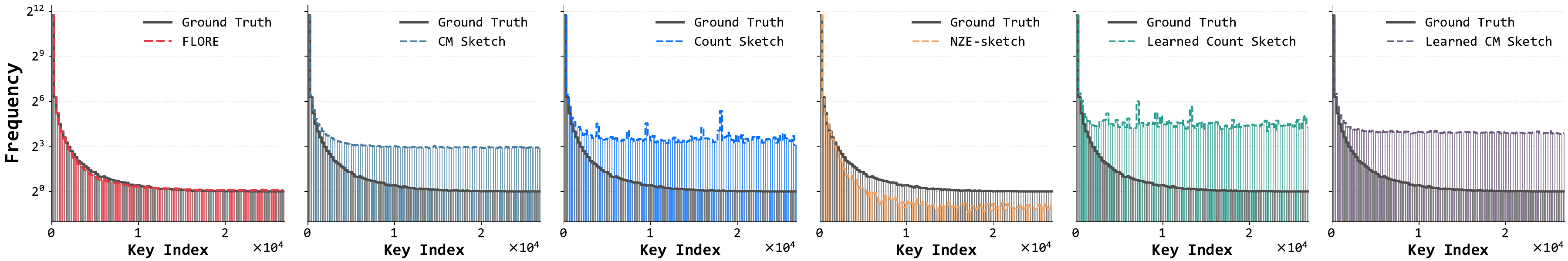}
    }
    \subfigure[{Pareto Distribution}\label{subfig:pareto_dist_128}]{
        \centering
        \includegraphics[width=1.\linewidth]{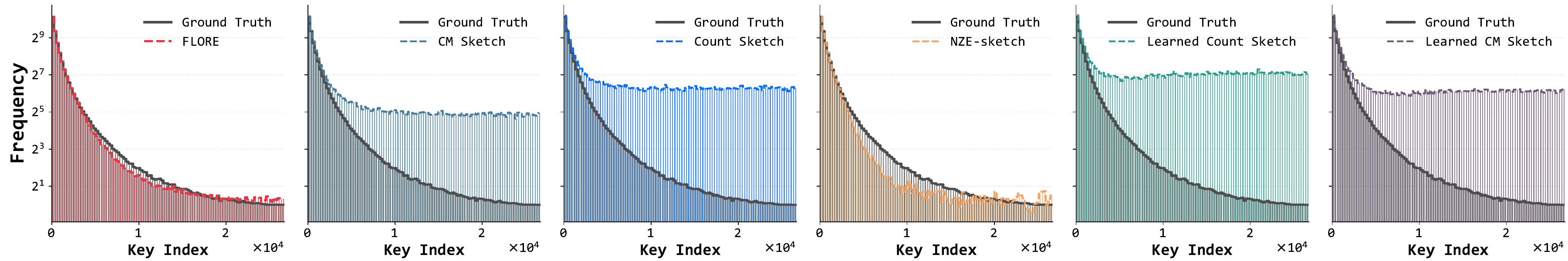}
    }
    \subfigure[{Exponential Distribution}\label{subfig:exponential_dist_128}]{
        \centering
        \includegraphics[width=1.\linewidth]{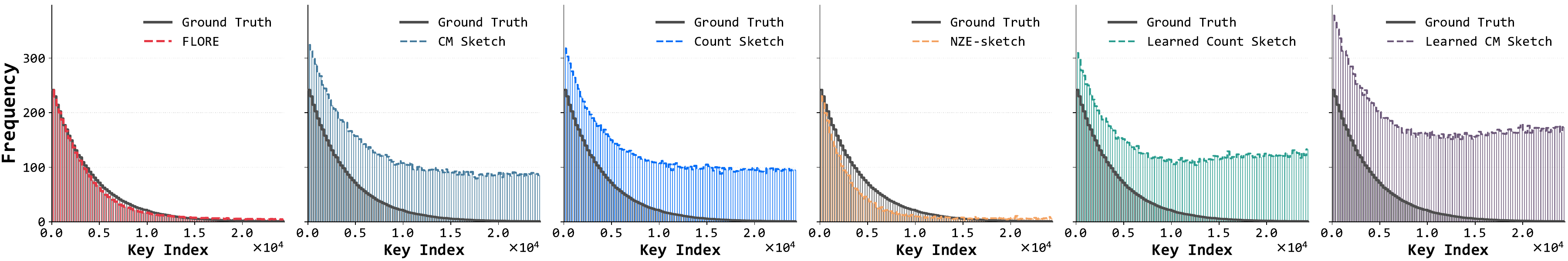}
    }
    \subfigure[{Log-Normal Distribution}\label{subfig:lognormal_dist_128}]{
        \centering
        \includegraphics[width=1.\linewidth]{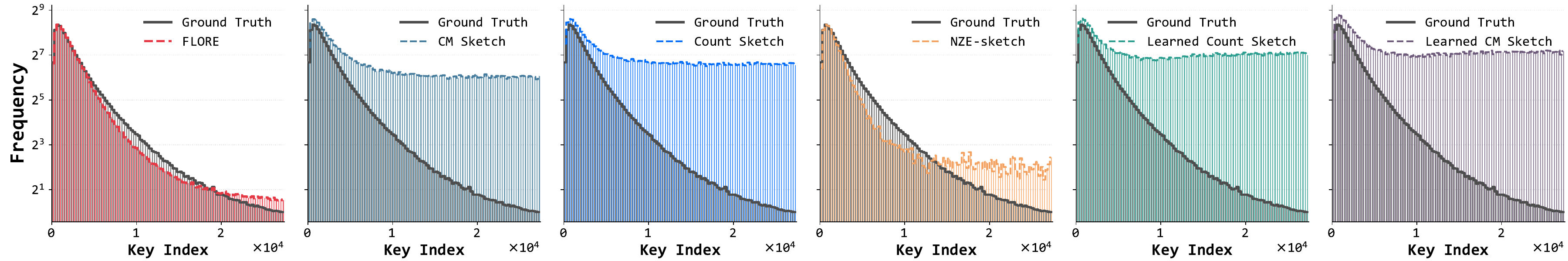}
    }
    \caption{Visualizations of sketch recovery under a fixed \textit{128KB} memory budget. From left to right, we present the recovery results of \texttt{FLORE}, CM, CS, NZE, LCM, and LCS, compared against the ground-truth sample distribution. To enhance readability, every 100 consecutive keys are aggregated in the x-axis.}
    \label{fig:syn_dist_128}
\end{figure}

\begin{figure}
    \centering
    \subfigure[{Zipf-icml Distribution}\label{subfig:zipf_icml_dist}]{
        \centering
        \includegraphics[width=1.\linewidth]{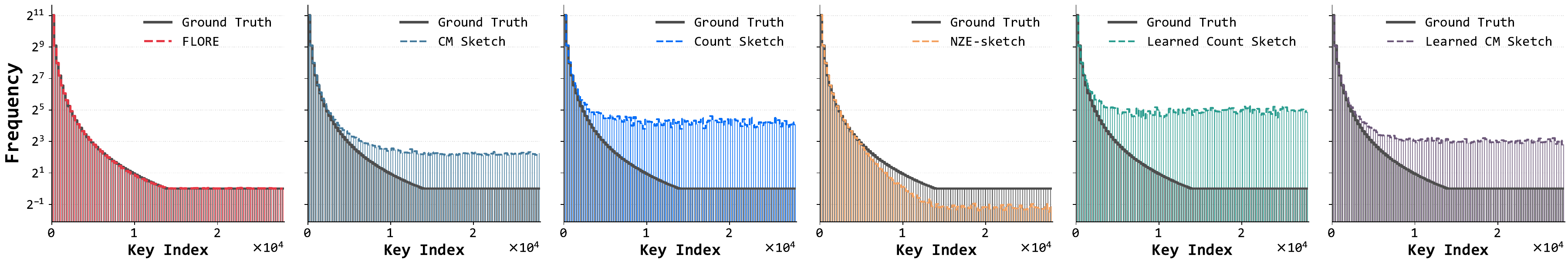}
    }
    \subfigure[{Zipf Distribution}\label{subfig:zipf_dist}]{
        \centering
        \includegraphics[width=1.\linewidth]{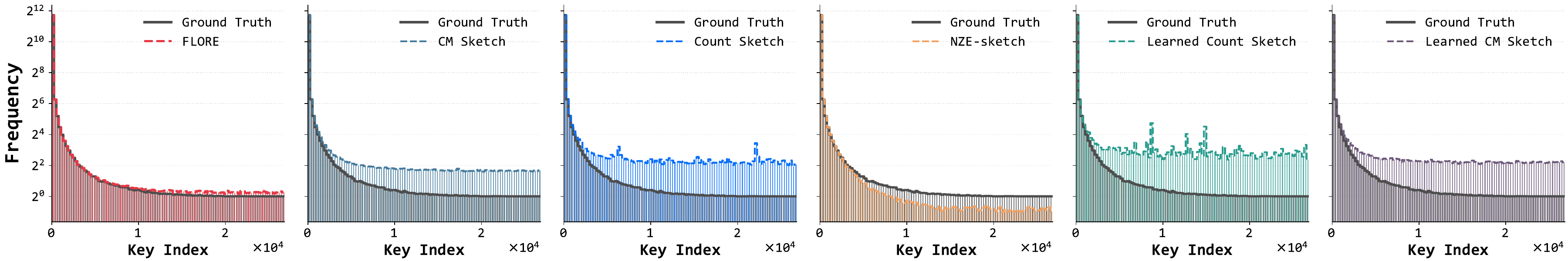}
    }
    \subfigure[{Pareto Distribution}\label{subfig:pareto_dist}]{
        \centering
        \includegraphics[width=1.\linewidth]{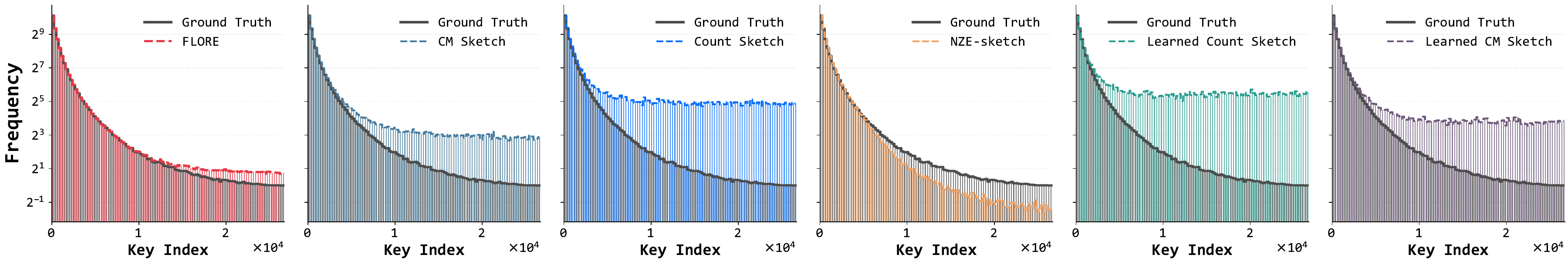}
    }
    \subfigure[{Exponential Distribution}\label{subfig:exponential_dist}]{
        \centering
        \includegraphics[width=1.\linewidth]{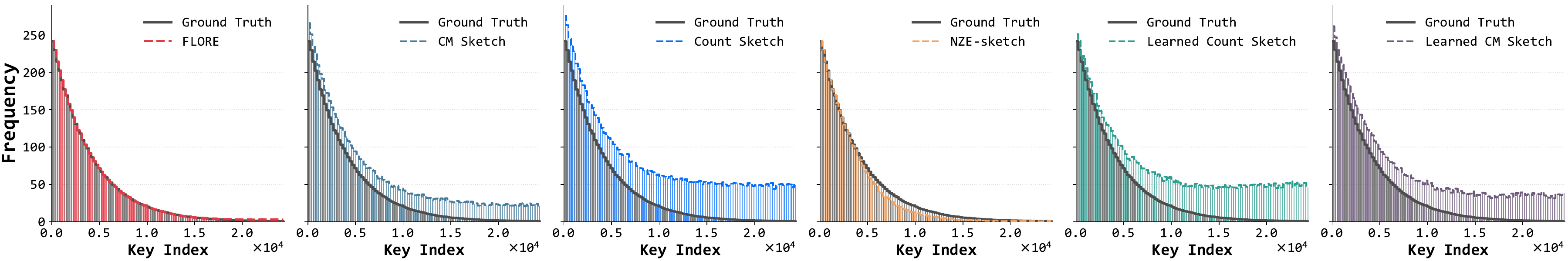}
    }
    \subfigure[{Log-Normal Distribution}\label{subfig:lognormal_dist}]{
        \centering
        \includegraphics[width=1.\linewidth]{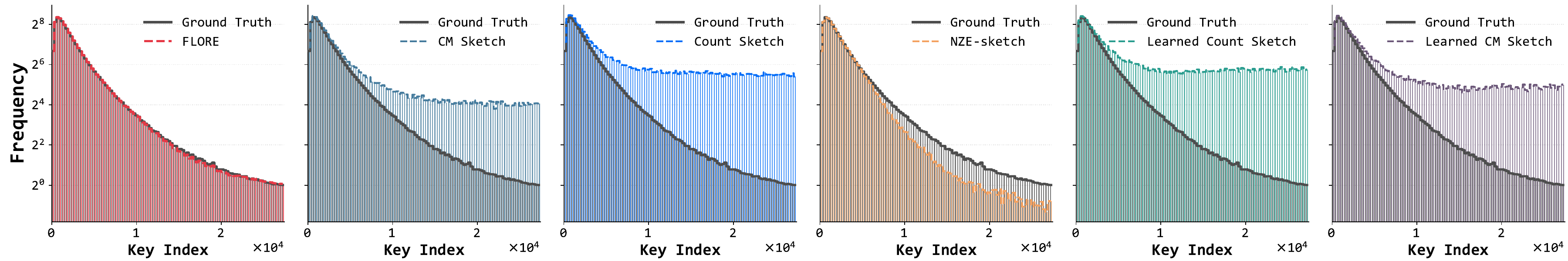}
    }
    \caption{Visualizations of sketch recovery under a fixed \textit{256KB} memory budget. From left to right, we present the recovery results of \texttt{FLORE}, CM, CS, NZE, LCM, and LCS, compared against the ground-truth sample distribution. To enhance readability, every 100 consecutive keys are aggregated in the x-axis.}
    \label{fig:syn_dist_256}
\end{figure}

\subsection{Ablation Studies}
\label{subsec:ablation}

Next, we perform several ablation studies to assess the impact of \texttt{FLORE}’s key features on its overall performance.

\subsubsection{Effect of Generative Model} 
\label{subsec:decoding}

\begin{figure}
    \setlength{\belowcaptionskip}{-0.1cm}
    \subfigure{
        \centering
        \includegraphics[width=1.\linewidth]{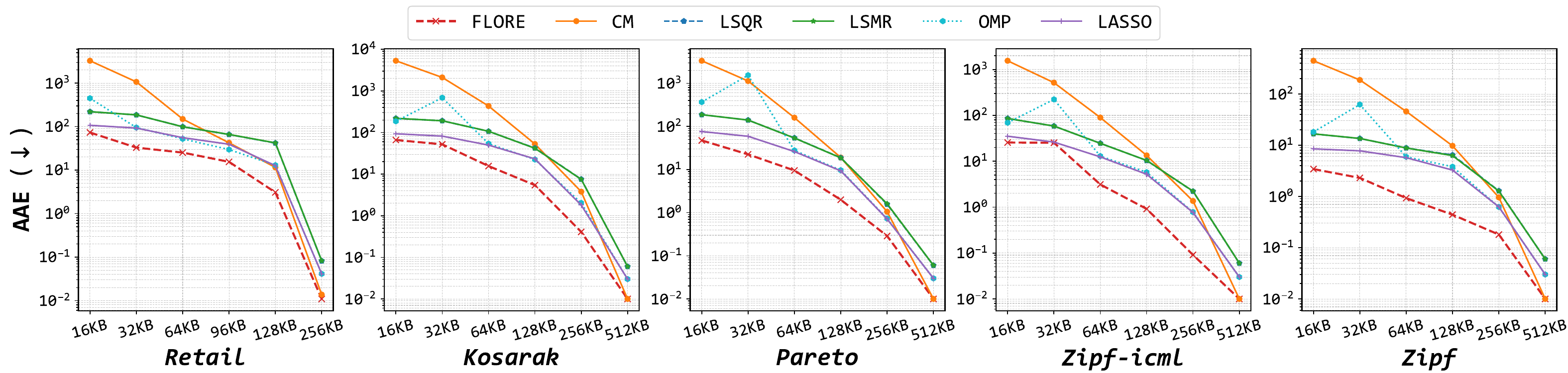}
    }\\
    \subfigure{
        \centering
        \includegraphics[width=1.\linewidth]{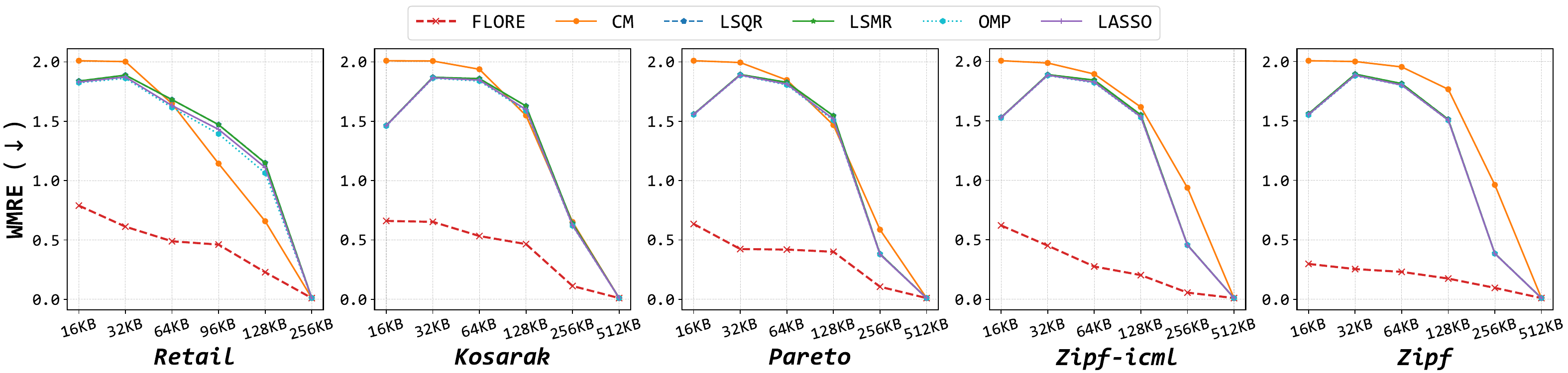}
    }
    \caption{Comparing \texttt{FLORE} with the Count-Min and SOTA compressive sensing algorithms by replacing the generative model.}
    \label{fig:cs}
\end{figure}

\begin{figure}
    \centering
    \includegraphics[width=1.\linewidth]{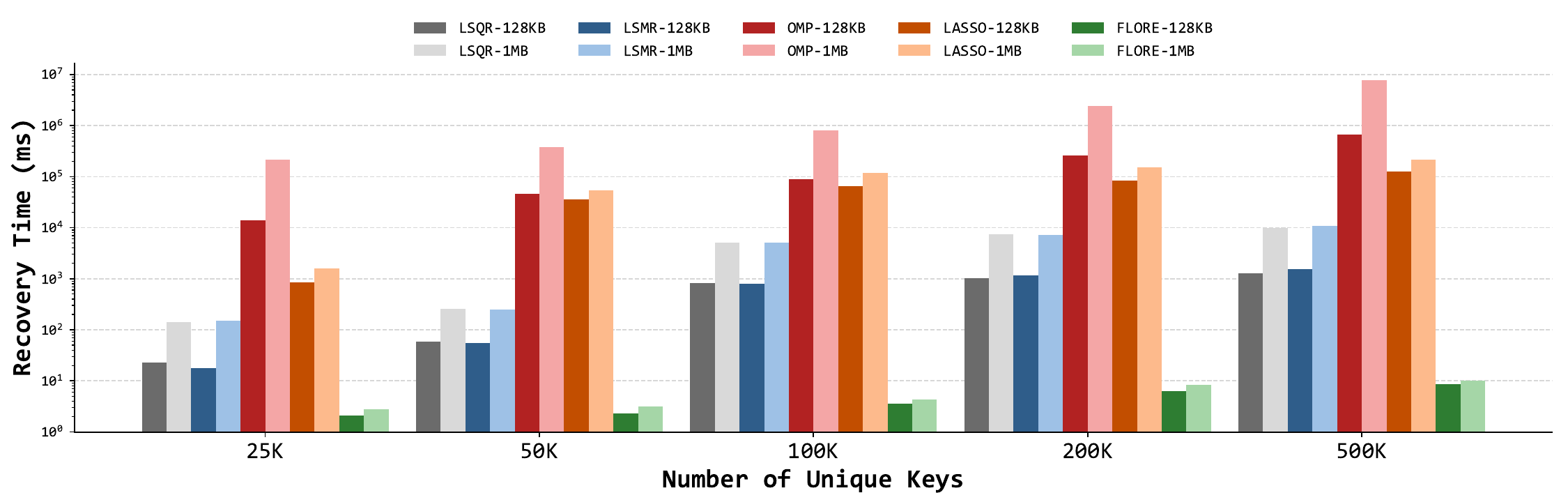}
    \caption{Comparison of computation times for recovery procedure.}
    \label{fig:cs_time}
\end{figure} 

To further validate our generative method, we replaced the INN-based generator with four SOTA \texttt{CS} algorithms: LSQR~\cite{paige1982lsqr}, OMP~\cite{pati1993orthogonal}, LASSO~\cite{tibshirani1996regression}, and LSMR~\cite{fong2011lsmr}. Since \texttt{Flore} is built on top of Count-Min Sketch, a Count-Min query algorithm (i.e., CM) is included here as a traditional benchmark. The main results are shown in Figure~\ref{fig:cs}. We conduct experiments on five relatively small datasets, since these \texttt{CS} algorithms are often very resource-consuming (see results discussed later). Overall, our method consistently outperforms the \texttt{CS} baselines in terms of accuracy. More notably, the improvement in distribution fidelity is substantially larger, achieving on average approximately a 65\% gain over the best baseline.

Also, we compare the computation time for \texttt{FLORE} with other LP-based \texttt{CS} algorithms in Figure~\ref{fig:cs_time}. As the stream size increases, we observe that \texttt{FLORE} demonstrates scalable performance precisely as intended, requiring less than 15 ms of computation time while achieving better recovery. When the number of unique keys is 500K, for instance, \texttt{FLORE} achieves $10^2\sim 10^6\times$ speedups relative to the baselines, accentuating the importance of employing an FGM in the control plane.

\begin{figure}
    \subfigure{
        \centering
        \includegraphics[width=1.\linewidth]{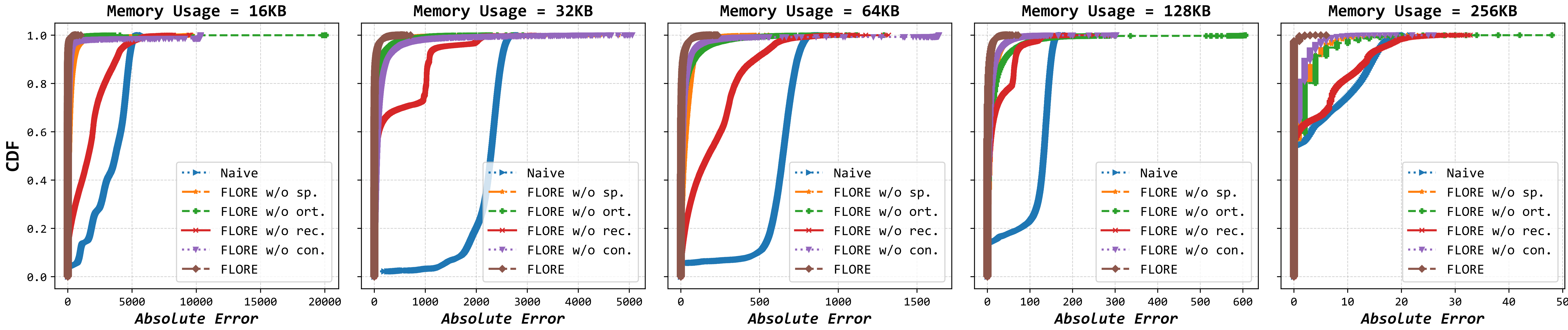}
    }\\
    \subfigure{
        \centering
        \includegraphics[width=1.\linewidth]{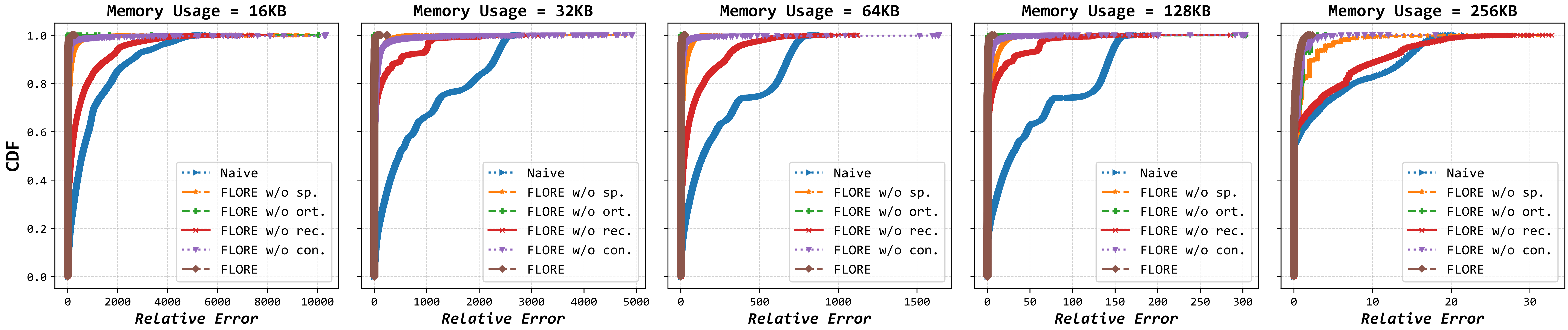}
    }
    \caption{CDFs for the AAE and ARE of \texttt{FLORE} and its variants on the real-world Kosarak dataset.}
    \label{fig:ab_k}
\end{figure}

\begin{figure}
    \subfigure{
        \centering
        \includegraphics[width=1.\linewidth]{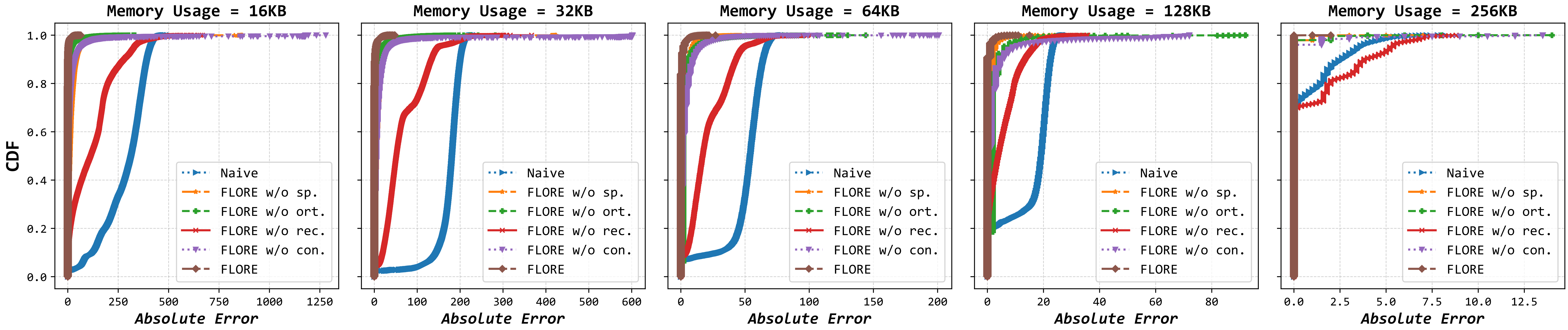}
    }\\
    \subfigure{
        \centering
        \includegraphics[width=1.\linewidth]{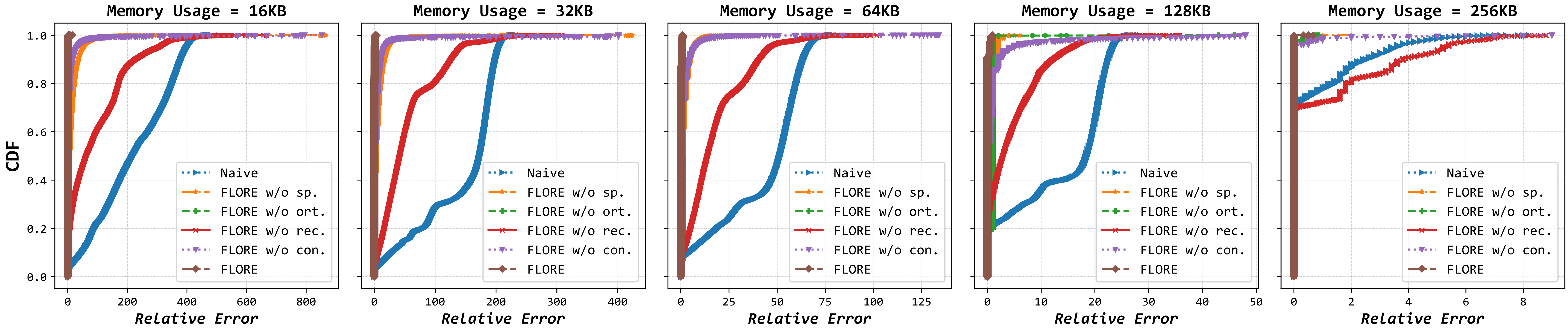}
    }
    \caption{CDFs for the AAE and ARE of \texttt{FLORE} and its variants on the synthetic Zipf dataset.}
    \label{fig:ab_z}
\end{figure}

\subsubsection{Effect of Training Objective}

We next evaluate the impact of training losses used by \textbf{\texttt{FLORE}}. In accordance with our design choices, we consider the following five \texttt{FLORE} variants for comparison. (i) \textbf{Naive}: This variant removes all generative losses (i.e., $\mathcal{L}_{\mathbf{inv}}$, $\mathcal{L}_{\mathbf{ort}}$, and $\mathcal{L}_{\mathbf{rec}}$) in the FGM and degenerates into a simple one-way mapping function. (ii) \textbf{\texttt{FLORE} w/o sp.}: This variant removes the sparsity constraint (i.e., $\mathcal{L}_{\mathbf{sp}}$) introduced by compressed sensing theory. (iii) \textbf{\texttt{FLORE} w/o ort.}: This variant eliminates the orthogonality constraint (i.e., $\mathcal{L}_{\mathbf{ort}}$) between $z$ and $b$. (iv) \textbf{\texttt{FLORE} w/o rec.}: This variant discards the reconstruction guidance (i.e., $\mathcal{L}_{\mathbf{rec}}$) provided by CM + EM and relies solely on counter-based learning. (v) \textbf{\texttt{FLORE} w/o con.}: This variant removes the explicit linear equation constraints (i.e., $\mathcal{L}_{\mathbf{con}}$), leaving the model to learn without structural supervision. As shown in Figure~\ref{fig:ab_k} and Figure~\ref{fig:ab_z}, we plot the cumulative distribution function (CDF) that shows absolute/relative errors achieved by \texttt{FLORE} and its variants on Kosarak and Zipf, respectively. 

The simple mapping function \textbf{Naive} exhibits the most severe performance degradation, underscoring the critical role of the generative paradigm in recovery. The variant \textbf{\texttt{FLORE} w/o rec.} performs better than \textbf{Naive} but still suffers from a substantial loss in accuracy. This behavior arises because a generative model requires an effective ``information channel'' to anchor its learning to the underlying GT vector, a role that is fulfilled by our EM algorithm. While not as critically important as the two aforementioned loss terms, the other three variants demonstrate their impact through fine-grained adjustments. In particular, these variants tend to produce a small number of outliers with exceptionally large errors. For example, on the Kosarak dataset with a 16KB budget, \textbf{\texttt{FLORE} w/o ort.}, \textbf{\texttt{FLORE} w/o con.}, and \textbf{\texttt{FLORE} w/o sp.} each exhibit data points with absolute errors of 20,000, 10,000, and 8,000, respectively. Interestingly, although the consistency enforced by the linear equations is intuitively expected to play a central role, directly removing it does not cause the sketch measurement system to collapse. This is because the solution obtained by our EM algorithm already approximately satisfies this constraint, thereby implicitly guiding the model to learn the desired relationship. In contrast, \texttt{FLORE} exhibits consistently stable performance across all scenarios: more than 90\% of its restoration errors lie within the lowest 10\% of error magnitudes, and such extreme outliers are almost entirely eliminated. These observations corroborate the effectiveness of each training loss component in \texttt{FLORE}.

\subsubsection{Effect of stream filtering}

Because the key-value pairs stored in the augmented filter are guaranteed to be accurate, we further conduct an analysis on the processing speed improvement brought by \texttt{FLORE}'s stream filtering mechanism, as indicated by ``\texttt{FLORE} w/o Stream Filtering''. To enable a finer-grained dissection of effective components, we also introduce two additional baselines: Count-Min and \texttt{FLORE} w/ Hash Table. The former is a standard Count-Min sketch that completely removes the idea of separation, representing a naive data stream summarization approach without key tracking mechanism. The latter replaces \texttt{FLORE}’s Ostracism-based filtering algorithm with a simpler hash-table strategy (following the design of NZE-sketch) to investigate the impact of the proposed algorithmic design. We report the average time (or latency) required to summarize a single data item across all ten datasets in Figure~\ref{fig:af_time}. From the figure, we have three important findings. First, \texttt{FLORE} w/o Stream Filtering is even slower than Count-Min because it requires additional time to consult the Bloom Filter to determine whether a key belongs to the existing set. Second, even simply adding a hash table (i.e., \texttt{FLORE} w/ Hash Table) dramatically reduces processing latency, by more than 5× on average, and the performance gain becomes even greater as the skewness of the data stream increases. Third, our Ostracism-based mechanism further accelerates processing speed. Although \texttt{FLORE} involves a lookup operation, this operation runs in constant time and yields an additional 15\% improvement over the hash-table baseline.

\begin{figure}
    \centering
    \setlength{\belowcaptionskip}{-0.15cm} 
    \includegraphics[width=1.\linewidth]{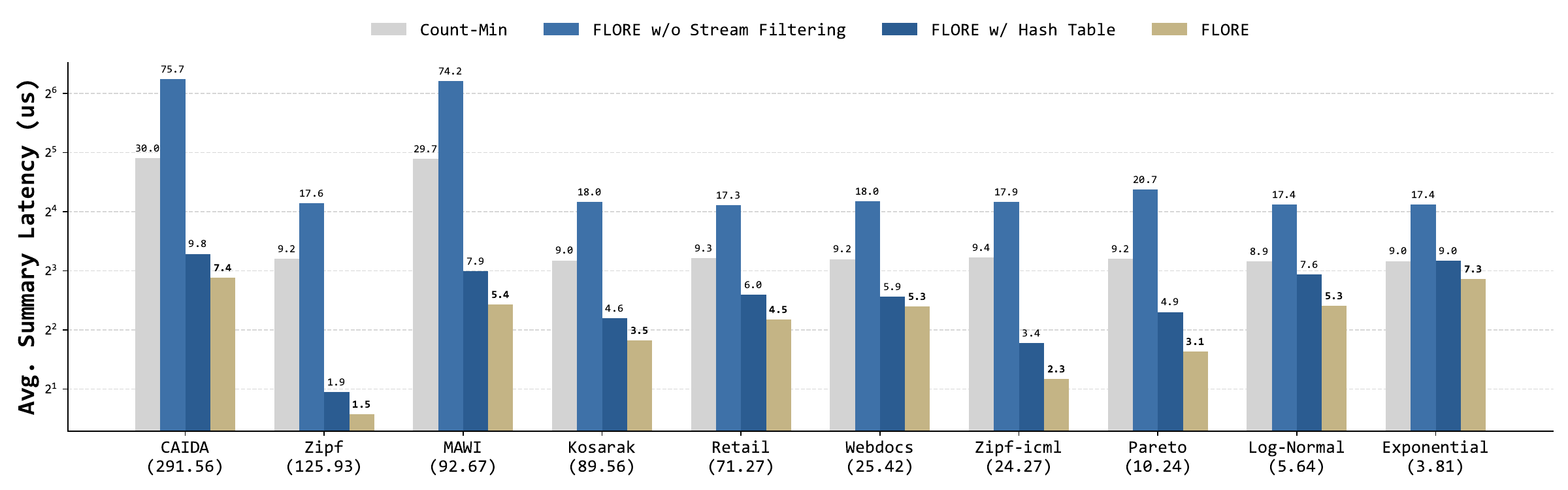}
    \caption{Ablation comparing \texttt{FLORE} and \texttt{FLORE} without the stream filtering mechanism or separation design, where $(x)$ at the bottom indicates that the data stream has skewness $x$.}
    \label{fig:af_time}
\end{figure} 

\subsubsection{Effect of EM refinement}
\label{subsubsec:ab_opt}

Figure~\ref{fig:ab_em} compares the simple CM and CM equipped with 10-step EM refinement (as indicated by ``CM + EM'') when testing in the CAIDA dataset (1st row) and MAWI dataset (2nd row). It can be seen that the enhancement brought by EM is truly impressive. To be more specific, on the CAIDA dataset, it reduces CM’s AAE by approximately 76\% ARE by about 86\%. Similarly, on the MAWI dataset, it lowers CM’s AAE by roughly 80\% and ARE by around 84\%. The result is consistent with our theoretical analysis (see Theorem~\ref{thm:em}).
However, as noted in \S\ref{subsec:flore}, this comes at the cost of expensive iterative search, especially when the dimensionality of $\boldsymbol{\Phi}$ reaches into the tens of thousands. In fact, for large-scale data streams like CAIDA, which involve over 150K dimensions, the CPU often requires minutes of EM refinement time. Although we can accelerate this process using GPU parallelization, it may still incur tens of GB of GPU memory overhead, but fortunately, this computation is needed only once during training.

\begin{figure}
    \setlength{\belowcaptionskip}{-0.2cm}
    \subfigure[AAE on the CAIDA dataset]{
        \centering
        \includegraphics[width=0.49\linewidth]{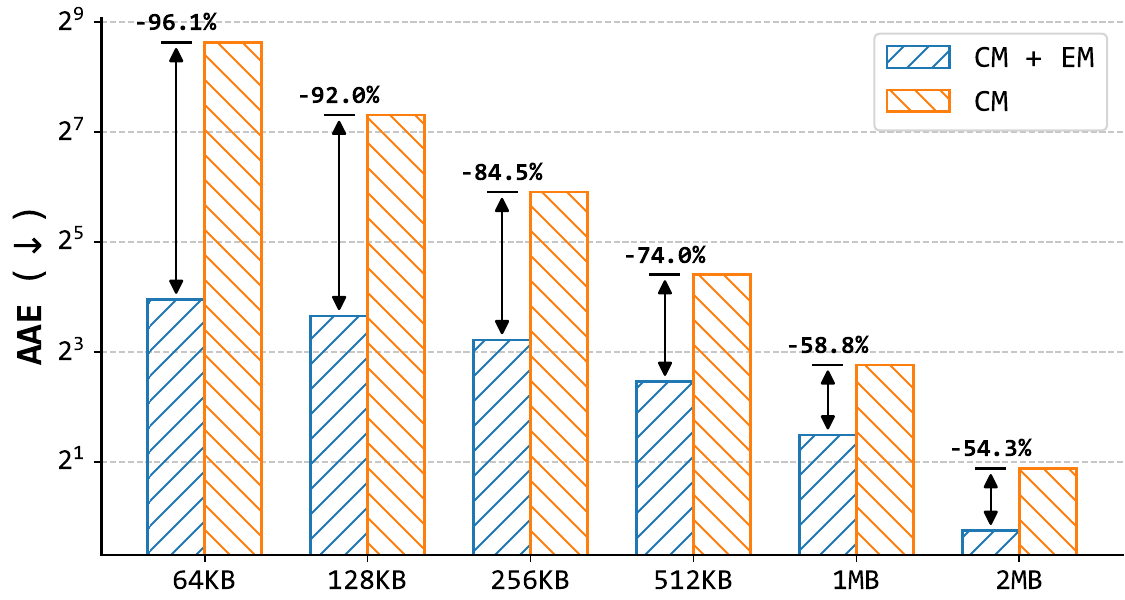}
    }
    \hfill
    \subfigure[ARE on the CAIDA dataset]{
        \centering
        \includegraphics[width=0.49\linewidth]{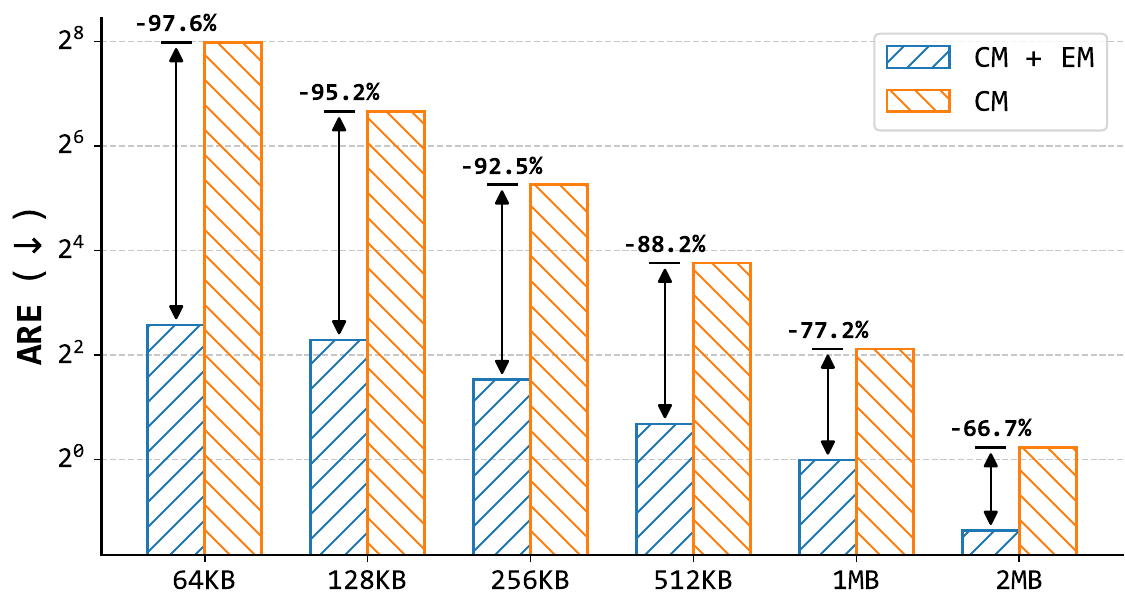}
    }\\
    \subfigure[AAE on the MAWI dataset]{
        \centering
        \includegraphics[width=0.49\linewidth]{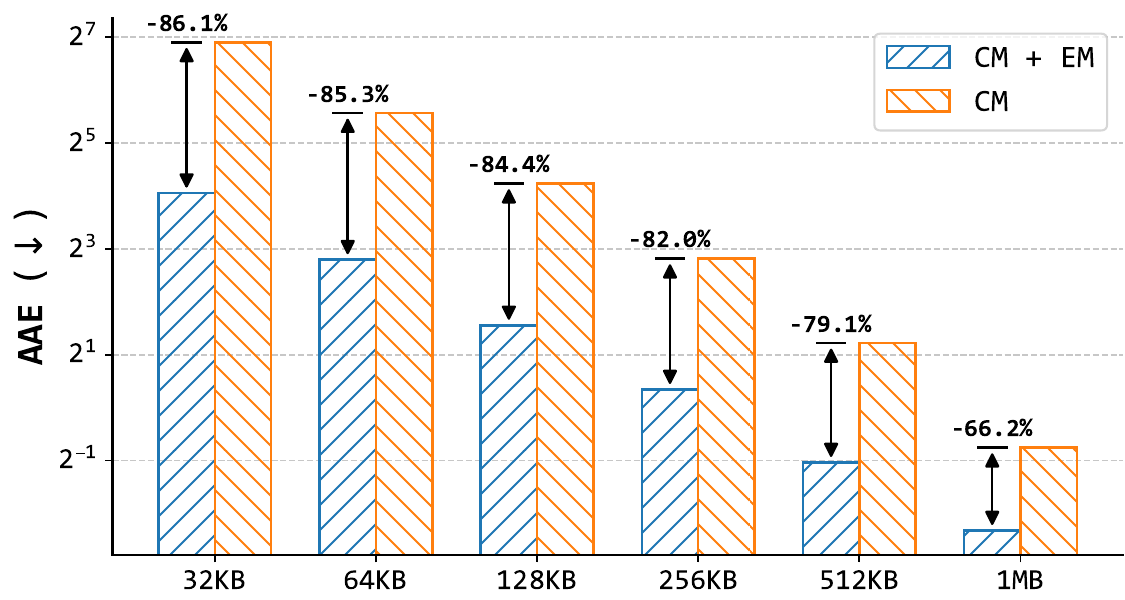}
    }
    \hfill
    \subfigure[ARE on the MAWI dataset]{
        \centering
        \includegraphics[width=0.49\linewidth]{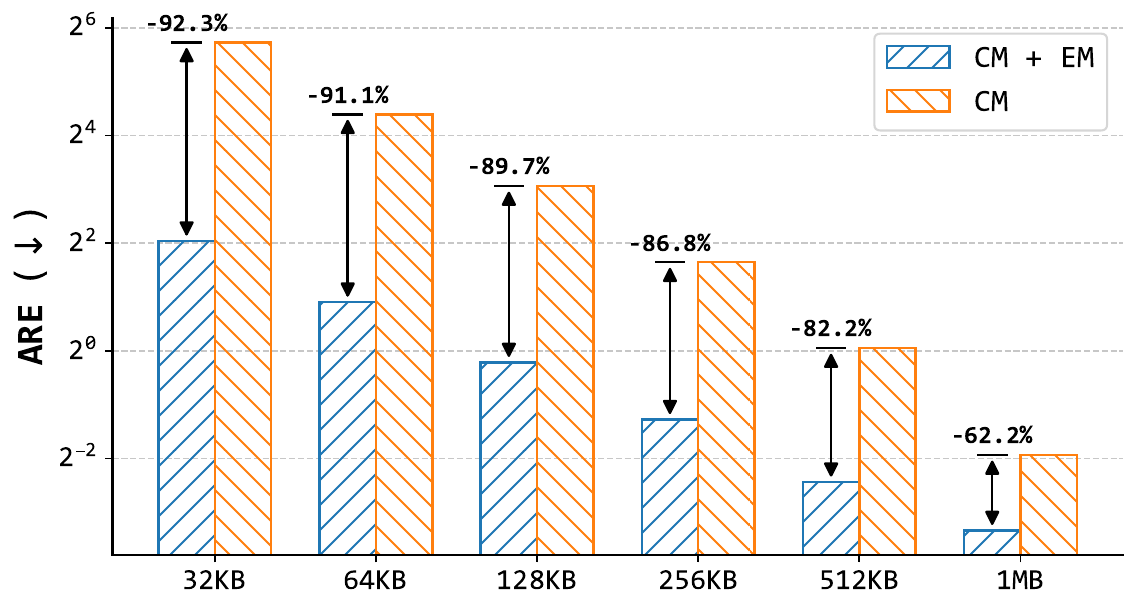}
    }
    \caption{Ablation comparing Count-Min Sketch and Count-Min Sketch + EM refinement (with ten steps).}
    \label{fig:ab_em}
\end{figure}

\subsection{Additional Results on Changing Condition}
\label{subsec:robust}

Finally, we evaluate the robustness of \texttt{FLORE} under temporal, natural, and spatial variations in data streams. In these experiments, we randomly sample 10\% to 20\% of the key set to simulate changes in stream behavior in each sampling epoch, following the prior work~\cite{yuan2025learning}. All experiments are conducted on the CAIDA dataset. 

\begin{table}[htbp]
    \centering
    \resizebox{\textwidth}{!}{
        \begin{tabular}{@{}ccc@{}} 
            
            \begin{tabular}[t]{llcccc}
                \toprule
                \multirow{2}{*}{Memory} & & \multicolumn{4}{c}{Fluctuation Factor} \\
                & & 0.2 & 0.5 & 1.0 & 2.0 \\
                \midrule
                \multirow{2}{*}{128KB}  & average   & 0.0\% & -1.0\%  & -1.8\%  & 2.7\%  \\
                                         & 90th Pct. & 0.4\%  & -0.2\%  & -2.1\%  & 5.9\% \\ \addlinespace
                \multirow{2}{*}{256KB} & average   & -0.0\%  & -1.1\% & -1.8\% & 1.9\% \\
                                         & 90th Pct. & 0.2\%  & -0.3\%  & -1.4\% & 2.8\%  \\ \addlinespace
                \multirow{2}{*}{512KB} & average   & 0.2\%  & 0.0\% & -1.0\% & 1.0\% \\
                                         & 90th Pct. & 0.3\%  & -0.0\%  & 1.1\% & 1.3\%  \\ \addlinespace
                \multirow{2}{*}{1MB}  & average   & 0.0\%  & -1.9\%  & 0.2\%  & -0.3\% \\
                                         & 90th Pct. & 1.1\%  & -0.8\%  & 0.2\%  & -0.5\%  \\
                \bottomrule
            \end{tabular} & 
            
            \begin{tabular}[t]{llccc}
                \toprule
                \multirow{2}{*}{Memory} & & \multicolumn{3}{c}{Training stream time windows} \\
                & & 0\%-25\% & 25\%-50\% & 50\%-75\% \\
                \midrule
                \multirow{2}{*}{128KB}  & average   & 5.4\% & 3.6\% & 4.4\% \\
                                         & 90th Pct. & 9.6\% & 7.8\% & 8.7\% \\ \addlinespace
                \multirow{2}{*}{256KB} & average   & 2.6\%  & 3.2\%  & 2.1\%  \\
                                         & 90th Pct. & 4.0\%  & 5.7\%  & 4.6\%  \\ \addlinespace
                \multirow{2}{*}{512KB} & average   & 0.2\%  & 1.8\%  & 0.5\%  \\
                                         & 90th Pct. & 0.9\%  & 0.2\%  & 1.0\%  \\ \addlinespace
                \multirow{2}{*}{1MB}  & average   & 0.5\%  & 1.6\%  & -1.5\%  \\
                                         & 90th Pct. & 0.0\%  & 1.0\%  & 1.6\%  \\
                \bottomrule
            \end{tabular} & 
            
            \begin{tabular}[t]{llcccc}
                \toprule
                \multirow{2}{*}{Memory} & & \multicolumn{4}{c}{Proportion Factor} \\
                & & 0.9 & 0.5 & 0.25 & 0.1 \\
                \midrule
                \multirow{2}{*}{128KB}  & average   & 0.3\% & 0.7\%  & 1.8\%  & 2.3\% \\
                                         & 90th Pct. & 1.7\%  & 1.0\%  & 3.1\%  & 6.5\% \\ \addlinespace
                \multirow{2}{*}{256KB} & average   & 0.2\%  & 0.6\% & 1.5\% & 2.6\% \\
                                         & 90th Pct. & 0.4\%  & 0.7\%  & -0.4\% & 2.4\%  \\ \addlinespace
                \multirow{2}{*}{512KB} & average   & 0.0\%  & -0.6\% & 1.7\% & 1.3\% \\
                                         & 90th Pct. & 0.2\%  & 0.4\%  & 1.4\% & 0.6\%  \\ \addlinespace
                \multirow{2}{*}{1MB}  & average   & 0.5\%  & 0.7\%  & -1.0\% & -1.6\% \\
                                         & 90th Pct. & 0.0\%  & -0.6\%  & -0.4\% & 1.4\% \\
                \bottomrule
            \end{tabular} \\
            
            \begin{minipage}[t]{0.5\textwidth}
                \caption{ARE Performance decline with increased CAIDA network traffic fluctuation. Negative values indicate no degradation.}
                \label{tab:tc}
            \end{minipage} & 
            \begin{minipage}[t]{0.5\textwidth}
                \caption{ARE Performance decline with natural drift in CAIDA network traffic. Negative values indicate no degradation.}
                \label{tab:tn}
            \end{minipage} & 
            \begin{minipage}[t]{0.5\textwidth}
                \caption{ARE Performance decline with spatial shift in CAIDA network traffic. Negative values indicate no degradation.}
                \label{tab:ts}
            \end{minipage}
        \end{tabular}
    }
\end{table}

\textbf{Temporal changes.} To increase temporal variability, we proceed as follows. For each selected key, we generate a temporal change in one epoch by sampling from a Gaussian distribution $\mathcal{N}(\mu, \sigma^2)$ and scaling it by a factor, where $\mu = 0$ and $\sigma$ is the standard deviation across different sampling epochs of the key. The scaling factor $\in \{0.2, 0.5, 1.0, 2.0\}$ controls the amplitude of the fluctuation. We evaluate the impact of sudden changes by measuring the ARE degradation of \texttt{FLORE} relative to its performance under stable conditions. The results are summarized in Table~\ref{tab:tc}. When the scaling factor is relatively small, \texttt{FLORE} exhibits no noticeable performance degradation. Even when the scaling factor $=2$, the performance degradation is still within 10\%.

\textbf{Natural changes.} To test the impact of natural shifts, we conduct training separately with 0\%-25\%, 25\%-50\%, and 50\%-75\% of the sampled counters, followed by testing on the last 25\%. Note that evaluation is performed only on the selected keys that appear in all four parts. This protocol enables us to quantify performance degradation relative to the setting in which 75\% of the data is used for training. The results w.r.t. ARE are summarized in Table~\ref{tab:tn}. We observe that \texttt{FLORE} maintains stable performance on the selected keys even long after the completion of training (exceeding two times the total duration of the training data).

\textbf{Spatial changes.} To simulate changes in the spatial distribution of frequencies, we redistribute frequencies across different keys in one epoch. Specifically, we reassign the top 10\% of hot keys to the selected keys such that they account for 90\%, 50\%, 25\%, and 10\% of their original total volume. We refer to the resulting scaling parameter as the proportion factor $\in \{0.9, 0.5, 0.25, 0.1\}$, which controls the magnitude of the spatial change. As presented in Table~\ref{tab:ts}, the performance variations are not significant across all spatial distributions. Compared to the results in Table~\ref{tab:tc}, although
there is indeed a slightly greater decline in \texttt{FLORE}’s performance, the impact is still limited, and \texttt{FLORE} consistently maintains good accuracy across different proportion factors.

Overall, \texttt{FLORE} is robust to changing conditions in the real world. We attribute this partly to the GM’s ability to capture the underlying distribution inherent to the sketching problem, while compressive sensing enables the model to effectively detect anomalous changes~\cite{mardani2015estimating}.

\newpage


\section{Applications \& Limitations}
\label{app:future_works}

At the end of this work, we discuss the practical applicability of \texttt{FLORE} across a range of sketch-based tasks and outline its current limitations along with promising directions for future work.

\textbf{Applicable scenarios.} First, we have demonstrated the applicability of \texttt{FLORE} to four tasks, including \textit{frequency estimation}, \textit{heavy hitter detection}, \textit{distribution estimation}, and \textit{entropy estimation}. Since our analysis covers almost all linear aggregation scenarios, which are the most fundamental statistics, the proposed approach can also perform well on all the following use cases. (i) \textit{Heavy changer detection}~\cite{tang2019mv}. The task aims to identify the elements whose frequency changes between two adjacent epochs are larger than the predefined threshold, with applications to imply the existence of a DDoS
attack~\cite{douligeris2004ddos}. (ii) \textit{Blackhole detection}~\cite{mai2011debugging}. In this task, the blackhole behavior is modeled by a normal entity A and a faulty entity B. Specifically, when the data stream is transmitted from entity A to entity B, entity B drops the items associated with certain specific keys due to an interface failure. The goal of the task is to identify the missing (i.e., victim) keys. (iii) \textit{Incorrect routing detection}~\cite{guo2015pingmesh}. Consider three entities A, B, and C, where entity A can send data items to both entity B and entity C. Incorrect routing refers to the scenario where, due to corruption in entity A's routing table, a subset of keys that should have been sent to entity B are mistakenly forwarded to entity C. The goal of the task is to identify the keys affected by this incorrect routing. The above three binary classification tasks can be naturally derived from our heavy-hitter detection task.
(iv) \textit{Norm estimation}~\cite{alon1996space}. The problem of $\ell_p$ norm estimation is to provide a multiplicative approximation to $|f|_p := (\sum_i |f_i|^p)^{1/p}$, which is similar to distribution \& entropy estimation. 
(v) \textit{Graph sketching}~\cite{mcgregor2014graph}. The task falls into the scope of combinatorial problems on graphs. We consider dynamic graph streams where edges can be both inserted and deleted. The input is a sequence 
\(
   S = \langle a_1, a_2, \ldots \rangle  
\), 
where $a_i = (e_i, \Delta_i)$ with  $e_i$  encoding an undirected edge as before, and  $\Delta_i \in \left\{-1, 1\right\}$. The multiplicity of an edge $e$ is defined as  $f_e = \sum_{i: e_i = e} \Delta_i$. For simplicity, we restrict our attention to the case where  $f_e$ is in $\left\{0, 1\right\}$  for all edges. Then, let $f \in \left\{0,1\right\}^{\binom{N}{2}}$ be the vector whose entries equal the current edge multiplicities  $f_e$, and let  $\mathcal{A}(f) \in \mathbb{R}^m$  denote the sketch of this vector. Upon arrival of an update $(e, \Delta)$, we can simply update $\mathcal{A}(f)$ as: $\mathcal{A}(f) \leftarrow \mathcal{A}(f) + \Delta \cdot \mathcal{A}(\mathbf{i}^e)$ where  $\mathbf{i}^e$  is the vector that is ``1'' only in the coordinate corresponding to edge  $e$ , and zero elsewhere. We can see that the formulation of this problem is entirely analogous to our linear sketching problem. Thus, it suffices to store the current sketch and leverage \texttt{FLORE} to compute the solution. 

\textbf{Limitations and future works.} We believe that our investigation of generative optimization for data sketching has but scratched the surface. In what follows, we outline current limitations of our approach, as well as intriguing directions for future research. 
(i) In the \texttt{FLORE}'s data structures, a Bloom Filter is employed to determine whether an element belongs to a known set. As a probabilistic data structure, however, the Bloom Filter may incur false positives, and achieving an extremely low false-positive rate requires memory usage that grows nearly linearly with $N$. To mitigate this overhead, future work may explore learning-enhanced Bloom Filter~\cite{kraska2018case}, biasing key mappings toward higher-indexed regions of the bit array can substantially reduce storage requirements while preserving query accuracy. 
(ii) Note that our method reconstructs the state information of all elements simultaneously, which is not well-suited for point queries. For instance, after the model outputs  $f$, we must locate the position corresponding to a specific key and retrieve its value, an operation that incurs  $\mathcal{O}(N)$  complexity. A good direction for future research is the development of accelerated mapping indices. By decoupling the generative reconstruction from the lookup mechanism, we aim to achieve sub-linear or constant-time retrieval for specific keys, thereby enhancing \texttt{FLORE}'s utility for real-time, per-element monitoring.

\end{document}